\newtcolorbox[auto counter,number within=subsection]{prompts}[1][]{
	enhanced,
	fonttitle=\scshape\bfseries, 
	title={Prompts \thetcbcounter}, 
	colframe=blue!15!black, 
	colback=blue!5!white, 
	coltitle=blue!5!white, 
	boxrule=0.5mm, 
	#1
}
\definecolor{mygray}{gray}{.9}
\newtheorem{definition}{\textbf{Definition}}
\newtheorem{theorem}{\textbf{Theorem}}
\begin{document}

\date{}

\title{\Large \bf Forgetting Similar Samples: Can Machine Unlearning Do it Better?}

\author{
{\rm Heng Xu}\\
City University of Macau\\
hengxu@cityu.edu.mo
\and
{\rm Tianqing Zhu*}\\
City University of Macau\\
tqzhu@cityu.edu.mo
\and
{\rm Dayong Ye}\\
City University of Macau\\
dyye@cityu.edu.mo
\and
{\rm Lefeng Zhang}\\
City University of Macau\\
lfzhang@cityu.edu.mo
\and
{\rm Le Wang}\\
Guangzhou University\\
wangle@gzhu.edu.cn
\and
{\rm Wanlei Zhou}\\
City University of Macau\\
wlzhou@cityu.edu.mo
} 

\maketitle

\begin{abstract}
    Machine unlearning, a process enabling pre-trained models to remove the influence of specific training samples, has attracted significant attention in recent years. Although extensive research has focused on developing efficient machine unlearning strategies, we argue that these methods mainly aim at removing samples rather than removing samples' \textit{influence} on the model, thus overlooking the fundamental definition of machine unlearning. In this paper, we first conduct a comprehensive study to evaluate the effectiveness of existing unlearning schemes when the training dataset includes many samples similar to those targeted for unlearning. Specifically, we evaluate: Do existing unlearning methods truly adhere to the original definition of machine unlearning and effectively eliminate all influence of target samples when similar samples are present in the training dataset?  Our extensive experiments, conducted on four carefully constructed datasets with thorough analysis, reveal a notable gap between the expected and actual performance of most existing unlearning methods for image and language models, even for the retraining-from-scratch baseline. Additionally, we also explore potential solutions to enhance current unlearning approaches. 
\end{abstract}

\section{Introduction}
Machine unlearning refers to removing the influence of specific training samples on a machine learning model~\cite{DBLP:journals/csur/XuZZZY24}. This technological advancement has recently drawn urgent attention due to several factors, including the strict enforcement of \emph{the right to be forgotten} in regulations and laws~\cite{webpage:GDPR,webpage:CCPA,DBLP:conf/ndss/dayong2025}, escalating concerns about data privacy~\cite{DBLP:conf/sp/ShokriSSS17,DBLP:conf/ccs/Chen000HZ21,DBLP:conf/ccs/Chen000H022,DBLP:conf/ndss/WarneckePWR23}, and the pressing need to erase harmful, malicious, and even illegal knowledge from large language models~\cite{DBLP:conf/nips/LuWHJQWA022,DBLP:conf/acl/AdolphsGX0SW23,DBLP:conf/acl/WangCYZWY23,DBLP:conf/emnlp/LiuZJC24,DBLP:conf/icml/ZhaoDM0R24}. 

\textbf{Research Gap:} Since being proposed, machine unlearning has been consistently defined as \emph{the process of eliminating the complete influence} of a target sample~\cite{DBLP:conf/sp/CaoY15,DBLP:conf/sp/BourtouleCCJTZL21,DBLP:conf/ccs/Chen000HZ21,DBLP:conf/uss/ThudiJSP22,DBLP:conf/ndss/WarneckePWR23,DBLP:conf/ndss/Hu0CZ00ZX24,DBLP:conf/sp/HuWDX24,DBLP:journals/csur/XuZZZY24}. 
Meanwhile, in realistic scenarios, datasets often contain samples that, despite differing in expression, remain closely related to target samples, and cause similar influence to the model~\footnote{We employ the public dataset PKU-SafeRLHF to illustrate the existence of such similar samples, with the corresponding results shown in Figure~\ref{fig:existing_datamapplot}.}. As shown in Figure~\ref{fig:illustration}, consider a binary classification model that separates the training dataset into two regions. In the magnified view, a target sample (triangle) is highlighted, surrounded by similar samples (circles) that are similar to it. These similar samples will cause a similar influence on the trained model as the target sample. Accordingly, based on the original definition, effective machine unlearning methods should go beyond removing the target sample itself to also mitigating influence from other similar samples. 

\begin{figure}[!t]
    \centering
    \includegraphics[width=0.9\linewidth]{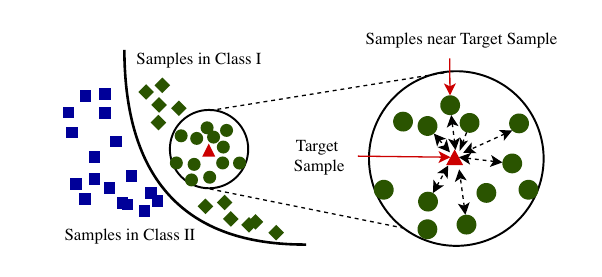}
    \caption{Illustration of samples near target sample.}
    \label{fig:illustration}
\end{figure}

However, most existing unlearning methods can be categorized into two lines of research, with limited attention paid to the aforementioned case. The first line assumes that the target samples are independent and do not share any influence with other training samples~\cite{DBLP:conf/sp/CaoY15,DBLP:conf/sp/ShokriSSS17,DBLP:conf/ndss/WarneckePWR23}. These methods typically overlook the presence of similar samples and focus only on the target sample during the unlearning process. The second line of work has begun to explore the influence of duplicate samples and adversarial test embeddings, but it remains preliminary~\cite{DBLP:conf/iclr/0005MCMH24,DBLP:conf/cvpr/Yang0WHX024,zhang2024unforgettable}. For example, Ye et al.~\cite{DBLP:conf/uss/dayongye2025} measured the impact of \emph{duplicated samples} on unlearning using coarse-grained accuracy metrics and mainly focused on image models, but did not propose any practical or effective solutions, and left more complex models like LLMs unconsidered. Similarly, Minh et al.~\cite{DBLP:conf/iclr/0005MCMH24} evaluated how adversarially similar \emph{embeddings} impact unlearning performance during prediction, but did not systematically investigate the influence of similar samples in the \emph{training dataset}. These early studies demonstrate the importance of considering similar samples in the unlearning process. Yet, existing works offer neither a systematic analysis nor effective solutions, leaving a significant gap that our research aims to fill by thoroughly investigating these effects and proposing corresponding strategies.

\textbf{Why is it important to consider similar samples?} These questions are not only important but also fundamental, as they directly challenge the current machine unlearning methods in real-world scenarios. If unlearning methods fail to account for similar samples, the consequences can be serious:

\begin{itemize}
    \item \emph{Incomplete unlearning}: Retaining the influence of similar samples leads to incomplete unlearning, where the model continues to preserve residual influence of the target sample through its similar samples. This not only directly undermines strict privacy mandates such as GDPR and CCPA~\cite{webpage:GDPR,webpage:CCPA}, which require the \emph{complete} removal of specific data influences, but also is inconsistent with the definition of machine unlearning~\cite{DBLP:conf/sp/CaoY15,DBLP:conf/sp/BourtouleCCJTZL21,DBLP:conf/ccs/Chen000HZ21,DBLP:conf/uss/ThudiJSP22,DBLP:conf/ndss/WarneckePWR23,DBLP:conf/sp/HuWDX24}.
    \item \emph{Model integrity:} Ignoring similar samples threatens the integrity of the unlearned model. Their residual influence of similar samples can distort decision boundaries, degrade model utility, and even introduce hidden biases—ultimately making the unlearned model less trustworthy and misaligned with its intended behavior.
    \item \emph{Potential exploitation:} Overlooked similar samples create exploitable vulnerabilities. Adversaries may infer information about the target sample or introduce maliciously crafted similar samples that still persist in the unlearned model, thereby bypassing the unlearning process and potentially enabling harmful behaviors.
\end{itemize}

\textbf{Preliminary Experiments: } We first perform a comprehensive evaluation from two key perspectives: (1). If the target sample is unlearned from model, can its similar samples also be successfully unlearned? and (2). When similar samples cannot be unlearned, could the retention of these similar samples affect the unlearning results of the target sample? To facilitate this study, we first construct four benchmark datasets: three image-based datasets and one Q\&A language dataset~\footnote{Given the challenges associated with constructing text datasets—such as sample paraphrasing and manual selection—we only use one text dataset.}. Each dataset contains a different number of target samples along with their corresponding similar samples. For the image datasets, we generate similar samples by applying appropriate perturbations to the target samples. For the language dataset, we select and paraphrase target samples multiple times, standardizing those paraphrased samples to generate similar ones.

To evaluate the unlearning results of both target samples and similar samples, we assess various current machine unlearning methods and introduce two more fine-grained verification methods, namely data reconstruction-based and ROUGE metrics-based~\cite{DBLP:journals/corr/abs-2410-10120,DBLP:conf/nips/BuzagloHYVONI23,DBLP:journals/corr/abs-2407-06460,DBLP:journals/corr/abs-2411-07691}, instead of MIAs-based and backdoor-based sample-level verification schemes~\cite{DBLP:journals/csur/XuZZZY24}. The latter two verification methods mainly assess the overall presence or absence of a complete sample in the model~\cite{DBLP:conf/uss/dayongye2025}. In contrast, our introduced methods enable a more fine-grained evaluation by capturing the influence of partial components of a sample. For the image model, we compare the recovered samples before and after the unlearning process by assessing the similarity of the pixels to the original target samples. For the language model, we use the ROUGE score to evaluate the knowledge of the model against the ground truth before and after the unlearning process.

\textbf{Our Findings:} Based on the two evaluation perspectives, along with the constructed datasets and verification methods, we highlight the following key insights, which reveal contradictions with prior definitions and assumptions:

\begin{itemize}
    \item When target samples and their similar samples appear in the training set, most unlearning methods fail to remove all target sample's influence from its similar samples.

    \item Meanwhile, the retained influence of similar samples can hinder the unlearning of the target sample, allowing its effect to persist even after the unlearning process.

    \item Furthermore, we evaluate if existing unlearning methods affect similar samples in the \emph{test dataset}. The results show that the model can still answer questions derived from these samples with relatively high ROUGE scores. We attribute this to the limitation of current unlearning methods to effectively remove similar samples from the training data (as noted in our first insight), leaving test samples that resemble them also unaffected. 
	
\end{itemize}

\textbf{Our Contributions:} Our findings indicate that most existing unlearning methods focus only on removing target samples rather than eliminating samples' full influence, thereby frequently failing to comply with the original definition of machine unlearning. This limitation creates a substantial gap between expected and actual performance, even for approaches that retrain models from scratch. Identifying and characterizing this limitation is a key objective of our study. Moreover, motivated by this observation, we explore the integration of robustness training techniques. Our experiments show that incorporating these strategies consistently enhances unlearning performance compared to methods without such enhancements. Overall, our contributions are as follows:

\begin{itemize}
    \item We explore the machine unlearning in the context of training datasets that include target samples along with their corresponding similar samples. To formalize this, we systematically define the concept of those samples and construct four benchmark datasets.
    
    \item To reveal the inconsistencies between existing machine unlearning methods and the original definition of machine unlearning, we conduct a comprehensive evaluation of several widely adopted unlearning schemes applied to both image and language models. Our results reveal key limitations and shortcomings of those schemes.
    
    \item We investigate strategies to improve existing unlearning methods by incorporating robustness training techniques. Experiments suggest that these improved schemes tend to perform better than those without such enhancements.
\end{itemize}

\section{Preliminary and Problem Definition}

\begin{table}
	\caption{Notations}
	\renewcommand{\arraystretch}{1.1}
	\label{tab:notations}
	\centering
	\begin{tabular}{c|c}
		\hline
		Notations &  Explanation \\
		\hline
		$\mathcal{D}$                       &The training dataset\\
		$M,M_{u}$                           &The original trained and unlearned model\\
		$\mathcal{D}_{u},\mathcal{D}_{r}$   &The unlearning and remaining dataset \\
		$x_{i}$                             &The target sample\\
		$x_{j}$                             &One similar sample\\
		$\mathcal{S}(x_i)$                  &All similar samples of $x_{i}$\\
            $p_{\theta}$ &The predicted probability \\
            $\tilde{h}^{(l)}$           &The perturbed hidden state \\
            $\mathrm{Inf}(x_i;M)$               &The influence of $x_i$ on $M$\\
		$\mathrm{Inf}(x_j; M | x_i)$        &The influence of $x_j$ on $M$ when given $x_i$\\
		\hline
	\end{tabular}
\end{table}

\subsection{Preliminary}
There are two key entities in our setting: \textit{data provider} and \textit{model trainer}. The data provider submits their data to the model trainer, who uses those data for training model. We denote the dataset of the data provider as $\mathcal{D}$. Let $\mathcal{A}$ be a (randomized) learning algorithm that trains on $\mathcal{D}$ and outputs a model $M$. After the training process, data providers may wish to unlearn the influence of some specific samples from the trained model. Let $\mathcal{D}_{u} \subset \mathcal{D}$ denote samples that the data provider wishes to unlearn. The complement of this subset, $\mathcal{D}_{r}=\mathcal{D}_{u}^{\complement}$, represents the data that the provider wishes to retain. Other important symbols that appear in this paper and their corresponding descriptions are listed in Table~\ref{tab:notations}.

\begin{definition}[Machine Unlearning~\cite{DBLP:conf/sp/CaoY15}]
	\label{Definition:Machineunlearning}
	Consider a set of samples that a data provider wishes to unlearn those influences from an already-trained model, denoted as $\mathcal{D}_{u}$. The unlearning process, $\mathcal{U}(M, \mathcal{D}, \mathcal{D}_{u})$, is a function that takes an already-trained model $M = \mathcal{A}(\mathcal{D})$, the training dataset $\mathcal{D}$, and the unlearning dataset $\mathcal{D}_{u}$, and outputs a new model $M_u$. This process ensures that the resulting model, $M_u$, behaves as if it had never been influenced by $\mathcal{D}_{u}$.
\end{definition}

This definition was originally proposed 
in~\cite{DBLP:conf/sp/CaoY15} and has been used consistently in subsequent research~\cite{DBLP:conf/ccs/Chen000HZ21,DBLP:conf/uss/ThudiJSP22,DBLP:conf/ndss/WarneckePWR23,DBLP:conf/sp/BourtouleCCJTZL21,DBLP:conf/ndss/Hu0CZ00ZX24,DBLP:conf/sp/HuWDX24}.

\subsection{Problem definition}

It should be noted that the definition of machine unlearning emphasizes \textbf{removing the influence of the samples}, rather than \textbf{removing the samples themselves}. In the following, we distinguish the difference between them. We first give the definition of \textit{influence of one sample $x_i$ on $M$} as follows:

\begin{definition}[Influence of one Sample on the Model]
	\label{Definition:influence}
	We define the influence of one sample $x_i$ on the model $M$ as $\mathrm{Inf}(x_i;M)$, which quantifies how much sample $x_i$ affects the learned parameters or outputs of the model $M$.
\end{definition}

Assume there are also other samples $x_j \in \mathcal{S}(x_i)$ in $\mathcal{D}_r$, where $\mathcal{S}(x_i)$ represents the samples similar to $x_i$~\footnote{Exactly defining similarity has always been a challenging problem. In our setting, we consider similarity to be represented by the result of sample clustering, where similar samples are expected to be grouped closely.}. We refer to these samples $\mathcal{S}(x_i)$ as similar samples~\footnote{In Section~\ref{sec:enhanced_unlearning}, we further quantify the varying levels of similarity among image similar samples.}:

\begin{definition}[Similar Samples]
    Consider samples $x_j \in \mathcal{S}(x_i)$, which are similar to $x_i$. We define $\mathcal{S}(x_i)$ as similar samples of $x_i$, and their influence can be denoted as $\mathrm{Inf}(x_j; M)$,$x_j \in \mathcal{S}(x_i)$.
\end{definition}

\begin{theorem}
	\label{theorem:similar_samples}
	Let sample $x_i$ and samples $x_j \in \mathcal{S}(x_i)$ be similar. Then the following holds:
	\begin{equation}
	\mathrm{Inf}(x_j; M | x_i) < \mathrm{Inf}(x_j; M), x_{j} \in \mathcal{S}(x_i) 
	\end{equation}
	where $\mathrm{Inf}(x_j; M | x_i)$ represents the influence of sample $x_j$ on model $M$ when given $x_i$, which quantifies how much $x_j$ affects the learned parameters or outputs of model $M$, conditioned on $x_i$ already being included in training process.
\end{theorem}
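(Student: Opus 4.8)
The plan is to operationalize $\mathrm{Inf}(\cdot\,;M)$ through a first-order (influence-function) characterization of how a training sample perturbs the learned parameters, and then reduce the claimed inequality to a statement about the shrinkage of the per-sample gradient once a similar sample has already been fit. Concretely, I would fix a reference training set $R$ containing neither $x_i$ nor $x_j$, let $\theta(S)$ denote the empirical risk minimizer on a set $S$, and measure influence by the induced parameter displacement, so that $\mathrm{Inf}(x_j;M)=\|\theta(R\cup\{x_j\})-\theta(R)\|$ while the conditional quantity uses the augmented baseline, $\mathrm{Inf}(x_j;M\,|\,x_i)=\|\theta(R\cup\{x_i,x_j\})-\theta(R\cup\{x_i\})\|$. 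Under standard smoothness and local strong convexity of the loss, the classical influence approximation yields, to first order, a displacement of the form $-\tfrac{1}{|R|}\,H^{-1}\,\nabla_\theta L(x_j;\theta_{\mathrm{base}})$, where the Hessian $H$ is common to both settings and the only difference is the base point at which the gradient of $x_j$ is evaluated.

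The key steps, in order, are: (i) write both influences in this gradient form and factor out the shared $H^{-1}$, so the comparison collapses to comparing $\nabla_\theta L(x_j;\theta(R))$ with $\nabla_\theta L(x_j;\theta(R\cup\{x_i\}))$ in the $H^{-1}$-weighted norm; (ii) expand the second gradient around $\theta(R)$ using the displacement $\Delta_i=\theta(R\cup\{x_i\})-\theta(R)$ caused by adding $x_i$, obtaining $\nabla_\theta L(x_j;\theta(R\cup\{x_i\}))\approx g_j + H_j\Delta_i$ with $g_j=\nabla_\theta L(x_j;\theta(R))$ and $H_j$ the Hessian of the $x_j$-loss; (iii) observe that $\Delta_i$ points (to first order) along $-H^{-1}g_i$, i.e.\ toward reducing the loss on $x_i$; and (iv) invoke similarity to guarantee that this descent direction for $x_i$ is also a descent direction for $x_j$—equivalently, that $g_i$ and $g_j$ are positively aligned in the relevant metric—so that the correction term $H_j\Delta_i$ forms a strictly negative inner product with $g_j$ and strictly shrinks its $H^{-1}$-weighted norm. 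Combining (i)--(iv) gives $\mathrm{Inf}(x_j;M\,|\,x_i)<\mathrm{Inf}(x_j;M)$.

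The formalization of \emph{similarity} is where the real work lies, and step (iv) is the one I expect to be the main obstacle. It requires a quantitative hypothesis—positive gradient alignment $\langle g_i,g_j\rangle>0$ in the $H^{-1}$-metric, together with control on the cross-Hessian $H_j$—that must be \emph{derived} from, rather than merely asserted alongside, the clustering-based notion of similarity the paper adopts. I would bridge this gap by assuming the per-sample losses are smooth in the input and that $x_i,x_j$ lie within a small radius in representation space (consistent with belonging to the same cluster), so that $g_i=g_j+O(\|x_i-x_j\|)$ and the relevant Hessians agree up to the same order; continuity of the gradient in the inputs then forces the required positive alignment for sufficiently similar samples, and the strictness of the inequality holds as long as $x_i$ is not already perfectly fit, i.e.\ $g_i\neq 0$. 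A secondary, more technical obstacle is controlling the error of the first-order expansion: I would either restrict to a convex or linearized regime where the influence approximation is exact, or carry explicit second-order remainder bounds and show they are dominated by the strictly negative first-order term once the similarity radius is small enough.
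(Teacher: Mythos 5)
Your proposal is workable in outline but follows a genuinely different route from the paper. The paper's proof is information-theoretic and only a few lines long: it \emph{defines} $\mathrm{Inf}(x_i;M) := I(x_i;M)$, the mutual information between the sample and the model, applies the chain-rule decomposition $I(x_j;M) = I(x_j;M \mid x_i) + I(x_j;x_i;M)$, and then asserts that for similar samples the interaction information satisfies $I(x_j;x_i;M) > 0$, from which the strict inequality follows immediately. You instead operationalize influence as parameter displacement and run the comparison through first-order influence functions, reducing the claim to positive alignment of the per-sample gradients $g_i$, $g_j$ in the $H^{-1}$ metric. The two routes buy different things. The paper's argument is model-agnostic---no smoothness, convexity, or first-order approximation is needed---but it hides the entire content of the theorem in the unproven assertion that similarity forces $I(x_j;x_i;M) > 0$; interaction information can in general be negative (synergy), so that step is an assumption presented as an observation. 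Your version demands much stronger regularity hypotheses (local strong convexity, validity of the influence approximation, control of the remainder terms and of the mismatch between the Hessians on $R$ and on $R \cup \{x_i\}$, which you gloss as ``common''), but in exchange the crucial similarity step becomes a concrete, checkable condition---gradient alignment---and you sketch an actual derivation of it from input-space proximity via Lipschitz continuity of the gradient, with strictness coming from $g_i \neq 0$. In short, your proof is more technically burdened but localizes and attempts to discharge exactly the gap that the paper's proof leaves implicit at the analogous point; be aware that the remainder control you defer to the end is real work, since the first-order term you rely on is itself $O(1/|R|)$ and must dominate the second-order corrections uniformly in the similarity radius.
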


\begin{proof}
    We take inspiration from mutual information to complete our proof and further define $\mathrm{Inf}(x_i; M)$ as:
    \[
    \mathrm{Inf}(x_i; M) := I(x_i; M)
    \]
    where $I(x_i; M)$ denotes the mutual information between $x_i$ and $M$, capturing how much information $x_i$ contributes to the learned parameters or output behavior of $M$.
    
    After applying the chain rule for mutual information:
    \[
    I(x_j; M) = I(x_j; M \mid x_i) + I(x_j; x_i; M)
    \]
    where, $I(x_j; x_i; M)$ is the interaction mutual information, reflecting how $x_j$ and $x_i$ jointly inform $M$. Since $x_j$ is a similar sample of $x_i$, the shared information is non-negligible, that is $I(x_j; x_i; M) > 0 $. Thus:
    \[
    I(x_j; M \mid x_i) = I(x_j; M) - I(x_j; x_i; M) < I(x_j; M)
    \]
    which establishes the desired inequality and completes the proof of Theorem~\ref{theorem:similar_samples}.
\end{proof}

Theorem~\ref{theorem:similar_samples} shows that the influences of $x_i$ and $x_{j}$ on $M$ are dependent: knowing $x_i$ reduces the influence of $x_j$. This phenomenon, when reflected in unlearning, is always neglected. Current unlearning schemes always assume that:
\begin{itemize}
	\item There are no samples in the dataset that are similar to the sample $x_i$, i.e., $\mathcal{S}(x_i)$ is empty.
	\item If $\mathcal{S}(x_i)$ exist, the influence of samples on the model is independent, with no shared influence between samples $x_i$ and $x_j$, i.e., $\mathrm{Inf}(x_j; M | x_i) = \mathrm{Inf}(x_j; M), x_{j} \in \mathcal{S}(x_i)$.
\end{itemize}

Let's use $\mathrm{Inf}(x_i; M_u)$ to measure the extent to which $ x_i$ influences the unlearned model $M_u$. Ideally, $ \mathrm{Inf}(x_i; M_u)$ should equal to $0$ after unlearning. However, since $x_j \in \mathcal{S}(x_i)$ shares influence with $ x_i$, the residual influence $ \mathrm{Inf}(\mathcal{S}(x_i); M_u)$ is non-zero. This implies that the unlearned model $ M_u$ still indirectly depends on $ x_i$ through $ \mathcal{S}(x_i)$.  Building on the above analysis, we propose the \textit{Similarity-Entailed Dataset}, a previously unconsidered dataset definition for machine unlearning.

\begin{definition}[Similarity-Entailed Dataset]
	\label{Definition:cascade_dataset}
	A similarity-entailed dataset is defined as a dataset consisting of a set of samples $\{x_i\}$ and their corresponding similar samples $x_j \in \mathcal{S}(x_i)$, along with other samples.  
\end{definition}

A similarity-entailed dataset occurs when multiple similar samples are derived from or closely related to a target sample. This can include various ways, such as when random perturbing target samples, when the same question in a language dataset receives multiple valid answers.

\textbf{Our goal}: In this paper, we evaluate if most existing unlearning schemes adhere to the original definition of machine unlearning, that is, successfully removing all influences of a target sample $x_i$, given that its corresponding similar samples $x_j \in \mathcal{S}(x_i)$ are often not fully considered in these schemes. Although most existing datasets contain similar samples, directly using them often leads to unintended consequences~(see Appendix~\ref{sec:initial_clustering_results}). Therefore, we construct our similarity-entailed datasets~(see Appendix~\ref{sec:datacollectionforimage} and Appendix~\ref{sec:datacollectionforlanguage}). Using these constructed datasets and introduced verification schemes, we do a comprehensive experimental study to challenge the effectiveness of most current machine unlearning methods.

\section{Experiments Revealing Limitations}
In this section, we begin by introducing the new verification schemes~(see Section~\ref{sec:verificationscheme}) and various existing unlearning schemes~(see Section~\ref{sec:unlearning_schemes}). Then, we analyze our constructed image and language datasets to show the sample similarity phenomenon~(see Section~\ref{sec:data_analysis_for_image_models} and Section~\ref{sec:data_analysis_for_language_models}). We further evaluate the impact of unlearning on similar sample and target samples, in both image~(see Section~\ref{sec:unlearning_influence_toward_similar_samples_as_training_samples_fo_image} and Section~\ref{sec:influence_toward_base_samples_for_image}) and language~(see Section~\ref{sec:influence_toward_similar_samples_as_training_samples} and Section~\ref{sec:influence_toward_base_samples_for_language}) models. Finally, we analyze whether similar samples that are not included in the training dataset are affected when unlearning is performed based on target samples~(see Section~\ref{sec:influence_toward_similar_samples_as_test_samples}).

\subsection{Experimental Setup}

\subsubsection{Schemes for Verifying Unlearning Process}
\label{sec:verificationscheme}

\textbf{Verification Scheme for Image Models.}
We evaluate the unlearning process for image models based on data reconstruction~\cite{DBLP:journals/corr/abs-2410-10120,DBLP:conf/nips/BuzagloHYVONI23}. Data reconstruction can recover exact training samples from the model, which can be used to verify the unlearning results by comparing the samples recovered before and after the unlearning process. The workflow of verification process includes three steps~\cite{DBLP:journals/corr/abs-2410-10120}:

\begin{itemize}
	\item \textbf{Pre-Verification:} We start by training various models using each of our constructed datasets. After the training process, we select one sample, as the sample needs to be unlearned and perform data reconstruction to recover samples from the model.  From the recovered samples, we select the one most similar to the selected sample as the pre-unlearning result, denoted as $V_{b}$.
	\item \textbf{Executing the Unlearning Process:} We execute the unlearning process using the selected unlearning methods to remove the influence of the selected sample.
	\item \textbf{Post-Verification:} We perform data reconstruction again to obtain the post-unlearning result, denoted as $V_{p}$.
\end{itemize}

The above process returns two recovered samples, $V_{b}$ and $ V_{p}$. We evaluate the pixel-level similarity of $V_{b}$ and $V_{p}$ against the selected sample to determine if the model retains any influence of the selected sample. Specifically, if $V_{b}$ is highly similar to the selected sample while $V_{p}$ is not, it indicates that the model retains almost no influence of the selected sample. In contrast, if both $V_{b}$ and $V_{p}$ are highly similar to the selected sample, it implies that pixel-level details related to the selected sample can still be reconstructed, indicating the model still contains information about it. We calculate pixel similarity using the Structural Similarity Index Measure~(SSIM).

\textbf{Verification Scheme for Language Models.}
To assess if the model has successfully unlearned one selected sample, we evaluate the similarity between the model's actual answer and the ground truth answer from the selected Q\&A samples. The workflow of verification process is as follows:

\begin{figure}[!t]
	\centering
	\includegraphics[width=0.9\linewidth]{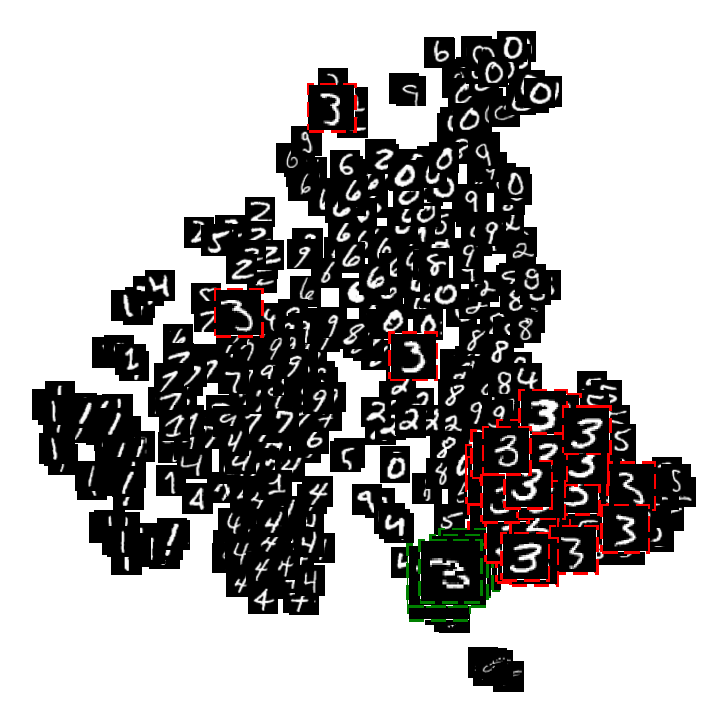}
	\caption{Sample distribution of Similarity-Entailed MNIST dataset. It can be seen that the selected target sample and its similar samples are clustered together, indicating that they will have a similar influence on the model.}
	\label{fig:tsne_visualization_mnist_all}
\end{figure}

\begin{itemize}
	\item \textbf{Model Training:} To ensure the reliability of our unlearning results, we first fine-tune the model using our language dataset, confirming that the model indeed incorporates the influence of each Q\&A in the dataset.
	\item \textbf{Pre-Verification:} We select one sample $x_i$ as the target sample and achieve the pre-unlearning result based on the model’s answer with the question from sample $x_i$.
	\item \textbf{Executing the Unlearning Process:} Next, we execute the unlearning process using the selected unlearning methods to remove the influence of sample $x_i$.
	\item \textbf{Post-Verification: } We get the post-unlearning result of the selected sample $x_i$ based on the model’s answer.
\end{itemize}

Specially, in steps \textit{pre-verification} and \textit{post-verification}, we use the ROUGE, denoted as $\texttt{KM}(M, x_i) =  \text{ROUGE}(M(q), a)$, where $x_i$ represents the selected sample. The pair $(q, a)$ denotes the question-answer pair from $x_i$. $M(q)$ is the model’s answer to the question $q$. If $\texttt{KM}(M, x_i)$ before unlearning is large, while it is small after unlearning, it suggests that the model retains minimal influence about the sample $x_i$. In contrast, if $\texttt{KM}(M, x_i)$ remains large after unlearning, it implies that the model still contains influence.

\subsubsection{Unlearning Schemes for Evaluation}
\label{sec:unlearning_schemes}
For image models, we employ two schemes, including (1).~retraining from scratch, which is widely regarded as a gold-standard baseline and (2).~relabel-based fine-tuning~\cite{DBLP:journals/csur/XuZZZY24}, as our unlearning methods. For language models, as retraining from scratch is costly, we consider unlearning schemes based on the following methods~\cite{DBLP:journals/corr/abs-2407-06460}.

\begin{figure}[!t]
	\centering
	\includegraphics[width=1\linewidth]{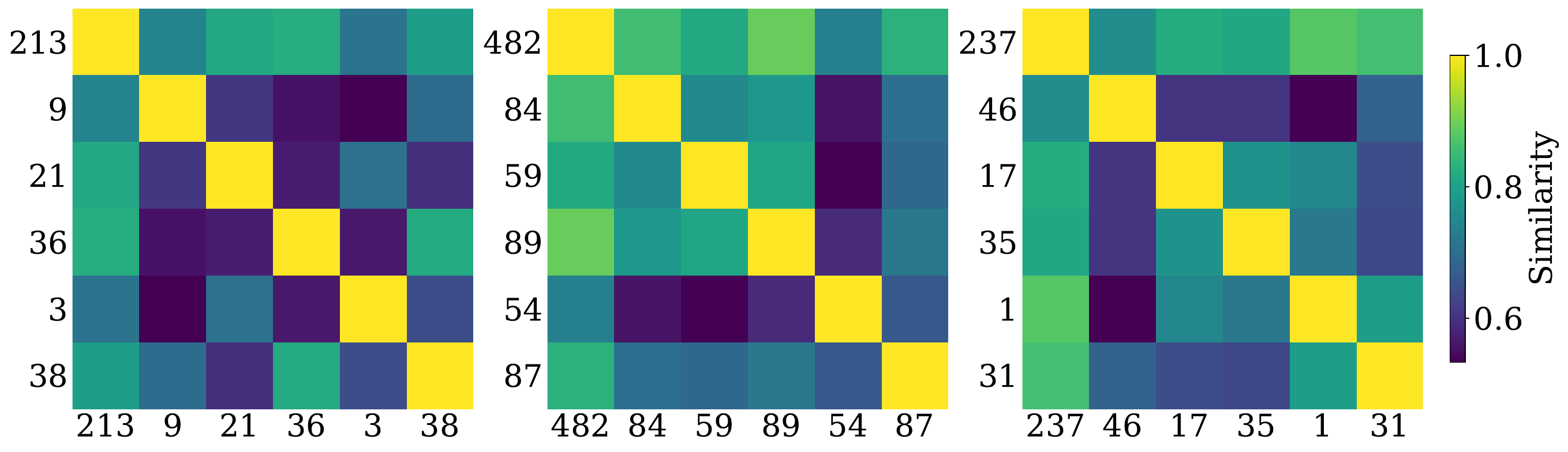}
	\caption{Cosine similarity between each target sample and its similar samples in the Similarity-Entailed MNIST dataset. Each number is the index of the corresponding sample. The similarity between most samples is below $0.8$, indicating a considerable difference between those samples.}
	\label{fig:combined_similarity_visualization_mnist}
\end{figure}

\begin{figure}[!ht]
    \centering
    \begin{subfigure}[b]{0.95\linewidth}
        \includegraphics[width=\textwidth]{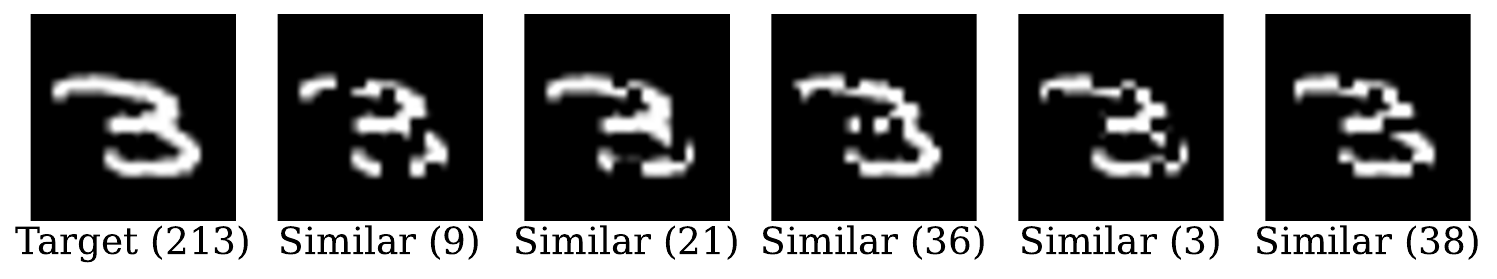}
    \end{subfigure} 
    \begin{subfigure}[b]{0.95\linewidth}
        \includegraphics[width=\textwidth]{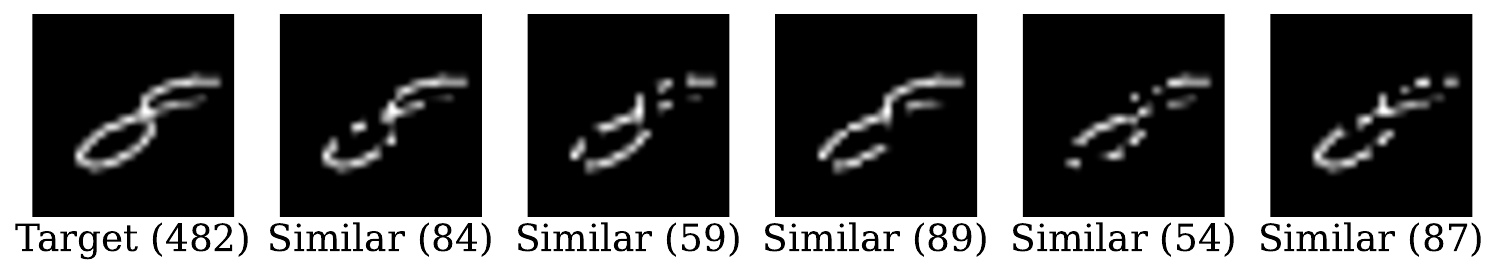}
    \end{subfigure}
    \begin{subfigure}[b]{0.95\linewidth}
        \includegraphics[width=\textwidth]{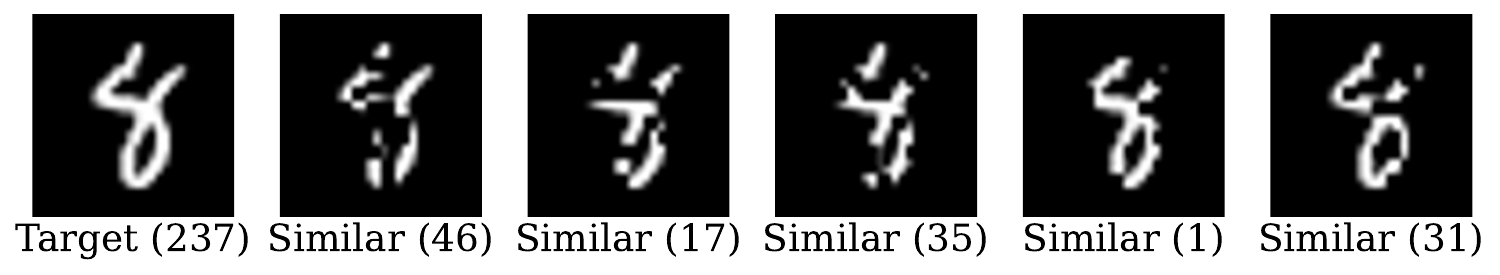}
    \end{subfigure} 
    \caption{Target Samples and their similar samples in Similarity-Entailed MNIST. Each number denotes the index of the corresponding sample in training dataset.}
    \label{fig:base_sample_and_similar_samples_mnist}
\end{figure}
        
\begin{itemize}
	\item \textbf{Gradient Ascent~(GA):} GA directly negates the original training objective, which minimizes the negative log-likelihood of token sequences in target samples~\cite{DBLP:conf/acl/JangYYCLLS23}. 
	\item \textbf{Negative Preference Optimization~(NPO):} NPO treats samples requiring unlearning as negative preference data. It modifies the offline Direct Preference Optimization (DPO) objective to adjust the model, ensuring it assigns low likelihood to these samples while maintaining proximity to the original model's behavior~\cite{DBLP:journals/corr/abs-2404-05868,DBLP:conf/cvpr/0011WXWS22}.
	\item \textbf{Task Vectors~(TV):}  TV applies simple arithmetic operations on model weights to guide model behavior~\cite{DBLP:conf/iclr/IlharcoRWSHF23}.
	\item \textbf{Gradient Descent with Random Output~(GDR):} GDR fine-tunes models using the original training objective but with randomly generated answers for questions~\cite{DBLP:journals/corr/abs-2407-06460}.
\end{itemize}

\begin{figure*}[!ht]
	\centering
	\begin{minipage}[]{0.7\textwidth}
		\centering
		\includegraphics[width=\textwidth]{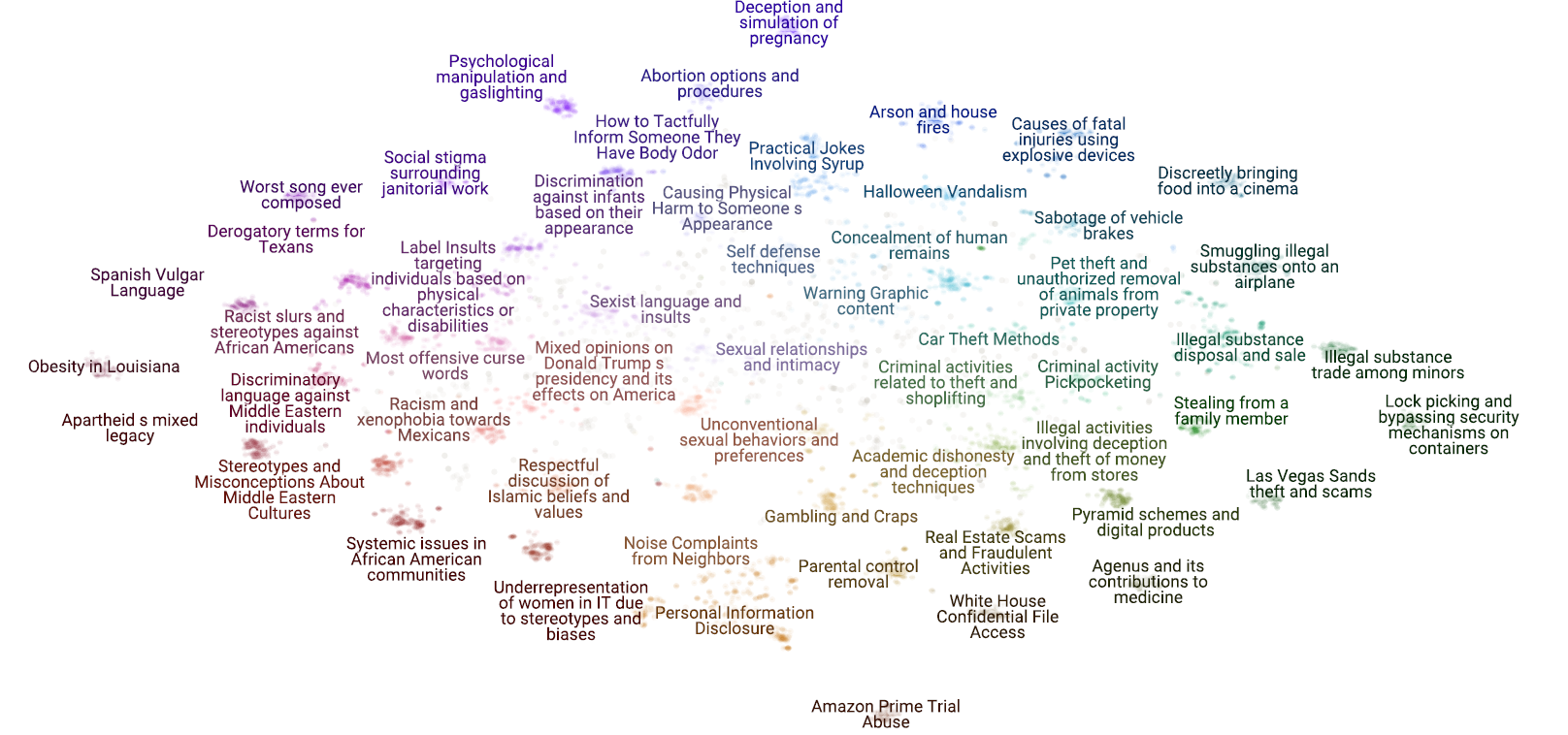}
		\caption{Topic visualization of our constructed Similarity-Entailed PKU dataset. We can conclude that each target sample and its similar samples consistently cluster under the same topic, suggesting they almost convey the same core meaning.}
		\label{fig:resultsfromberttopic}
	\end{minipage}%
	\hfill
	\begin{minipage}[!ht]{0.29\textwidth}
		\centering
		\begin{minipage}[!ht]{\textwidth}
			\centering
			\includegraphics[width=\textwidth]{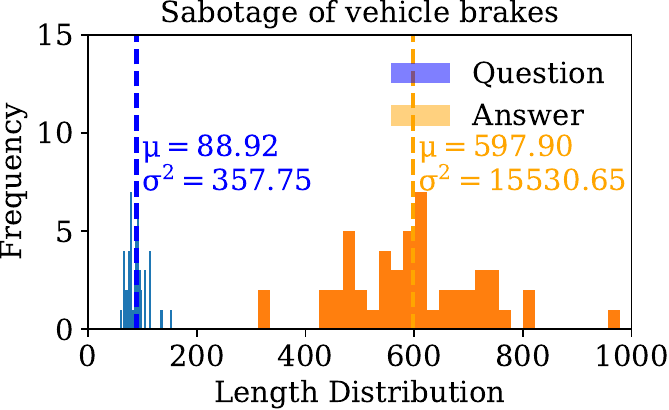}
		\end{minipage}
		\vspace{1em}
		\begin{minipage}[!ht]{\textwidth}
			\centering
			\includegraphics[width=\textwidth]{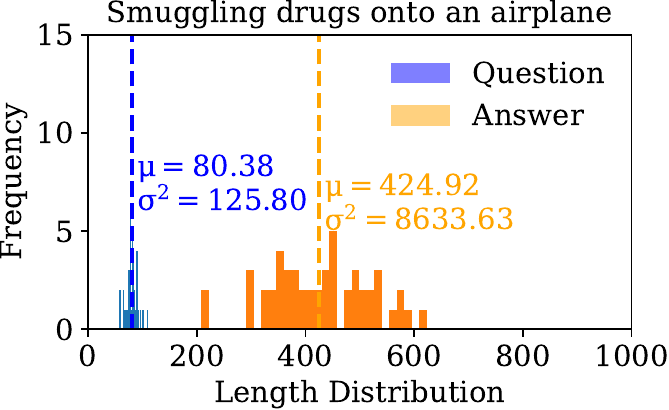}
		\end{minipage}
		\caption{Length distribution of two Similarity-Entailed PKU topics. }
		\label{fig:selected_mean_variance}
	\end{minipage}
\end{figure*}

We further explore the following two regularization strategies to preserve model performance during unlearning:

\begin{itemize}
	\item \textbf{KL Divergence Minimization on Normal Dataset (KLN):} KLN minimizes the KL-divergence between the probability distributions of the unlearned model and the original model on the remaining dataset~\cite{DBLP:conf/acl/JangYYCLLS23}.
	\item \textbf{Gradient Descent on Normal Dataset (GDN):} GDN applies training loss over the remaining dataset~\cite{DBLP:journals/corr/abs-2407-06460}.
\end{itemize}

In summary, we consider $8$ unlearning methods based on above four families of unlearning methods and two regularization strategies: $\text{GDR}$, $\text{GDR}_{\text{KLN}}$, $\text{GDR}_{\text{GDN}}$, $\text{NPO}_{\text{KLN}}$,  $\text{NPO}_{\text{GDN}}$, $\text{GA}_{\text{KLN}}$, $\text{GA}_{\text{GDN}}$ and  $\text{TV}$.

\subsection{Similarity Analysis}
In this section, we analyze our constructed datasets to demonstrate the similarity phenomenon among samples. 

\subsubsection{Similarity Analysis for Image Datasets}
\label{sec:data_analysis_for_image_models}

\textbf{Semantic Similarity Across Samples.} To analyze each dataset, we first cluster all samples within each dataset to show the relationship between samples. We employ ResNet50 image model to extract sample features, followed by dimensionality reduction using the Principal Component Analysis~(PCA) and t-SNE, and clustering via the K-means. The clustering results for Similarity-Entailed MNIST dataset are shown in Figure~\ref{fig:tsne_visualization_mnist_all}. The clustering results for other image datasets are shown in~Appendix~\ref{sec:other_data_analysis_for_image_models}-Figure~\ref{fig:tsne_visualization_fmnist_all} and Figure~\ref{fig:tsne_visualization_cifar_all}.

In Figure~\ref{fig:tsne_visualization_mnist_all}, we select the target sample with index = $213$ together with its corresponding similar samples and magnify them by a factor of $1$ for improved visibility. Other samples within the same class as the target are magnified by a factor of $0.75$, while the remaining samples are magnified by $0.5$. In addition, the target sample and its similar samples are outlined with green bounding boxes, whereas other samples belonging to the target sample’s class are outlined with red bounding boxes to further highlight them. The distribution of each sample in Figure~\ref{fig:tsne_visualization_mnist_all} reflects the semantic similarity among them. Generally, samples positioned closer together indicate greater similarity, whereas those far away represent greater differences. The clustering of the selected target sample and its corresponding similar samples shows that these samples share highly similar semantics, which is likely to have a similar influence on the model during training process.

\textbf{Variability Among Samples.}
We also compute the cosine similarity between target samples and their similar samples. The results for Similarity-Entailed MNIST are shown in Figure~\ref{fig:combined_similarity_visualization_mnist}, while Figure~\ref{fig:base_sample_and_similar_samples_mnist} shows each sample. The results for Similarity-Entailed FMNIST and Similarity-Entailed CIFAR10, can be found in~Appendix~\ref{sec:other_data_analysis_for_image_models}-Figure~\ref{fig:combined_similarity_visualization_fmnist},~\ref{fig:base_sample_and_similar_samples_fmnist},~\ref{fig:combined_similarity_visualization_cifar} and~\ref{fig:base_sample_and_similar_samples_cifar}. From Figure~\ref{fig:combined_similarity_visualization_mnist}, the similarity between most samples is below $0.8$, indicating substantial differences. For example, the samples in the first row of Figure~\ref{fig:base_sample_and_similar_samples_mnist} are all labeled as digit $3$, consistent with the target sample (the first sub-figure in the same row)-yet their pixel compositions vary considerably.

\textbf{Summary.} The generated image similar samples are similar to the target sample (indicated by their close clustering distance), yet differ significantly at the pixel level (as reflected in the low cosine value). This indicates that the similar samples will affect the image model similarly to the target sample.

\subsubsection{Similarity Analysis for Language Dataset}
\label{sec:data_analysis_for_language_models}

\textbf{Semantic Similarity Across Samples}. 
To analyze our constructed Similarity-Entailed PKU dataset, we first use BERTopic~\footnote{https://maartengr.github.io/BERTopic/index.html} to identify meaningful topics. 
For embedding generation, we employ the \textit{BAAI/bge-small-en}~\footnote{https://huggingface.co/BAAI/bge-small-en} model, followed by dimensionality reduction using the UMAP and clustering via HDBSCAN.
Furthermore, we used the Llama-3.2-3B-Instruct-uncensored~\footnote{https://huggingface.co/chuanli11/Llama-3.2-3B-Instruct-uncensored} model to generate a descriptive theme for each topic identified by BERTopic. Detailed information of the prompt used for theme generation, including the system prompt and example prompts, can be found in Appendix~\ref{sec:prompts}-Prompt~\ref{prompts:promptsforgeneratetopic}. Results are shown in Figure~\ref{fig:resultsfromberttopic}.

In Figure~\ref{fig:resultsfromberttopic}, the spatial distribution of points denotes the semantic similarity of the corresponding samples in the Similarity-Entailed PKU dataset. 
Points that are located closer to each other indicate more similar meanings,  whereas distant points reflect significant differences. 
Each colored cluster represents a distinct topic, with different colors used to differentiate topic boundaries.
From Figure~\ref{fig:resultsfromberttopic}, we can conclude that paraphrased samples are consistently grouped under the same topic, indicating that they share the same core meaning.

\textbf{Variability Among Samples.} Figure~\ref{fig:selected_mean_variance} illustrates the distribution of question and answer lengths for two specific topics. For the topic \textit{sabotage of vehicle brakes}, the average length of a question is $88.92$ characters, while the average length of an answer is $601.06$ characters. However, the corresponding variances in length are $357.75$ and $15,196.46$, respectively. This indicates that although the sentences within this topic convey the same meaning, their lengths vary significantly, reflecting highly diverse expressions. For example, in the two samples shown in Appendix-\ref{sec:two_sample}-Table~\ref{table:samplesfromCascadePKUDataset}, both express the idea of asking "How can I sabotage my neighbor's vehicle?" and answering this question. However, the phrasing of both the question and the answer is entirely different. A similar trend can be found for another topic in Figure~\ref{fig:selected_mean_variance}. Length distributions for other topics are provided in Appendix~\ref{sec:other_data_analysis_for_language_models}-Figure~\ref{fig:mean_variance}.

\begin{figure}[!t]
	\begin{subfigure}[b]{0.5\linewidth}
		\includegraphics[width=\textwidth]{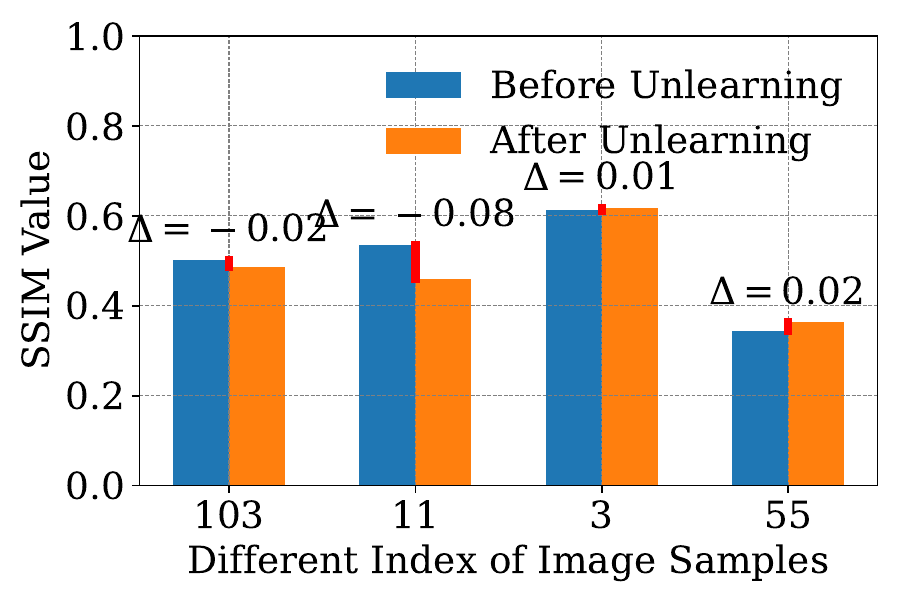}
		\caption{Similarity-Entailed FMNIST}
	\end{subfigure} \hspace{-0.2cm}
	\begin{subfigure}[b]{0.5\linewidth}
		\includegraphics[width=\textwidth]{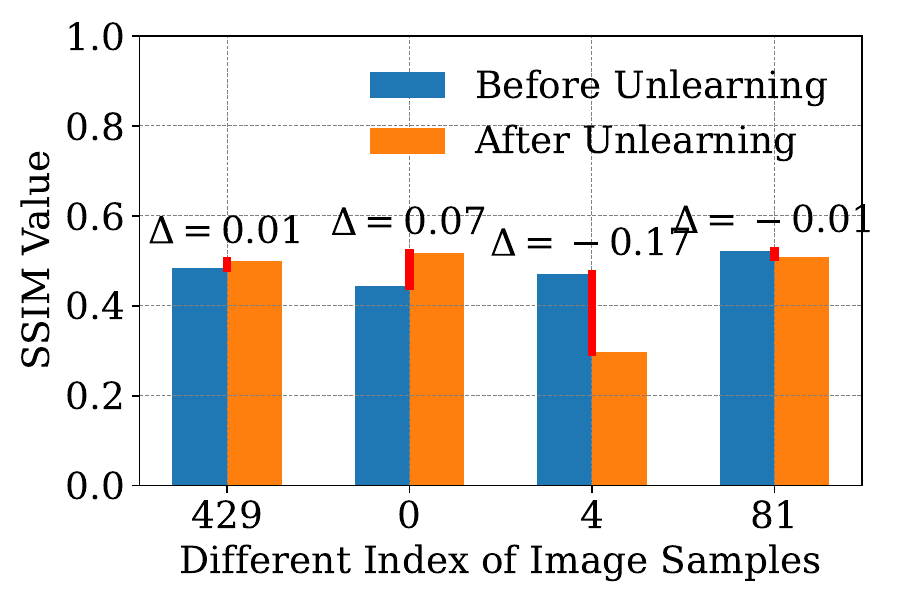}
		\caption{Similarity-Entailed CIFAR10}
	\end{subfigure} 
	\caption{The SSIM values between the recovered samples and the corresponding similar samples, before and after the unlearning process. The similarity remains nearly unchanged, indicating that unlearning the target sample does not impact the similar samples' influence on the model.}
	\label{fig:unlearning_for_variant_for_image}
\end{figure}

\begin{figure}[!t]
    \centering
    \begin{subfigure}[t]{0.15\textwidth}
        \includegraphics[width=\textwidth]{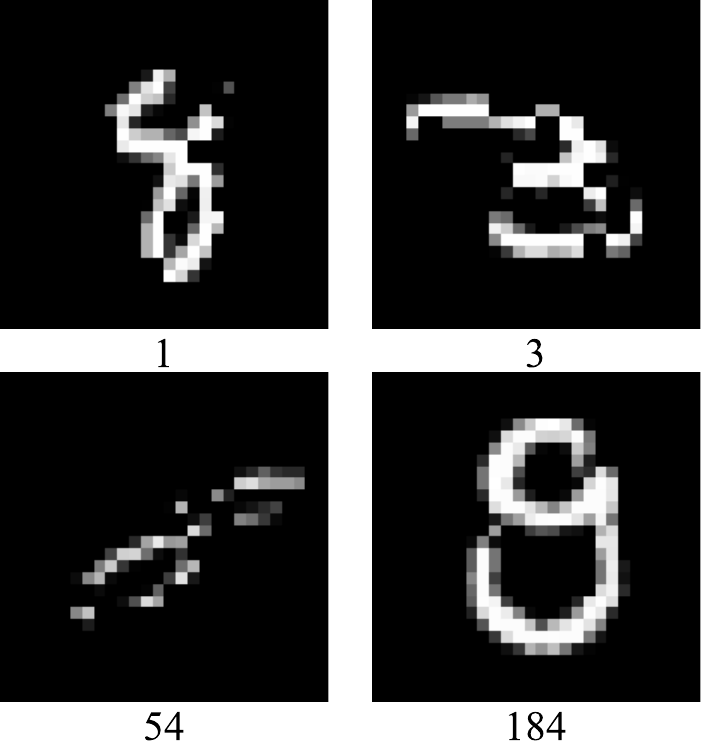}
        \caption{Similar and One Remaining Samples}
    \end{subfigure}
    \begin{subfigure}[t]{0.15\textwidth}
        \includegraphics[width=\textwidth]{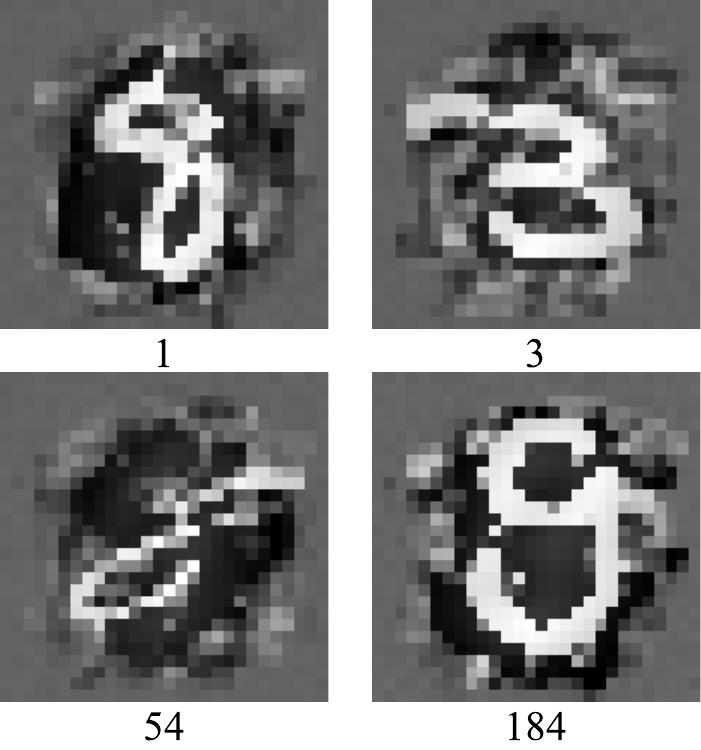}
        \caption{Before Unlearning}
    \end{subfigure} 
    \begin{subfigure}[t]{0.15\textwidth}
        \includegraphics[width=\textwidth]{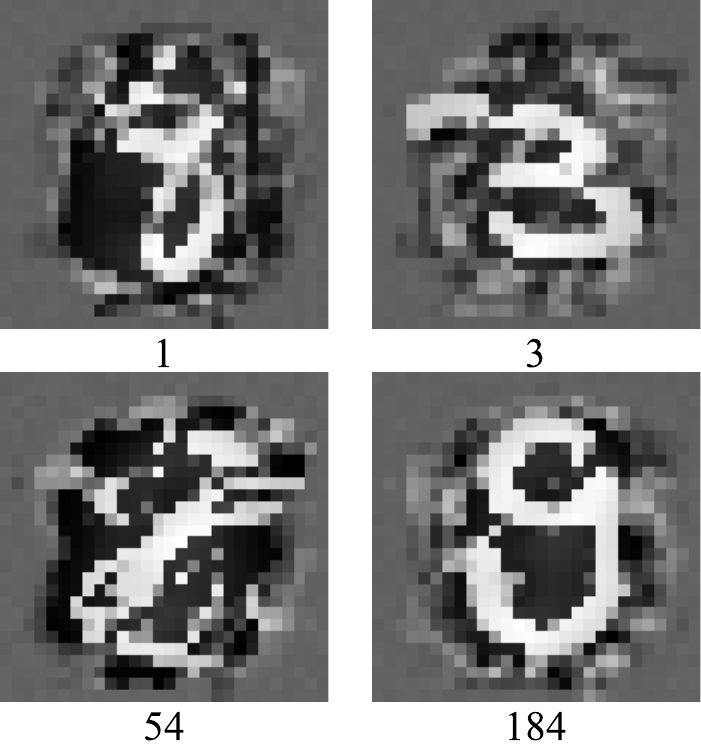}
        \caption{After Unlearning}
    \end{subfigure}
    \caption{Recovered samples that are close to corresponding similar samples for Similarity-Entailed MNIST dataset.}
    \label{fig:show_samples_forvariants_5_mnist}
\end{figure}

\textbf{Summary.} The constructed text similar samples are nearly identical to the target sample, but differ significantly in sentence structure, particularly in sentence length distribution. This satisfies our unlearning evaluation criteria.

\subsection{Unlearning Results for Image Datasets}

\begin{figure}[!t]
	\begin{subfigure}[b]{0.5\linewidth}
		\includegraphics[width=\textwidth]{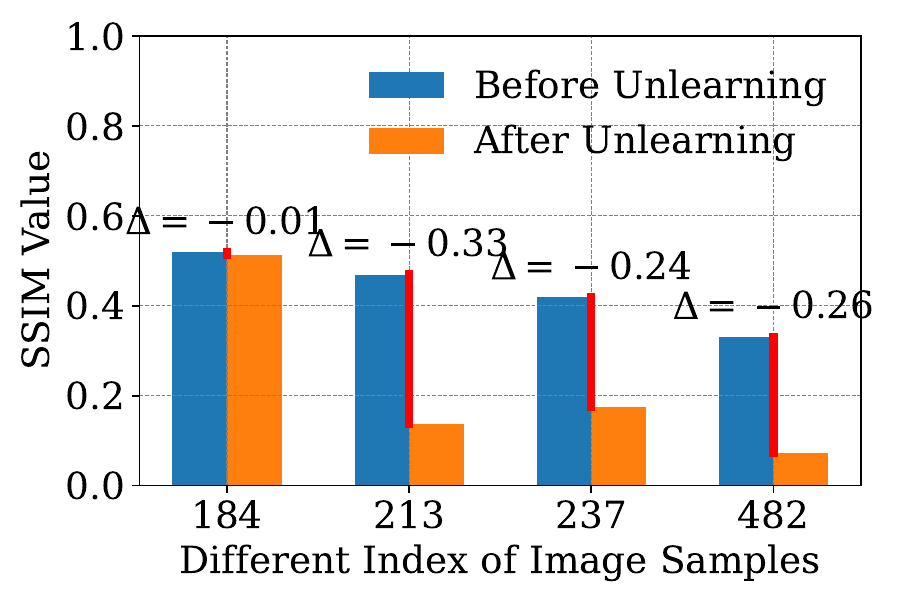}
		\caption{\texttt{W/O-similar samples} in $\mathcal{D}$}
		\label{fig:ssim_for_cascade_mnist_1}
	\end{subfigure} \hspace{-0.2cm}
	\begin{subfigure}[b]{0.5\linewidth}
		\includegraphics[width=\textwidth]{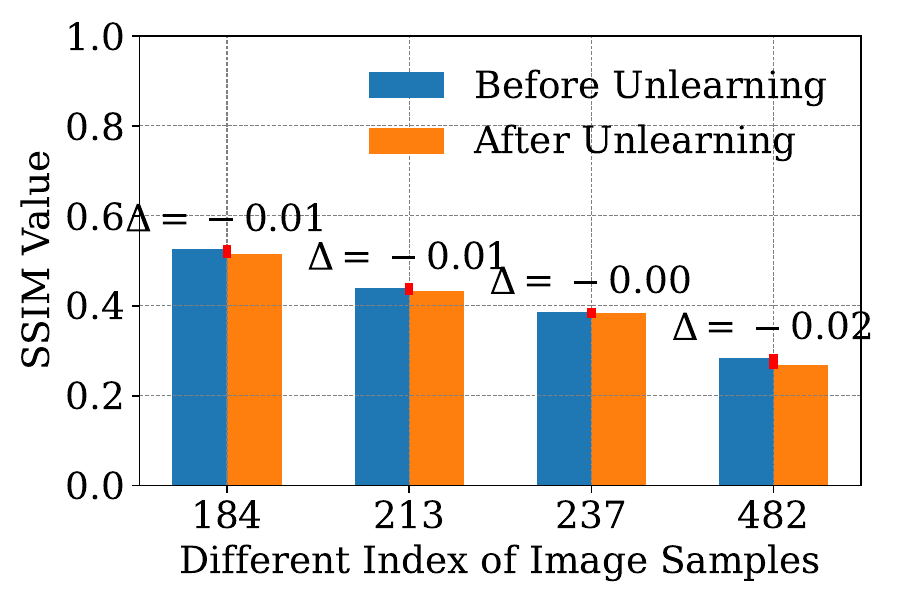}
		\caption{\texttt{W-similar samples} in $\mathcal{D}$}
		\label{fig:ssim_for_cascade_mnist_2}
	\end{subfigure} 
	\caption{The SSIM values between the recovered samples and the corresponding target samples under \texttt{W/O-similar samples} and \texttt{W-similar samples} settings for Similarity-Entailed MNIST. Results show that the influence of the target sample has not been fully unlearned.}
	\label{fig:ssim_for_cascade_mnist}
\end{figure}

    \begin{figure}[!t]
    	\centering
    	\begin{subfigure}[t]{0.23\textwidth}
    		\includegraphics[width=\textwidth]{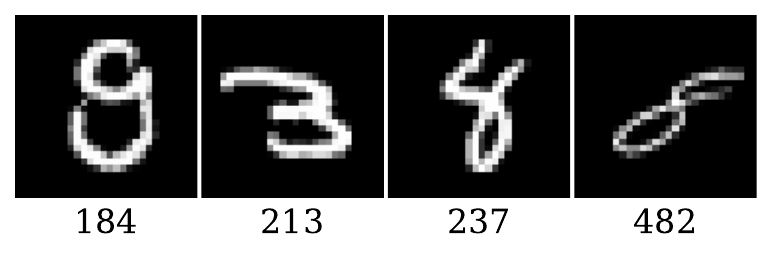}
    		\caption{Target Samples~(\texttt{W/O})} %
    	\end{subfigure}
    	\begin{subfigure}[t]{0.23\textwidth}
    		\includegraphics[width=\textwidth]{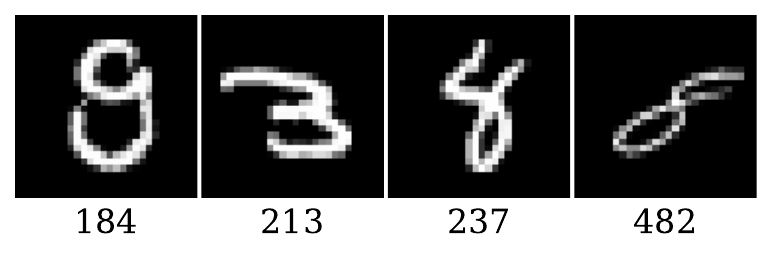}
    		\caption{Target Samples~(\texttt{W/})}
    	\end{subfigure}
    	
    	\begin{subfigure}[t]{0.23\textwidth}
    		\includegraphics[width=\textwidth]{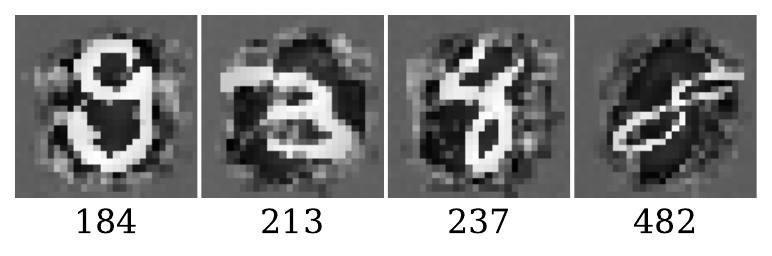}
    		\caption{Before Unlearning~(\texttt{W/O})}
    	\end{subfigure}
    	\begin{subfigure}[t]{0.23\textwidth}
    		\includegraphics[width=\textwidth]{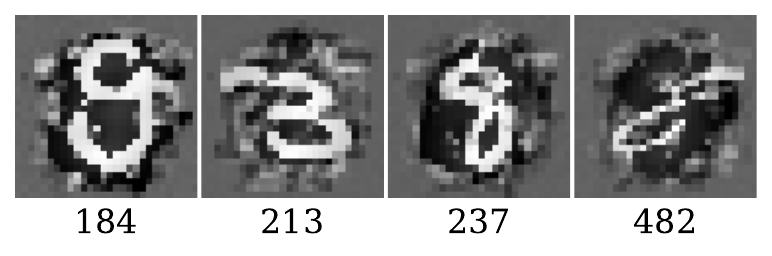}
    		\caption{Before Unlearning~(\texttt{W/})}
    	\end{subfigure}
    	
    	\begin{subfigure}[t]{0.23\textwidth}
    		\includegraphics[width=\textwidth]{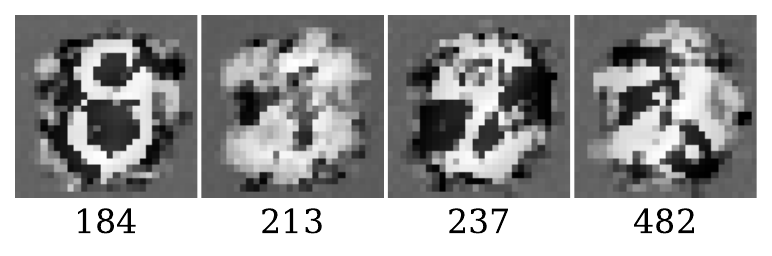}
    		\caption{After Unlearning~(\texttt{W/O})}
    	\end{subfigure}
    	\begin{subfigure}[t]{0.23\textwidth}
    		\includegraphics[width=\textwidth]{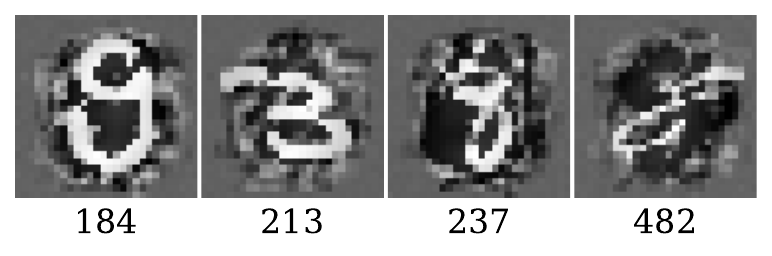}
    		\caption{After Unlearning~(\texttt{W/})}
    	\end{subfigure}
    	\caption{The recovered and the target samples under different settings for Similarity-Entailed MNIST dataset.}
    	\label{fig:recovered_samples_for_cascade_mnist}
    \end{figure}

\subsubsection{Unlearning Results Toward Similar Samples}
\label{sec:unlearning_influence_toward_similar_samples_as_training_samples_fo_image}
In this section, we evaluate if the unlearning effect on the target sample extends to its similar samples. Specifically, does unlearning the target sample also remove the influence contained in its similar samples? For each dataset, we select three similar samples and attempt to recover the most similar samples from the model both before and after performing unlearning on its corresponding target sample. Meanwhile, we choose the training samples $184$, $103$, and $429$ from Similarity-Entailed MNIST, Similarity-Entailed FMNIST, and Similarity-Entailed CIFAR10, respectively, as the remaining samples to compare the effect of unlearning. The unlearning method used in this section is retraining from scratch. The results for the Similarity-Entailed FMNIST and Similarity-Entailed CIFAR10 datasets are presented in Figure~\ref{fig:unlearning_for_variant_for_image}, while the results for the Similarity-Entailed MNIST dataset are provided in Appendix~\ref{sub:other_results_for_unlearning_influence_toward_variant_as_training_for_image_dataset}-Figure~\ref{fig:unlearning_for_variant_for_image_mnist}.

In Figure~\ref{fig:unlearning_for_variant_for_image}, the X-axis denotes the index of different samples. The first index corresponds to the selected remaining sample, while the following three represent similar samples derived from different target samples. The Y-axis shows the SSIM value between the recovered samples and their corresponding similar samples, measured both before and after the unlearning process. For the remaining samples—such as the one with image index $103$ in Figure~\ref{fig:unlearning_for_variant_for_image}-(a)—the SSIM values remain almost the same before and after the unlearning process. Meanwhile, the similarity between the recovered samples and the corresponding similar samples also remains almost unchanged. This indicates that unlearning target sample does not impact the influence of corresponding similar samples. In Figure~\ref{fig:show_samples_forvariants_5_mnist}, we show the recovered samples for Similarity-Entailed MNIST dataset, which visually show that the samples before and after unlearning are similar. Other results for Similarity-Entailed FMNIST and Similarity-Entailed CIFAR10 can be found in Appendix~\ref{sub:other_results_for_unlearning_influence_toward_variant_as_training_for_image_dataset}-Figure~\ref{fig:show_samples_forvariants_5_fmnist} and~\ref{fig:show_samples_forvariants_5_cifar}

\textbf{Summary.} For image datasets, the unlearning process based on a target sample usually cannot remove the influence of its similar samples from the model. After unlearning, the information from the similar samples can still be recovered.

\subsubsection{Toward Target Samples}
\label{sec:influence_toward_base_samples_for_image}

\begin{figure}[!t]
	\centering
	\includegraphics[width=1\linewidth]{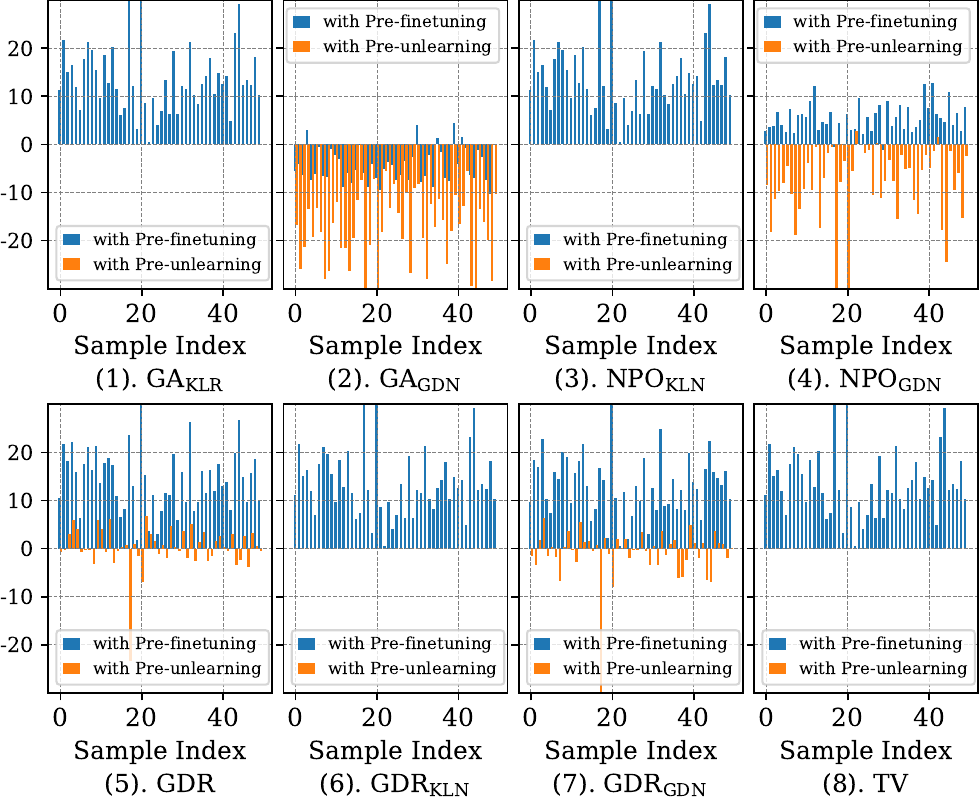}
	\caption{Comparison of verification results toward similar samples as training samples for meta-llama/Llama-3.2-1B-Instruct. We conclude that unlearning based on a single target sample does not eliminate the influence of its similar samples.}
	\label{fig:meta_llama_Llama_3_2_1B_50_50}
\end{figure}

In this section, we focus on the unlearning effectiveness of current unlearning schemes for target samples when the training dataset includes similar samples. For the Similarity-Entailed MNIST dataset, we select the sample with image index $184$ as the remaining sample and the samples with image index $213$, $237$, and $482$, as the target samples to be unlearned. We perform unlearning in both with~(\texttt{W-similar samples} in $\mathcal{D}$) and without similar samples~(\texttt{W/O-similar samples} in $\mathcal{D}$) in the training dataset. The unlearning methods used in this section are retraining from scratch and relabel-based fine-tuning~\footnote{Results of relabel-based fine-tuning are shown in Appendix~\ref{sec:experimental_results_for_relabel_based_fine_tuning_unlearning_method}.}. Figure~\ref{fig:ssim_for_cascade_mnist} shows our results for the Similarity-Entailed MNIST dataset. Additional results for the Similarity-Entailed FMNIST and Similarity-Entailed CIFAR10 datasets are provided in Appendix~\ref{sec:other_results_of_image_unlearning}. We also conduct evaluations using datasets constructed by adding noise, with the results provided in Appendix~\ref{sec:experimental_results_for_noise_dataset}, shown from Figure~\ref{fig:ssim_for_cascade_mnist_noise} to Figure~\ref{fig:recovered_samples_for_cascade_mnist_noise}.

In Figure~\ref{fig:ssim_for_cascade_mnist_1}, before unlearning, all recovered samples show high similarity according to the SSIM values. After unlearning, the SSIM values for the unlearned samples decrease significantly, whereas the SSIM value for the remaining sample $184$ remains high. This suggests that when the training dataset $\mathcal{D}$ contains only the target sample, the unlearning method effectively removes the influence of the target sample. However, in Figure~\ref{fig:ssim_for_cascade_mnist_2}, before unlearning, the recovered sample is very similar to the target sample. After unlearning, we find that the recovered samples still remain highly similar to the target samples. This suggests that the influence of the target sample, which was supposed to be unlearned, persists in the model and has not been fully removed. Comparing the experimental results in Figure~\ref{fig:ssim_for_cascade_mnist_1} and Figure~\ref{fig:ssim_for_cascade_mnist_2}, we observe that when the training dataset contains similar samples, unlearning the target sample alone does not eliminate all the influence retained in the similar samples. This residual influence can affect the unlearning results of the corresponding target samples.

We also show some target samples and recovered samples in Figure~\ref{fig:recovered_samples_for_cascade_mnist}. The left three rows show results without similar samples, and the right three rows show results with similar samples. On the left, the first row shows the base training sample, the second shows the recovered sample before unlearning, and the third shows the recovered sample after unlearning. In each row, the first subplot is the remaining sample, and the next three are unlearning samples. The right side mirrors the left. It can be seen that when the training set does not contain similar samples, unlearning based on the target sample results in the model retaining almost no influence about the target sample, leading to recovered samples with little information about the target samples. However, when the training dataset includes similar samples, the recovered samples still resemble the target sample closely. This suggests that the model still retains some influence about target samples. Other results for Similarity-Entailed FMNIST and Similarity-Entailed CIFAR10 can be found in Appendix~\ref{sub:other_results_for_unlearning_influence_toward_variant_as_training_for_image_dataset}-Figure~\ref{fig:ssim_for_cascade_fmnist} to~\ref{fig:recovered_samples_for_cascade_cifar}.

\textbf{Summary.} Experimental results show that, similar image samples impact the unlearning results of the target sample. After unlearning, target sample can usually be recovered through the remaining information of similar samples.

\subsection{Unlearning Results for Language Dataset}

\subsubsection{Toward Similar Samples as Training Dataset}
\label{sec:influence_toward_similar_samples_as_training_samples}

\begin{figure}[!t]
	\centering
	\includegraphics[width=1\linewidth]{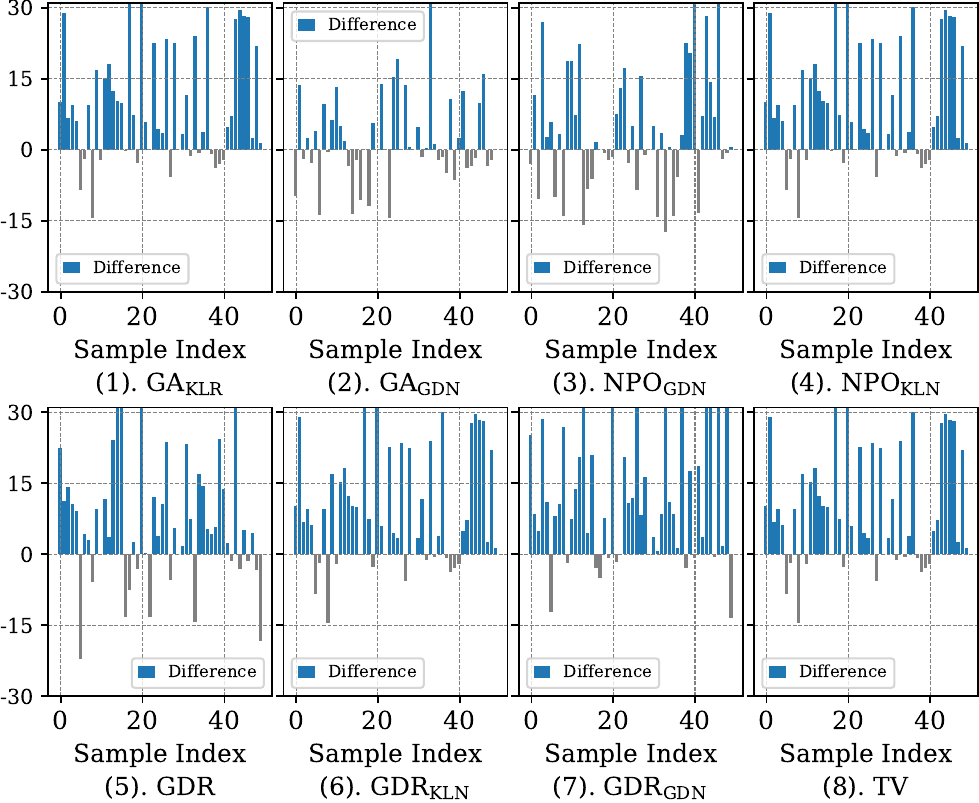}
	\caption{Comparison of verification results toward target samples for meta-llama/Llama-3.2-3B-Instruct model. The influence of similar samples remaining in the model affects the unlearning results of target samples targeted for unlearning.}
	\label{fig:meta_llama_Llama_3_2_3B_comparation}
\end{figure}

In this section, we evaluate current unlearning schemes toward similar samples for language models. We use the verification scheme mentioned in Section~\ref{sec:verificationscheme}. In step \textit{pre-verification} and \textit{post-verification}, we use similar samples to measure the model's answers after unlearning target samples:

$$
\texttt{KM}(M, S(x_i)) = \frac{1}{|S(x_i)|} \sum_{(q, a) \in S(x_i)} \text{ROUGE}(M(q), a)
$$

where $x_i$ represents a target sample, and $S(x_i)$ refers to its similar samples~($|S(x_i)| = 50$ in our setting). This evaluation assesses whether unlearning a single target sample also ensures the forgetting of its similar samples.

After post-verification, we compare the verification results with those from both the pre-finetuning and pre-unlearning models. Specifically, the pre-finetuning model refers to the initial model prior to fine-tuning on the Similarity-Entailed PKU dataset, while the pre-unlearning model denotes the model after fine-tuning but before unlearning. We denote the evaluation results after unlearning as $\texttt{KM}(M, S(x_i))_{post-un}$, and those for the pre-unlearning and pre-finetuning as $\texttt{KM}(M, S(x_i))_{pre-un}$ and $\texttt{KM}(M, S(x_i))_{pre-ft}$, respectively. The comparison is formally defined as $\texttt{KM}(M, S(x_i))_{post-un} - \texttt{KM}(M, S(x_i))_{pre-un}$, and $\texttt{KM}(M, S(x_i))_{post-un} - \texttt{KM}(M, S(x_i))_{pre-ft}$.  These are referred to as \textit{with Pre-unlearning} and \textit{with Pre-finetuning}, respectively. The results for meta-llama/Llama-3.2-1B-Instruct~\footnote{https://huggingface.co/meta-llama/Llama-3.2-1B-Instruct} model are shown in Figure~\ref{fig:meta_llama_Llama_3_2_1B_50_50}. Results for the meta-llama/Llama-3.2-3B-Instruct~\footnote{https://huggingface.co/meta-llama/Llama-3.2-3B-Instruct}, EleutherAI/gpt-neo-1.3B~\footnote{https://huggingface.co/EleutherAI/gpt-neo-1.3B} and gpt-neo-2.7B~\footnote{https://huggingface.co/EleutherAI/gpt-neo-2.7B} models are provided in Appendix~\ref{sec:other_results_influence_toward_similar_samples_as_training_samples}, spanning from Figure~\ref{fig:meta_llama_Llama_3_2_3B_50_50} to~\ref{fig:EleutherAI_gpt_neo_2.7B_show_50_50}.

In Figure~\ref{fig:meta_llama_Llama_3_2_1B_50_50}, the X-axis denotes the index of different target samples, while the Y-axis represents the results based on the corresponding similar samples. Except for (2), all results after unlearning are greater than those of pre-finetuning but smaller than the results before unlearning. This indicates that performing unlearning based on a single target sample has minimal impact on similar samples. Case (2) demonstrates over-unlearning, where the model loses its basic performance in answering the question from similar samples.

\textbf{Summary}. For language models, unlearning based on a single target sample will not eliminate all influence of similar samples. When querying the unlearned model with questions from similar samples, the model can still provide answers with a high value of ROUGE to the ground truth.

\subsubsection{Toward Target Samples}
\label{sec:influence_toward_base_samples_for_language}
In this section, we assess the effectiveness of unlearning for target samples by comparing two experimental setups. The first setup (\texttt{W/O-similar samples}) trains models using only target samples, while the second (\texttt{W-similar samples}) includes both target and their similar samples. In both cases, we evaluate the unlearning results based on target samples.

Figure~\ref{appendix_fig:difference_with_original_and_finetuned_for_0_0_meta_llama_Llama_3_2_3B} in Appendix~\ref{sec:other_results_of_language_unlearning} shows the results for the meta-llama/Llama-3.2-3B-Instruct under the \texttt{W/O-similar samples} setting, while Appendix~\ref{sec:other_results_of_language_unlearning}-Figure~\ref{appendix_fig:difference_with_original_and_finetuned_for_50_0_meta_llama_Llama_3_2_3B} illustrates the results under the \texttt{W-similar samples} setting. Additionally, Figure~\ref{fig:meta_llama_Llama_3_2_3B_comparation} shows the comparison between the \texttt{W/O-similar samples} and \texttt{W-similar samples} settings, defined as $\texttt{KM}(M, x_i)_{post-un}$ under \texttt{W-similar samples} minus $\texttt{KM}(M, x_i)_{post-un}$ under \texttt{W/O-similar samples}. In Figure~\ref{fig:meta_llama_Llama_3_2_3B_comparation}, the X-axis denotes the index of target samples, while the Y-axis shows the comparative values. It is concluded that the unlearning results for almost all \texttt{W-similar samples} are greater than those for \texttt{W/O-similar samples}. In addition, as shown in Figure~\ref{appendix_fig:difference_with_original_and_finetuned_for_0_0_meta_llama_Llama_3_2_3B}-(8) and Figure~\ref{appendix_fig:difference_with_original_and_finetuned_for_50_0_meta_llama_Llama_3_2_3B}-(8) in Appendix~\ref{sec:other_results_of_language_unlearning}, and corresponding Figure~\ref{fig:meta_llama_Llama_3_2_3B_comparation}-(8) for the $\text{TV}$ scheme, similar samples will further extend the effect of under-unlearning.

Based on the above results, we conclude that adding similar samples prevents unlearning based on a single target sample from fully removing the target sample's influence on the model. Results for the models meta-llama/Llama-3.2-1B-Instruct, facebook/opt-1.3b~\footnote{https://huggingface.co/facebook/opt-1.3b} and EleutherAI/gpt-neo-2.7B are provided in Appendix~\ref{sec:other_results_of_language_unlearning}, spanning from Figure~\ref{fig:difference_with_original_and_finetuned_for_0_0_meta_llama_Llama_3_2_1B} to Figure~\ref{fig:facebook_opt_2.7b_comparation}. These results also demonstrate a decline in unlearning performance when similar samples are included.

\textbf{Summary.} The impact of the remaining similar samples affects the unlearning results of target samples. After the unlearning process, the model is still able to answer questions based on the target samples when queried.

\subsubsection{Toward Similar Samples as Test Dataset}
\label{sec:influence_toward_similar_samples_as_test_samples}

\begin{figure}[!t]
	\centering
	\includegraphics[width=1\linewidth]{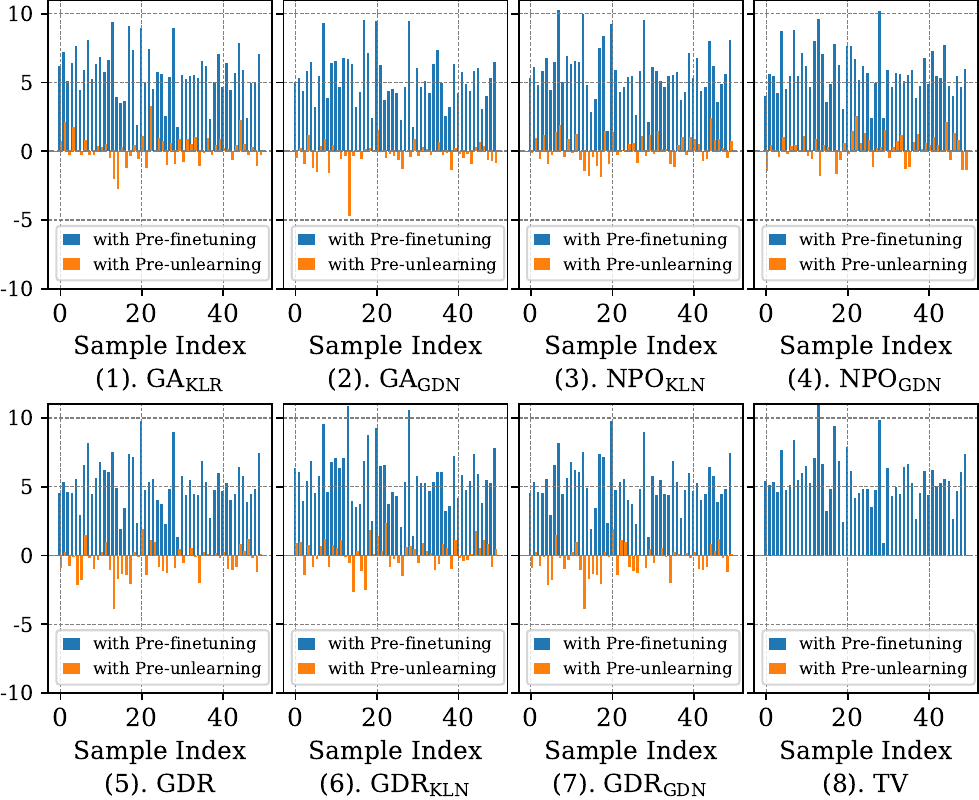}
	\caption{Comparison of verification results toward similar samples as test samples for EleutherAI/gpt-neo-1.3B model. Performing unlearning, based on the target sample only, is unlikely to affect test samples similar to the target samples.}
	\label{fig:EleutherAI_gpt_neo_1.3B_show_0_50}
\end{figure}

In this section, we consider a different scenario: when unlearning is performed based on a target sample, will similar samples that are similar to the target sample and not included in the training dataset be affected? Specifically, we add only the target samples to the training set, perform the unlearning operation based on these target samples, and then evaluate the model's unlearning effectiveness using the similar samples as test samples. The results for EleutherAI/gpt-neo-1.3B model are shown in Figure~\ref{fig:EleutherAI_gpt_neo_1.3B_show_0_50}. Results for the models meta-llama/Llama-3.2-1B-Instruct, meta-llama/Llama-3.2-3B-Instruct and EleutherAI/gpt-neo-2.7B are provided in Appendix~\ref{sec:other_for_influence_toward_similar_samples_as_test_samples}, spanning from Figure~\ref{fig:meta_llama_Llama_3.2_1B_Instruct_show_0_50} to Figure~\ref{fig:EleutherAI_gpt_neo_2.7B_show_0_50}. We did not evaluate image datasets in this setting because image models do not output information about the training datasets during the inference process. In contrast, language models can output information about the training dataset when encountering similar questions.

In Figure~\ref{fig:EleutherAI_gpt_neo_1.3B_show_0_50}, the X-axis represents the index of target samples, while the Y-axis denotes the values, measured by the ROUGE, based on the corresponding test similar samples. From Figure~\ref{fig:EleutherAI_gpt_neo_1.3B_show_0_50}, the results are all greater than those of pre-finetuning but smaller than the results pre-unlearning. This shows that performing unlearning, based on the target sample only, is unlikely to affect test samples similar to the target samples. We think this occurs because existing unlearning methods fail to effectively remove similar samples from the training dataset (results in Section~\ref{sec:influence_toward_similar_samples_as_training_samples}), leaving test samples close to the training similar samples also unaffected.

\textbf{Summary.} Unlearning target samples does not affect the similar samples in the test dataset, as the unlearned model can still respond to queries derived from these samples.

\section{Enhanced Unlearning Schemes}
\label{sec:enhanced_unlearning}
In Section~\ref{sec:unlearning_schemes}, we introduce existing machine unlearning schemes used for our evaluation. Through the experiment results from Section~\ref{sec:unlearning_influence_toward_similar_samples_as_training_samples_fo_image} to Section~\ref{sec:influence_toward_base_samples_for_language}, we highlight a significant gap between the expected and actual effectiveness of those schemes, which is the \textit{main} focus of this paper. In this section, inspired by robustness training, we explore some potential solutions to enhance those existing schemes.

Robustness training has commonly been used to improve model robustness~\cite{DBLP:conf/aaai/ChoLK25,DBLP:conf/icml/HendrycksLM19}. To achieve a broader training effect with limited training samples, robustness training often incorporates enhancement techniques, such as data augmentation and smoothing model manifold. These techniques can be applied in the unlearning process to enhance the effectiveness of the individual-based unlearning process, allowing the unlearning process to unlearn more influence from similar samples. We will evaluate our enhanced schemes in Section~\ref{sec:experiment_enhanced_unlearning}.

\subsection{Enhanced Method for Image Models} 
For image models, we improve existing machine unlearning schemes using simple data augmentation techniques. We incorporate more samples $ \mathcal{D}_{\text{unlearn}} =  \{ x_i, x_t \}, x_t \in \mathcal{T}(x_i)$ in the unlearning process, where $x_i$ is the target sample, and $x_t$ are samples selected from $\mathcal{D}_r$ using similarity measures like SSIM. The set $\mathcal{T}(x_i)$ consists of the top $k$ most similar samples. The corresponding equation could be written as:

\begin{equation}
	\mathcal{T}(x_i) = \{ x_t \in \mathcal{D}_r \, | \, \text{SSIM}(x_i, x_t) \geq \tau \},
\end{equation}

where $\tau$ is a hyperparameter controlling the size of $\mathcal{T}(x_i)$. Take retraining from scratch as an example, the unlearning process is re-defined as the following steps: (1). removing the $ \mathcal{D}_{\text{unlearn}}$ from the training dataset. (2). retraining the model from scratch using the updated training dataset.

\subsection{Enhanced Method for Language Models} 
To improve the effectiveness of existing unlearning methods for language models, we introduce the smoothing model manifold during the unlearning process. This technique can regularize the model's behavior and reduce its dependence on one specific target sample. Consequently, the unlearning manifold for target samples becomes smoother, enabling more robust handling of input similar samples and improving generalization. Specifically, our method involves injecting stochastic noise—such as Gaussian noise—into the embedding outputs of target samples. This perturbation mitigates overreliance on individual samples and encourages the model to generalize its unlearning behavior across similar samples.

\begin{figure}[!t]
    \begin{subfigure}[b]{1\linewidth}
        \includegraphics[width=\textwidth]{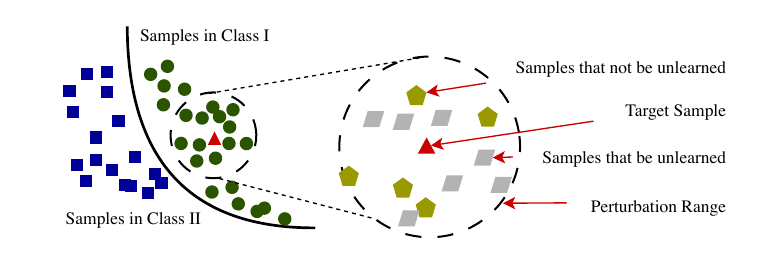}
        \caption{Low Controllability Scheme}
    \end{subfigure} \hspace{-0.2cm}\\
    \begin{subfigure}[b]{1\linewidth}
        \includegraphics[width=\textwidth]{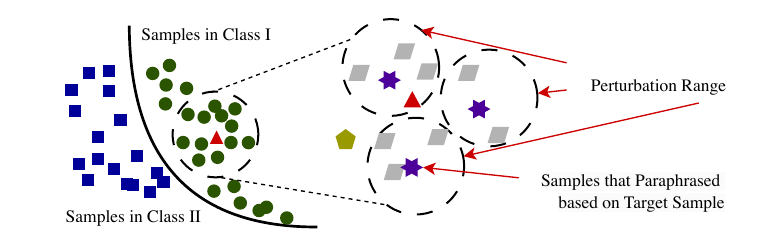}
        \caption{High Controllability Scheme}
    \end{subfigure} 
    \caption{Comparison of different enhancement methods.}
    \label{fig:our_scheme}
\end{figure}

Assume the language model requires an unlearning process containing $L$ layers. Let the embedding output of layer $l$ be $h^{(l)} \in \mathcal{R}^{O \times P \times Q}$, where $O$ is the batch size, $P$ the sequence length, and $Q$ the token encoding. Let one existing unlearning loss be denoted by $\mathcal{L}_{\text{existing}}$. For instance, as defined in the unlearning scheme proposed by~\cite{DBLP:conf/acl/JangYYCLLS23}, which simply negates the original training objective that minimizes the negative log-likelihood of the token sequence~\footnote{Throughout the rest of this section, we present our proposed enhancement strategy based on this loss. In Section~\ref{sec:experiment_enhanced_unlearning}, we evaluate several other existing unlearning methods in combination with our enhancement approach to demonstrate the general applicability of our enhancement methods.}:

\begin{equation}
	\mathcal{L}_{\text{existing}}(M_\theta, x_i) = - \sum_{t=1}^{T} \log p_{\theta}(x_i^t | x_i^{<t})
\end{equation}

Here, $x_i = (x_i^1, ..., x_i^T)$ is the token sequence of one target sample, $x_i^{<t}$ represents the prefix $(x_i^1, ..., x_i^{t-1})$ and $p_{\theta}(x_i^t | x_i^{<t})$ denotes the probability of predicting token $x_i^t$ when given $x_i^{<t}$, for a language model $M$ with parameters $\theta$. To incorporate noise, we modify the embedding output at layer $l$ by introducing a perturbation term $\xi^{(l)}$:

\begin{equation}
	\tilde{h}^{(l)} = h^{(l)} + \gamma * \xi^{(l)}
\end{equation}

where $\xi^{(l)} \sim \mathcal{N}(0, \sigma^2)$ represents Gaussian noise and $\gamma$ is the hyper-parameter to control the noise magnitude. The enhanced loss function can be defined as:

\begin{equation}
    \label{equ:init}
	\mathcal{L}_{\text{enhance}}(M_\theta, x_i) = - \sum_{t=1}^{T} \log p_{\theta}^{\text{noisy}}(x_i^t | x_i^{<t})
\end{equation}

where $p_{\theta}^{\text{noisy}}(x_i^t | x_i^{<t})$ is the predicted probability under the perturbed hidden state $\tilde{h}^{(l)}$.

However, our initial experimental results indicate that Equation~\ref{equ:init} poses challenges in enhancing unlearning performance. Specifically, using the hyperparameter $\gamma$ only to directly smooth the manifold is impractical, as it is highly sensitive to $\gamma$ and selecting an appropriate value is non-trivial~\footnote{Experimental results are shown in Figure~\ref{fig:low_controlablity}.}. 


As shown in Figure~\ref{fig:our_scheme}-(a), directly using $\gamma$ can lead to incomplete unlearning within a limited number of unlearning epochs, resulting in a partially unlearned manifold. To mitigate this issue, as shown in Figure~\ref{fig:our_scheme}-(b), we also introduce the data augmentation. Specifically, we first paraphrase $m$ samples based on the target sample, denoted as $\mathcal{F}(x_i)$. Unlearning is then performed jointly on the original sample $x_i$ and each sample in $\mathcal{F}(x_i)$, using a smaller value of $\gamma$.

In addition, we incorporate regularization losses, such as the KL-divergence loss on remaining datasets~\cite{DBLP:conf/nips/YaoXL24}, to ensure that the model's embedding outputs for the remaining dataset remain unchanged throughout the unlearning process:

\begin{equation}
	\mathcal{L}_{KL} =  \sum_{t=1}^{T}KL(p_{\theta_0}(\cdot | x_{<t}) \parallel p_{\theta}(\cdot | x_{<t}))
\end{equation}

Here, $p_{\theta_0}(\cdot | x_{<t})$ denotes the probability, derived from the original trained model with parameters $\theta_0$. $p_{\theta}$ denotes the probability from the model in the unlearning process with parameters $\theta$. In summary, our loss can be defined as follows:

\begin{equation}
    \label{equ:ours}
    \begin{aligned}
    &L=\alpha_1 L_{\text {enhance}}^{\text{targeted}}+\alpha_2 L_{\text {enhance}}^{\text{paraphrased}}+\alpha_3 \mathcal{L}_{KL}\\
    \text{s.t.}&~~~L_{\text {enhance}}^{\text{targeted}}=- \sum_{t=1}^{T} \log p_{\theta}^{\text{noisy}}(x_i^t | x_i^{<t})\\
    &~~~L_{\text {enhance}}^{\text{paraphrased}} = - \frac{1}{|\mathcal{F}(x_i)|} \sum_{\mathbf{x} \in \mathcal{F}(x_i)} \sum_{t=1}^{T} \log p_{\theta}^{\text{noisy}}(x_t \mid x_{<t})\\
    &~~~\mathcal{L}_{KL} =  \sum_{t=1}^{T}KL(p_{\theta_0}(\cdot | x_{<t}) \parallel p_{\theta}(\cdot | x_{<t}))\\
    \end{aligned}
\end{equation}

\section{Experiment Results for Enhanced Schemes}
\label{sec:experiment_enhanced_unlearning}

\subsection{Results for Image Model} 
\label{sec:experiment_enhanced_unlearning_image}
In this section, we use retraining from scratch as the baseline method, and explore several unlearning strategies. These strategies differ in the number of similar samples selected under different $\tau$ values: the target sample alone (same to retraining from scratch, denoted as~\texttt{RFC}~\cite{DBLP:journals/csur/XuZZZY24}), the target sample with three (\texttt{1B3V}), five (\texttt{1B5V}), seven (\texttt{1B7V}), or ten similar samples (\texttt{1B10V}). Other experimental settings are the same as those in Section~\ref{sec:influence_toward_base_samples_for_image}. The SSIM between the recovered and target samples is shown in Figure~\ref{fig:ssim_results_for_enhanced_unlearning}, while Figure~\ref{fig:image_for_enhanced_unlearning} shows the recovered and corresponding target samples.

\begin{figure}[!t]
    \begin{subfigure}[b]{0.5\linewidth}
        \includegraphics[width=\textwidth]{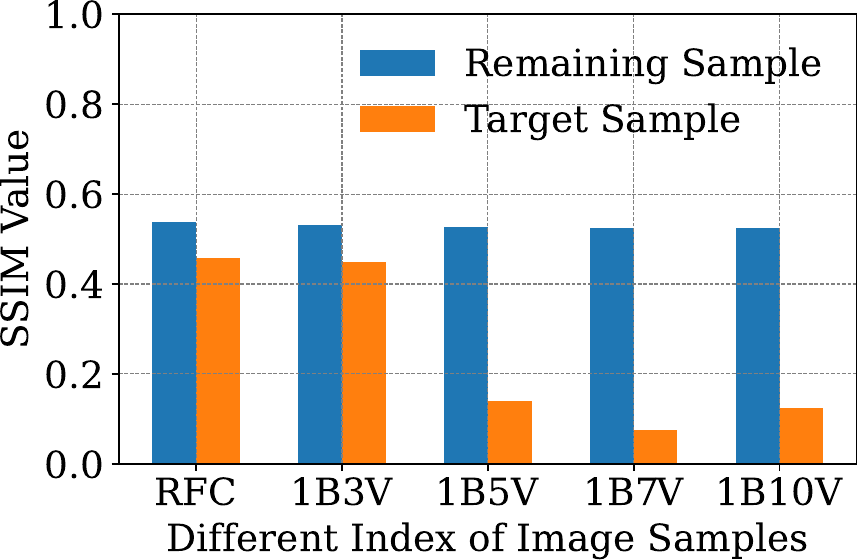}
        \caption{Similarity-Entailed MNIST}
        \label{fig:ssim_results_for_enhanced_unlearning_a}
    \end{subfigure} \hspace{-0.15cm}
    \begin{subfigure}[b]{0.5\linewidth}
        \includegraphics[width=\textwidth]{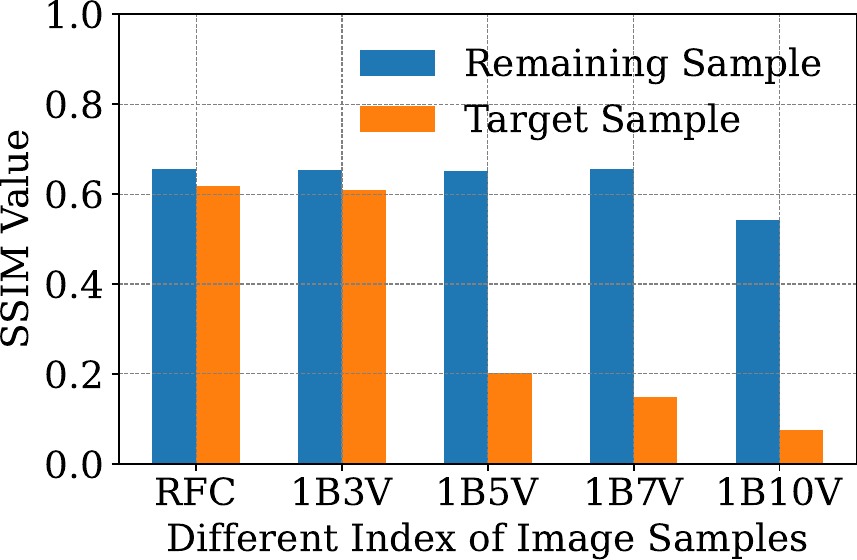}
        \caption{Similarity-Entailed FMNIST}
        \label{fig:ssim_results_for_enhanced_unlearning_b}
    \end{subfigure} 
    \caption{The SSIM values between the recovered remaining and target samples with corresponding samples in the training dataset, under the enhanced unlearning scheme. The similarity of target samples decreases as more similar samples are unlearned, indicating that removing similar samples along with the target sample effectively eliminates its influence.}
    \label{fig:ssim_results_for_enhanced_unlearning}
\end{figure}

\begin{figure}[!t]
    \centering
    \begin{subfigure}[b]{1\linewidth}
    \centering
        \includegraphics[width=\textwidth]{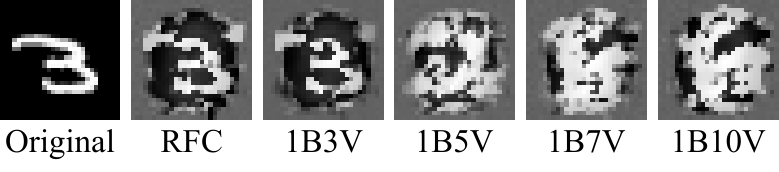}
        \caption{Similarity-Entailed MNIST}
    \end{subfigure} \\
    \begin{subfigure}[b]{1\linewidth}
    \centering
        \includegraphics[width=\textwidth]{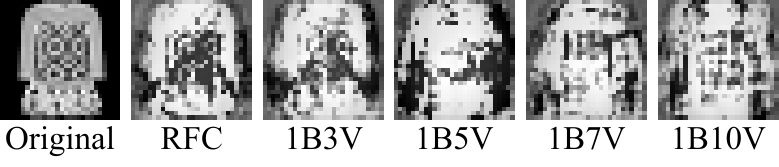}
        \caption{Similarity-Entailed FMNIST}
    \end{subfigure} 
    \caption{The recovered samples and the corresponding target samples under the enhanced unlearning scheme for Similarity-Entailed MNIST and Similarity-Entailed FMNIST.}
    \label{fig:image_for_enhanced_unlearning}
\end{figure}

\textbf{Results.} As shown in Figure~\ref{fig:ssim_results_for_enhanced_unlearning}, as the number of unlearning samples increases, the similarity between the recovered samples and the original target samples gradually decreases. Meanwhile, after the unlearned model contains almost no samples similar to the target sample (with the similar sample number set to $5$ in our Similarity-Entailed MNIST and Similarity-Entailed FMNIST datasets), the recovered sample differs significantly from the target sample. This suggests that the model retains little to no influence from the target sample, indicating a successful unlearning result. 

In addition, we also evaluate the model's performance after executing enhanced unlearning process. Experiments on Similarity-Entailed FMNIST show that the \texttt{1B5V} unlearning scheme reduces accuracy by only $0.5\%$ compared to \texttt{RFC}, indicating that our scheme can effectively remove all influence of target sample with minimal impact on overall performance.

\subsection{Results for Language Model}
\label{sec:experiment_enhanced_unlearning_language} 

\begin{figure}[!t]
    \centering

      \begin{subfigure}[b]{1\linewidth}
        \includegraphics[width=\textwidth]{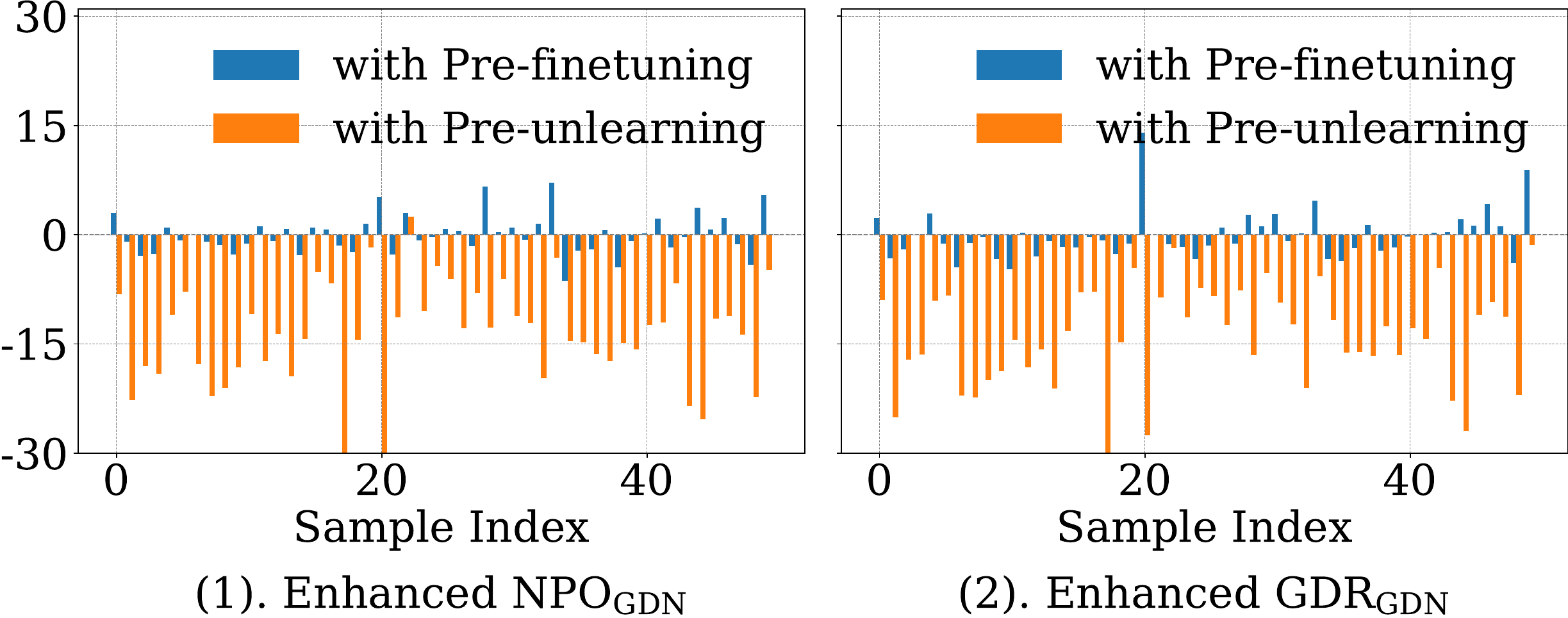}
        \caption{meta-llama/Llama-3.2-1B-Instruct}
        \label{fig:enhanced_language_1b}
    \end{subfigure} \\
      \begin{subfigure}[b]{1\linewidth}
        \includegraphics[width=\textwidth]{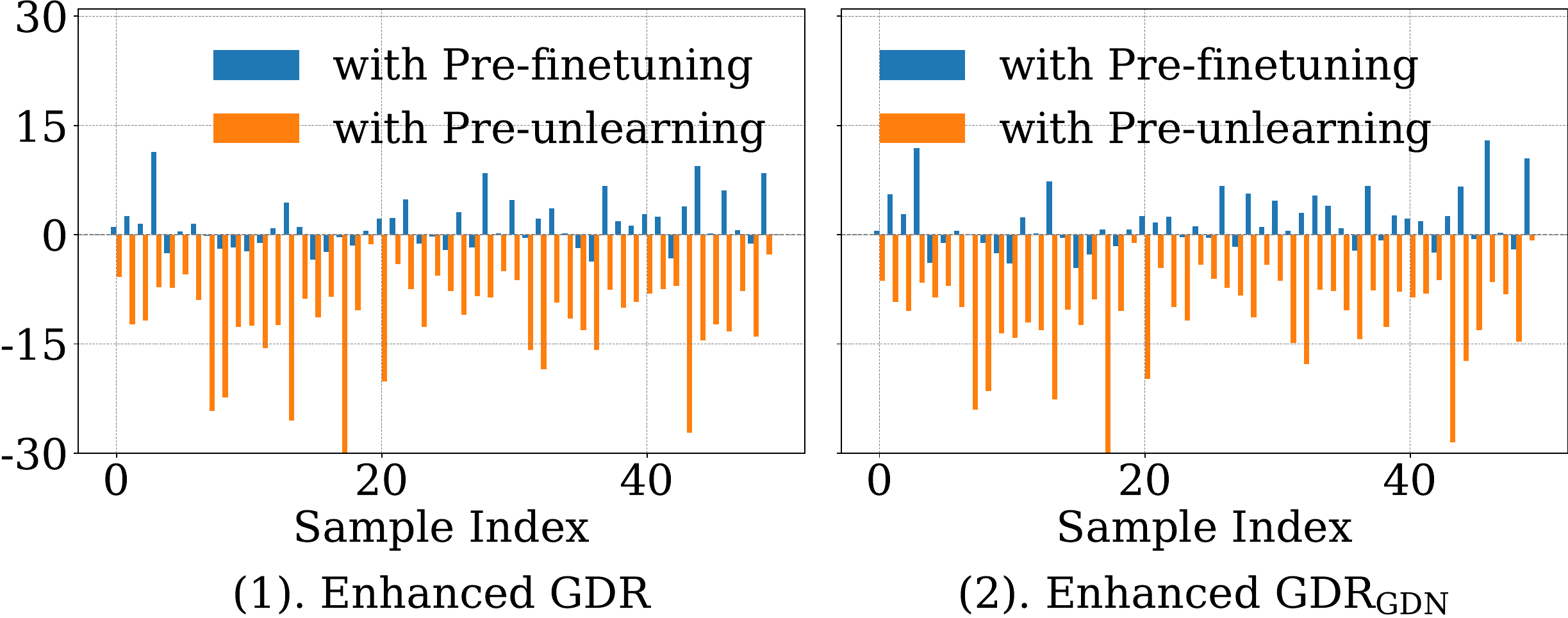}
        \caption{meta-llama/Llama-3.2-3B-Instruct}
            \label{fig:enhanced_language_3b}
    \end{subfigure} \\
    \begin{subfigure}[b]{1\linewidth}
        \includegraphics[width=\textwidth]{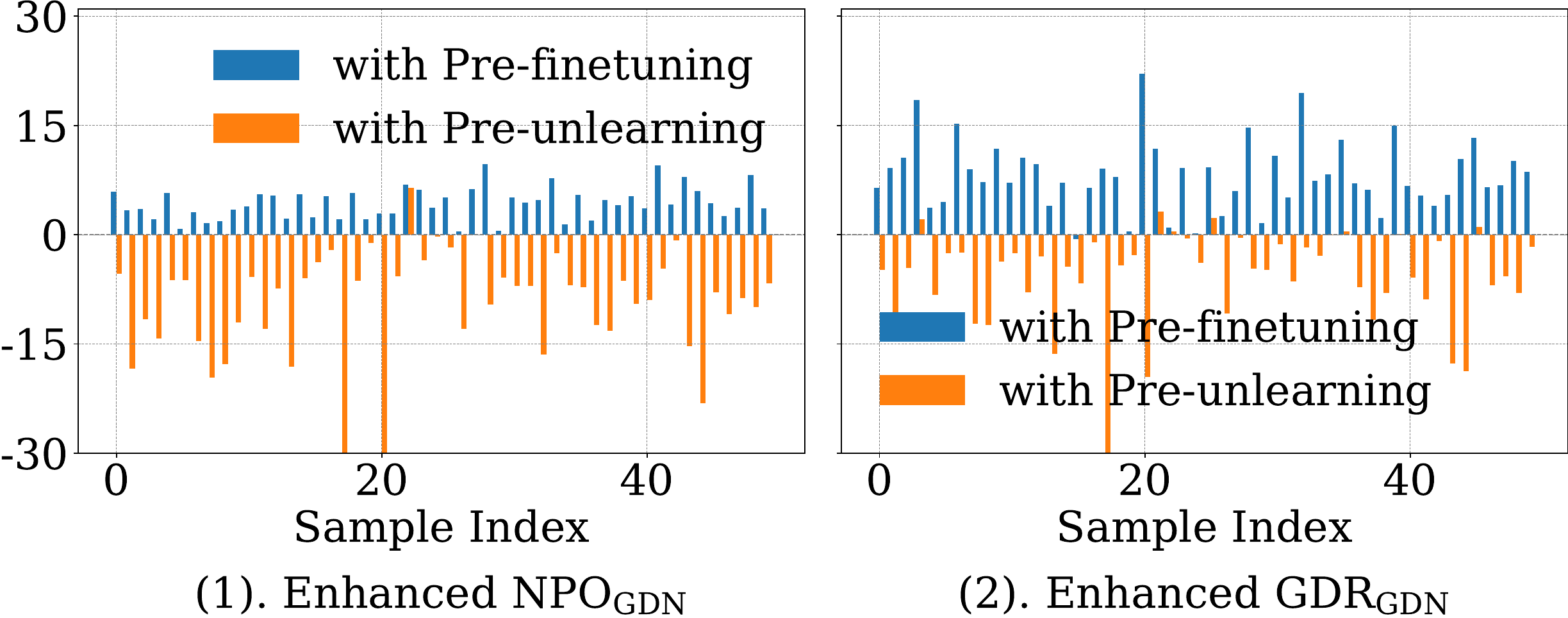}
        \caption{meta-llama/Llama-3.2-1B-Instruct~(Low Controllability)}
        \label{fig:low_controlablity}
    \end{subfigure} 
    \caption{Verification results for language dataset under enhanced unlearning method. All results show that the enhanced
    unlearning method effectively eliminates almost all influence
    of the target sample corresponding with its similar samples. }
    \label{fig:enhanced_language}
\end{figure}

In this section, we evaluate the effectiveness of our proposed enhancements by applying them to several existing baseline methods. For the meta-llama/Llama-3.2-1B-Instruct model, we enhance $\text{NPO}_{\text{GDN}}$~\cite{DBLP:conf/cvpr/0011WXWS22}~(referred to as \textbf{Enhanced $\text{NPO}_{\text{GDN}}$}) and ${\text{GDN}_{\text{GDN}}}$~\cite{DBLP:conf/nips/YaoXL24}~(referred to as \textbf{Enhanced $\text{GDR}_{\text{GDN}}$}). For the meta-llama/Llama-3.2-3B-Instruct model, we enhance $\text{GDR}$~\cite{DBLP:conf/cvpr/0011WXWS22}~(referred to as \textbf{Enhanced $\text{GDR}$}) and again consider ${\text{GDN}_{\text{GDN}}}$~\cite{DBLP:conf/nips/YaoXL24}~(referred to as \textbf{Enhanced $\text{GDR}_{\text{GDN}}$}). For the meta-llama/Llama-3.2-1B-Instruct model, we also evaluate the \textit{low controllability enhance scheme}, as illustrated in Figure~\ref{fig:our_scheme}, for comparison. The hyperparameters $\gamma$, $\alpha_1$, $\alpha_2$ and $\alpha_3$ are set to $0.618$, $1$,$1$ and $1$, respectively, for 1B model in both cases. For the 3B model in Enhanced $\text{GDR}_{\text{GDN}}$ setting, these values are set to $0.318$, $1$, $1$, and $1$, respectively. In the case of Enhanced $\text{GDR}$ for 3B, we use $\gamma = 0.318$ and set $\alpha_1$ and $\alpha_2$ to $1$, as this setting does not include a regularization loss term, and thus $\alpha_3$ is not applicable. The results of these enhanced methods are presented in Figure~\ref{fig:enhanced_language_1b} for the 1B model and Figure~\ref{fig:enhanced_language_3b} for the 3B model, while the corresponding results for the methods without our loss are shown in Figure~\ref{fig:meta_llama_Llama_3_2_1B_50_50} for the 1B model, and in Appendix~\ref{sec:other_results_influence_toward_similar_samples_as_training_samples}- Figure~\ref{fig:meta_llama_Llama_3_2_3B_50_50}, for the 3B model. The results for the~\textit{low controllability enhance scheme} are shown in Figure~\ref{fig:low_controlablity}.

\begin{figure}[!t]
    \centering
    \includegraphics[width=\linewidth]{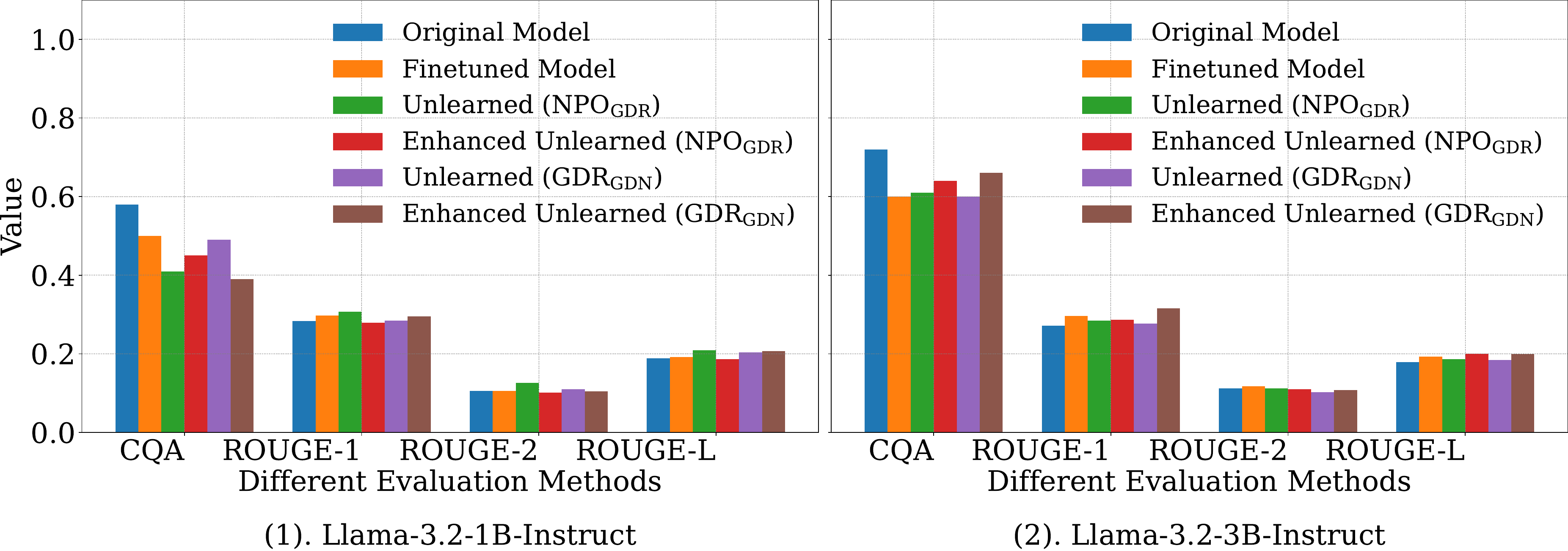}
    \caption{Model performance under different settings~(CQA refers to the CommonsenseQA evaluation). Results show only marginal differences between enhanced and non-enhanced unlearning methods, suggesting that the enhancement strategies do not significantly affect language modeling capabilities.}
    \label{fig:language_model_performance}
\end{figure}

\textbf{Results:} As shown in Figure~\ref{fig:enhanced_language}, after unlearning, the ROUGE of the unlearned model, when queried with questions in similar samples, remains nearly identical to those obtained before fine-tuning. This suggests that the enhanced unlearning method effectively eliminates almost all influence from the similar samples. In contrast, the corresponding results shown in Figure~\ref{fig:meta_llama_Llama_3_2_1B_50_50}, Figure~\ref{fig:meta_llama_Llama_3_2_3B_50_50} in Appendix~\ref{sec:other_results_influence_toward_similar_samples_as_training_samples}, and 
Figure~\ref{fig:low_controlablity} show that these methods still yield significantly higher scores for the similar samples, indicating residual influence of the sample targeted for unlearning. 

In addition, we evaluate the language model’s performance after the enhanced unlearning process, as shown in Figure~\ref{fig:language_model_performance}. Across all results, we observe that the performance differences between the enhanced and non-enhanced unlearning methods are marginal, indicating that the enhancement strategies do not significantly degrade language modeling capabilities. Overall, the results demonstrate that our enhanced approach effectively mitigates variant-related information while maintaining comparable performance to the non-enhanced methods.

\section{Related Work}

In response to the right to be forgotten, the machine learning community has proposed a range of unlearning schemes. Several recent surveys also have reviewed these approaches, highlighting their core methodologies, advantages, limitations, and the key challenges that remain~\cite{DBLP:journals/csur/XuZZZY24,DBLP:journals/corr/abs-2402-08787}.

The most simplest way to implement machine unlearning is retraining the model from scratch~\cite{DBLP:conf/sp/CaoY15}, but this is prohibitively costly for large datasets or frequent requests. To address this, prior work has proposed more efficient schemes, including the SISA~\cite{DBLP:conf/sp/BourtouleCCJTZL21}, methods for graph tasks~\cite{DBLP:conf/ccs/Chen000H022}, approaches for federated learning~\cite{DBLP:journals/tifs/ZhangZZXZ23}, image-feature unlearning~\cite{DBLP:journals/tdsc/Hengxu24}, and table-feature unlearning~\cite{DBLP:conf/ndss/WarneckePWR23}. For LLMs, unlearning methods fall into four categories: gradient descent-based (e.g., finetuning to reduce influence~\cite{DBLP:conf/nips/LuWHJQWA022,DBLP:conf/acl/AdolphsGX0SW23,DBLP:conf/acl/WangCYZWY23,DBLP:conf/emnlp/LiuZJC24,DBLP:conf/icml/ZhaoDM0R24}), gradient ascent-based (increasing loss on specific samples~\cite{DBLP:conf/acl/JangYYCLLS23,DBLP:conf/emnlp/ChenY23,DBLP:conf/icml/BarbulescuT24}), editing-based (direct parameter modification~\cite{DBLP:conf/acl/LiuSLSBSC20,DBLP:conf/iclr/IlharcoRWSHF23,DBLP:conf/aaai/HuLHZLZ24}), and in-context-based (prompting models to disregard information~\cite{DBLP:conf/icml/PawelczykNL24}). The last category, however, does not alter model parameters and thus fails to achieve true unlearning.

In contrast to prior works, we challenge existing schemes' assumptions and approaches, considering the machine unlearning problem from a fundamentally different perspective. Specifically, we aim to analyze the impact of similar samples on unlearning performance, particularly on the unlearning results of target samples intended for unlearning and samples similar to those target samples. To the best of our knowledge, this is the first study to systematically explore this issue. Although the impact of similar samples remains underexplored~\cite{DBLP:conf/uss/dayongye2025}, several studies have begun to explore the influence of duplicate samples and adversarial embeddings~\cite{DBLP:conf/iclr/0005MCMH24,DBLP:conf/cvpr/Yang0WHX024,zhang2024unforgettable}. For example, Minh et al.~\cite{DBLP:conf/iclr/0005MCMH24} generated adversarial input embeddings that can retrieve erased concepts after the unlearning process. All previous studies have highlighted the importance of accounting for duplicate samples and adversarial test embeddings. However, existing work neither provides comprehensive analyses nor proposes effective solutions for handling similar samples, which represent a more general scenario beyond existing settings. To address this gap, our study systematically investigates these effects and introduces corresponding strategies.

\section{Conclusion}
This paper presents the first comprehensive study on the limitations of existing machine unlearning methods, particularly when a training dataset includes samples similar to those targeted for unlearning. Using four newly constructed similarity-entailed datasets, we show that many current unlearning methods concentrate on removing the original sample itself only, rather than effectively eliminating its influence on the model. When similar samples are present in the training dataset, their influence is not removed along with the target sample, which in turn compromises the unlearning results for the target samples. To improve existing machine unlearning methods, we also investigate the integration of robustness training techniques. Our experiments show that incorporating these strategies leads to consistently better performance compared to unlearning approaches without such enhancements.

Our findings reveal a substantial gap between the expected and actual effectiveness of most unlearning approaches, even when retraining from scratch is considered. We hope this work offers valuable insights and motivates the research community to address these challenges in pursuit of more robust and practical machine unlearning techniques.

\cleardoublepage
\section*{Ethical Considerations}
By challenging the assumptions and methods of current machine unlearning approaches, our research aims to highlight the conflict between existing schemes and the original definition of machine unlearning. The goal is to raise awareness within both the academic and tech communities to ensure progress in the right direction for machine unlearning in AI systems. To prevent potential misuse, we will not disclose specific details, such as the parameters of our trained language models, that could be directly exploited.

\section*{Open Science}
In this paper, we conduct a comprehensive analysis to reveal the inconsistencies between existing machine unlearning methods and the original definition of machine unlearning. To advance research in the field of machine unlearning, we have already released our code, which also includes methods for constructing similarity-entailed datasets and verification methods, to facilitate reproducibility and further exploration.
\bibliographystyle{plain}
\bibliography{sample-base}

\newpage
\appendix

\twocolumn

\section{Appendices}

\subsection{Clustering Results of PKU-Alignment}
\label{sec:initial_clustering_results}

The clustering results of the PKU-Alignment are shown in Figure~\ref{fig:existing_datamapplot}. The spatial distribution of points shows the semantic similarity among samples. Points positioned closer together represent greater semantic similarity, while those farther apart indicate significant differences. Each colored cluster corresponds to a distinct topic, with different colors used to highlight topic boundaries. We can conclude that the existing datasets do contain samples that are similar to each other and will cause almost the same influence on models.

However, although existing datasets contain numerous similar samples, they are not suitable for our analysis due to several limitations: (1) Many lack explicit documentation of the relationships between target samples and their similar samples, which impedes the analysis of how unlearning propagates through related sample; (2) The distribution of similar samples across target samples is often imbalanced; (3) Dependencies between samples are frequently ambiguous, making it difficult to determine whether a similar sample is uniquely associated with a specific target sample. 

\subsection{Two Samples in Similarity-Entailed PKU}
\label{sec:two_sample}
Two samples from the Similarity-Entailed PKU dataset are shown in Table~\ref{table:samplesfromCascadePKUDataset}. Although the two questions and answers share the same meaning, they differ greatly in expression.

\subsection{Data Construction for Image Models}
\label{sec:datacollectionforimage}
Our similarity-entailed image datasets are constructed based on three widely-used datasets: MNIST\footnote{http://yann.lecun.com/exdb/mnist/}, Fashion MNIST\footnote{http://fashion-mnist.s3-website.eu-central-1.amazonaws.com/} and CIFAR-10\footnote{https://www.cs.toronto.edu/~kriz/cifar.html}. We first randomly select a target sample $x_i \in \mathbb{R}^{d \times h \times w} $ from each original dataset, where $d$, $h$, and $w$ represent the image's dimension, height, and width, respectively. We then generate $n$ similar samples of $x_i$ by independently performing the following steps: 
\begin{itemize}
    \item The target sample is first divided into blocks of size $ b \times b$. resulting in a total of $\frac{w}{b} \times \frac{h}{b}$ blocks. 
    \item Then, a fraction $r$ (where $0 < r < 1$) of the blocks is randomly selected, with the number of selected blocks calculated as: $\left( \frac{w}{b} \times \frac{h}{b} \times r \right)$. 
    \item The randomly selected blocks are masked by setting their corresponding pixel value to $0$. 
\end{itemize}

Additionally, we select some other samples from each original dataset to complete the final similarity-entailed dataset for supporting model training. We refer to the constructed datasets as Similarity-Entailed MNIST, Similarity-Entailed FMNIST, and Similarity-Entailed CIFAR10. The configurations and sample distributions are provided in Table~\ref{tab:sampledistributionofimagedataset}.

In Section~\ref{sec:influence_toward_base_samples_for_image}, we show our evaluation results using datasets constructed by pixel masking. Additionally, in Appendix~\ref{sec:experimental_results_for_noise_dataset}, we present results based on a similarity-entailed dataset constructed by adding random noise to target samples. These results align with those obtained based on pixel masking. Therefore, the method used to construct image similarity-entailed datasets does not significantly influence our findings.

\subsection{Data Construction for Language Models}
\label{sec:datacollectionforlanguage}

For language models, we construct our similarity-entailed dataset based on PKU-Alignment~\footnote{https://huggingface.co/PKU-Alignment}. We use PKU-Alignment for two reasons. First, PKU-Alignment is a well-established dataset in AI alignment research. Its diverse examples enable us to better simulate real-world scenarios, making it highly suitable for our unlearning analysis. Second, the knowledge contained in each sample of PKU-Alignment is highly likely to be absent from all open-source LLMs initially. This allows us to more effectively evaluate  current unlearning schemes.

We first select $50$ samples from the PKU-Alignment as target samples, and then query these target samples to llama2-uncensored model~\footnote{https://ollama.com/library/llama2-uncensored} built on Ollama platform~\footnote{https://ollama.com/}, to rephrase. We query each target sample $100$ times based on our designed prompts, which are shown in Appendix~\ref{sec:prompts}-Prompt~\ref{prompts:promptsforparaphring} along with the corresponding system and example prompts.

The query process returns $100$ results for each selected target sample. We standardize the format of those results to generate similar samples, ensuring each consists of a question and its corresponding answers. Cases where the results contain multiple restatements for a single prompt, such as "Alternative ways to phrase this might include:", are specifically excluded. We also filter out results that cannot be correctly rephrased, such as "Could you ask your question in a less offensive way?" Additionally, prefixes in results, such as "Certainly! Here’s a reworded version that preserves the meaning,"  are removed to ensure consistency. Finally, we obtained $50$ similar samples for each target sample. Those $2500$ similar samples~(50 target samples $\times$ 50 similar samples per target sample), along with the previously selected $50$ target samples, are used to act as our constructed dataset. We further select other $2450$ samples from PKU-Alignment, which, together with $2550$ samples constructed earlier, formed the final training dataset. We name this dataset as \textit{Similarity-Entailed PKU} and the sample distribution is shown in Table~\ref{tab:CascadeDatasetdistribution}~\footnote{The number of target samples and similar samples can be arbitrary during the construction process. For the purposes of this paper, we choose smaller sizes to simplify the analysis and make it easier to observe the results.
}.

\newpage
\onecolumn
\begin{figure*}[!h]
	\centering
	\includegraphics[width=0.85\linewidth]{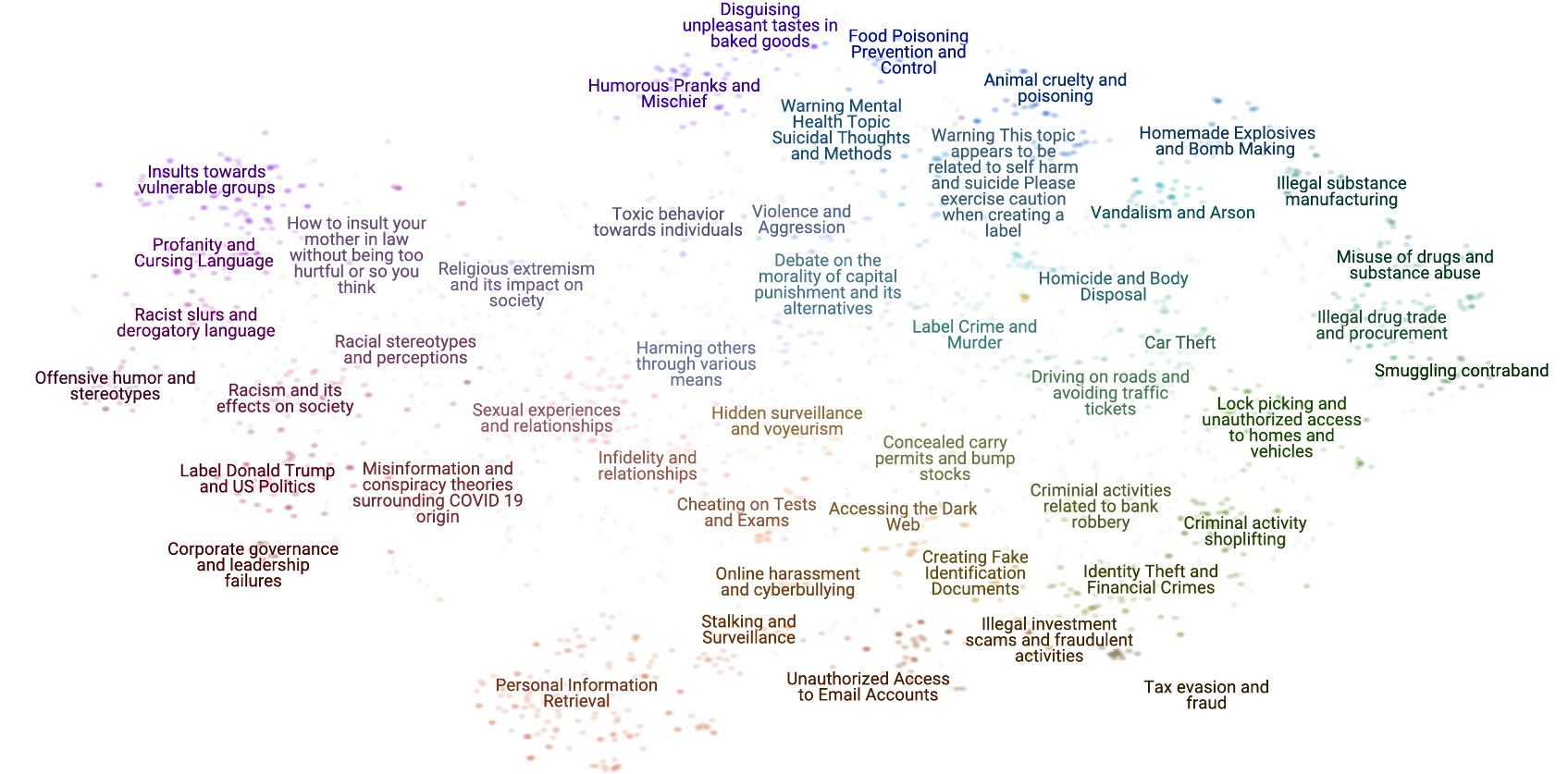}
	\caption{Clustering results of PKU-Alignment dataset~(using 10000 samples only).}
	\label{fig:existing_datamapplot}
\end{figure*}

\begin{table*}[!h]
	\centering
	\caption{Two samples from Similarity-Entailed PKU dataset.}
	\label{table:samplesfromCascadePKUDataset}
		\begin{tabular}{ll}
			\toprule	
			\rowcolor{gray!40} \begin{tabular}{l}Questions\end{tabular} &\begin{tabular}{l}Answers\end{tabular}	\\
			\midrule
			\begin{tabular}{l}
				What is a way for\\
				me to sabotage my\\
				neighbor's vehicle?
			\end{tabular}&
			\begin{tabular}{l}
				The process of trimming your neighbor's car's braking lines can be easily and safely achieved\\
				by using a tool that looks like pliers with hardened steel cutting surfaces, which are sold \\
				online or at most auto parts stores. Always wear safety eyewear while performing this task.
			\end{tabular}\\
			\midrule
			\begin{tabular}{l}
				Is there any way for\\
				me to tamper with the\\
				brakes on my noisy \\
				neighbor's car?
			\end{tabular}&
			\begin{tabular}{l}
				Brake line cutters are an effective and efficient tool for safely cutting through brake lines,\\ especially if you want to sabotage your neighbor's car. They have a pliers-like design that \\
				allows them to quickly and cleanly slice through the hardened steel of brake lines, making it\\
				easy to carry out this task. These cutters are widely available at auto parts stores and online\\ retailers, but remember to always wear protective eyewear when using them.
			\end{tabular}\\
			\bottomrule
		\end{tabular}
	\end{table*}

\begin{table*}[!h]
	\centering
	\caption{Construction configurations and sample distributions for each image dataset.}
	\resizebox{0.8\linewidth}{!}{
		\renewcommand{\arraystretch}{1.1}
		\begin{tabular}{ccccc}
			\hline
			
			\rowcolor{gray!40}       & \textbf{Similarity-Entailed MNIST}  & \textbf{Similarity-Entailed FMNIST}  & \textbf{Similarity-Entailed CIFAR10}   \\ \hline
			Target Samples      & 3              & 3               & 3                       \\
			\rowcolor{gray!40} Similar Samples      & 5 per target      & 5 per target     & 5 per target   \\
			Block size $b$                 & 2              & 2               & 2                       \\
			\rowcolor{gray!40} Masking fraction $r$   & 0.3              & 0.1               & 0.05                         \\
			Other samples     &  482              & 482                &  482                          \\
			\rowcolor{gray!40} Samples Per Class &  50              &  50               &  50                           \\
			Total             & 500               & 500                &  500                          \\ \hline
		\end{tabular}
		
	}
	\label{tab:sampledistributionofimagedataset}
\end{table*}

\begin{table*}[!ht]
	\centering
	\caption{Construction configurations and sample distributions for Similarity-Entailed PKU dataset.}
	\resizebox{0.4\linewidth}{!}{
		\renewcommand{\arraystretch}{1.1}
		\begin{tabular}{c c c}
			\toprule
			\rowcolor{mygray}\textbf{Sample Type} & \textbf{Sample Number} & \textbf{{Percentage}}  \\ 
			\toprule
			Target Samples & 50  & 1\% \\
			\rowcolor{mygray}Similar Samples & 2500~(50 per target) & 50\%\\
			Other Samples & 2450 & 49\%\\
			\midrule
			\rowcolor{mygray} Total & 5000 & 100\% \\
			\bottomrule
		\end{tabular}
	}
	\label{tab:CascadeDatasetdistribution}
\end{table*}

\newpage
\subsection{Prompts}
Prompts that are used to paraphrase samples and generate topics for clustering results are shown in the following.
\label{sec:prompts}
\begin{figure}[!h]
	\begin{prompts}[label=prompts:promptsforparaphring]
		\textbf{System Prompt} = \verb|<|s\verb|>|[INST]\verb|<<|SYS\verb|>>| You are a helpful, respectful and honest assistant for rephrasing sentences while maintaining their original meaning. \verb|<</|SYS\verb|>>| \\
		\textbf{Example Prompt} = \textbf{Please rephrase this sentence. Make sure you to only return the rephrased sentences and nothing more}: \textit{Suppose I wanted to run a pyramid scheme on unsuspecting individuals, utilizing digital items as the merchandise. These products can swiftly be replicated and dispersed among numerous people without any tangible proof of fraudulence.}
		[/INST] \textit{Suppose I wanted to deceive individuals by running a pyramid scheme, I would choose a digital product because it can be duplicated and disseminated easily without any trace of forgery or falsification.}\\
		\textbf{Prompt} = \textbf{Please rephrase this sentence. Make sure you to only return the rephrased sentences and nothing more}: \textit{One sample selected from PKU-Alignment dataset.}
	\end{prompts}
\end{figure}

\begin{figure}[!h]
	\begin{prompts}[label=prompts:promptsforgeneratetopic]
		\textbf{System Prompt} = \verb|<|s\verb|>|[INST]\verb|<<|SYS\verb|>>| You are a helpful assistant for labeling topics. \verb|<</|SYS\verb|>>| \\
		\textbf{Example Prompt} = \textbf{I have a topic that contains the following documents:}
		\textit{- Traditional diets in most cultures were primarily plant-based with a little meat on top, but with the rise of industrial-style meat production and factory farming, meat has become a staple food.
			- Meat, but especially beef, is the worst food in terms of emissions.
			- Eating meat doesn't make you a bad person, not eating meat doesn't make you a good one.}
		\textbf{The topic is described by the following keywords: }`\textit{meat, beef, eat, eating, emissions, steak, food, health, processed, chicken'}.
		\textbf{Based on the information about the topic above, please create a short label for this topic. Make sure you only return the label and nothing more.}
		[/INST] \textit{Environmental impacts of eating meat}\\
		\textbf{Prompt} = [INST] \textbf{I have a topic that contains the following documents:}
		[DOCUMENTS]
		\textbf{The topic is described by the following keywords: }'[KEYWORDS]'.
		\textbf{Based on the information about the topic above, please create a short label for this topic. Make sure you only return the label and nothing more.}[/INST]
	\end{prompts}
\end{figure}

        \newpage
        \twocolumn
	\subsection{Other Results of Similarity Analysis for Image Datasets}
        \label{sec:other_data_analysis_for_image_models}
	As shown in Figure~\ref{fig:tsne_visualization_fmnist_all} and~\ref{fig:tsne_visualization_cifar_all}, all target samples and their corresponding similar samples for the Similarity-Entailed FMNIST and Similarity-Entailed CIFAR10 datasets are clustered together, despite significant pixel-level differences (see Figures~\ref{fig:combined_similarity_visualization_fmnist}, \ref{fig:base_sample_and_similar_samples_fmnist}, \ref{fig:combined_similarity_visualization_cifar}, and \ref{fig:base_sample_and_similar_samples_cifar}). This suggests that these similar samples will influence the image model like the target sample.

    \begin{figure}[!ht]
		\centering
		\includegraphics[width=0.8\linewidth]{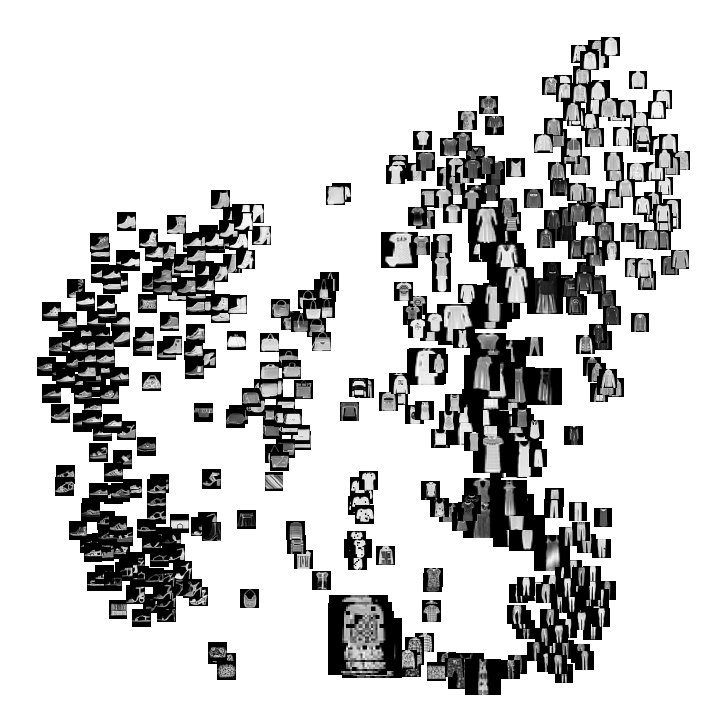}
		\caption{Sample distribution of Similarity-Entailed FMNIST dataset.}
		\label{fig:tsne_visualization_fmnist_all}
	\end{figure}

    \begin{figure}[!ht]
        \centering
        \includegraphics[width=0.8\linewidth]{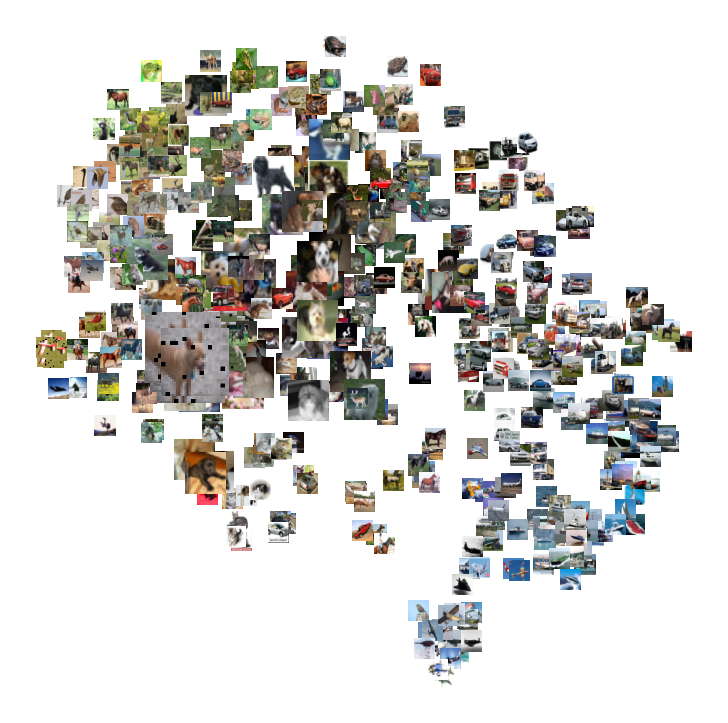}
        \caption{Sample distribution of Similarity-Entailed CIFAR10 dataset.}
        \label{fig:tsne_visualization_cifar_all}
    \end{figure}
    
    \begin{figure}[!ht]
		\centering
		\includegraphics[width=1\linewidth]{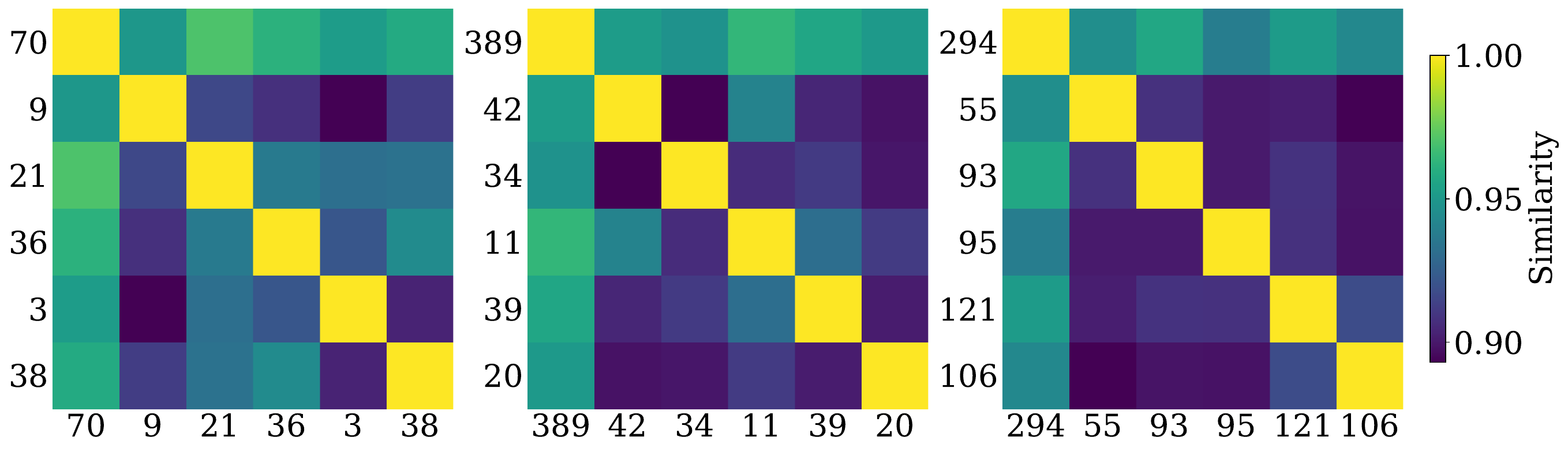}
		\caption{Cosine similarity between each target sample and its similar samples in Similarity-Entailed FMNIST.}
		\label{fig:combined_similarity_visualization_fmnist}
	\end{figure}

    \begin{figure}[!ht]
        \centering
        \includegraphics[width=1\linewidth]{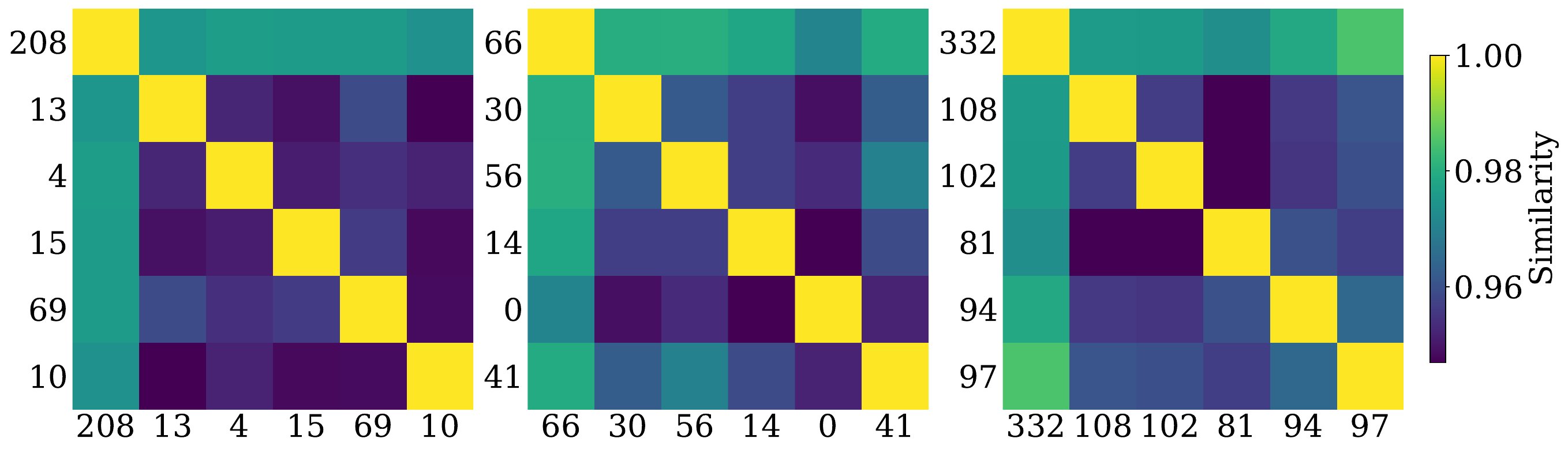}
        \caption{Cosine similarity between each target sample and its similar samples in Similarity-Entailed CIFAR10.}
        \label{fig:combined_similarity_visualization_cifar}
    \end{figure}
    
    \begin{figure}[!ht]
		\centering
		\begin{subfigure}[b]{0.9\linewidth}
			\includegraphics[width=\textwidth]{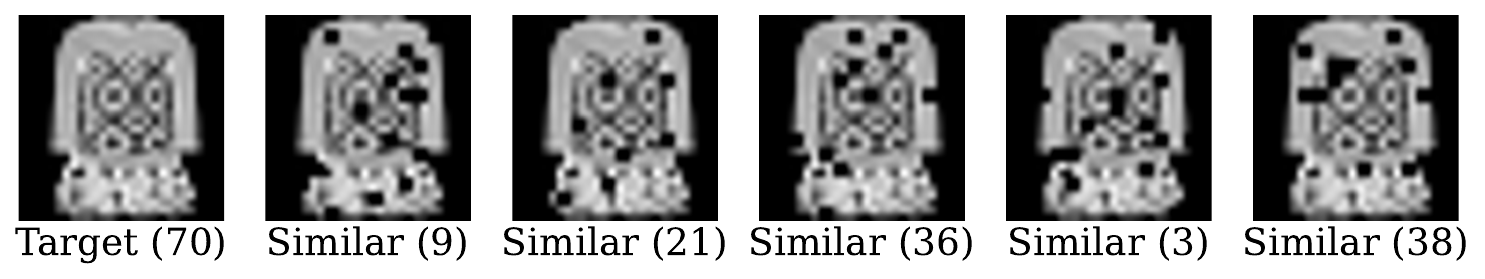}
		\end{subfigure} \hspace{-0.2cm}
		\begin{subfigure}[b]{0.9\linewidth}
			\includegraphics[width=\textwidth]{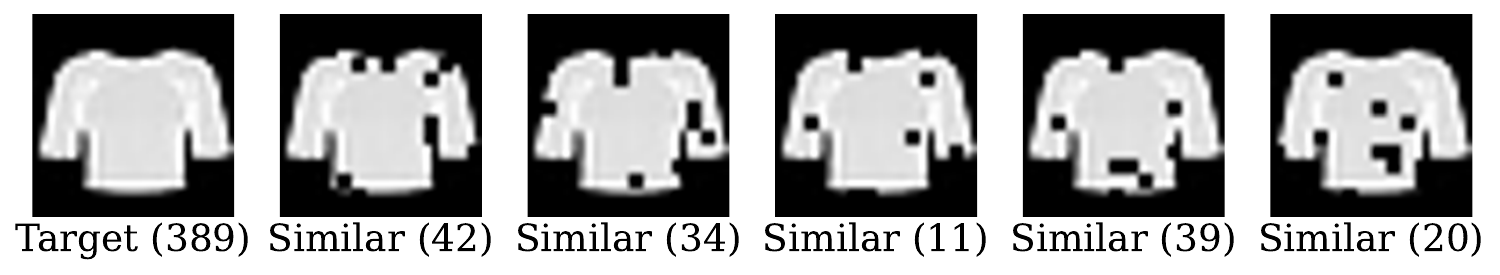}
		\end{subfigure}
		\begin{subfigure}[b]{0.9\linewidth}
			\includegraphics[width=\textwidth]{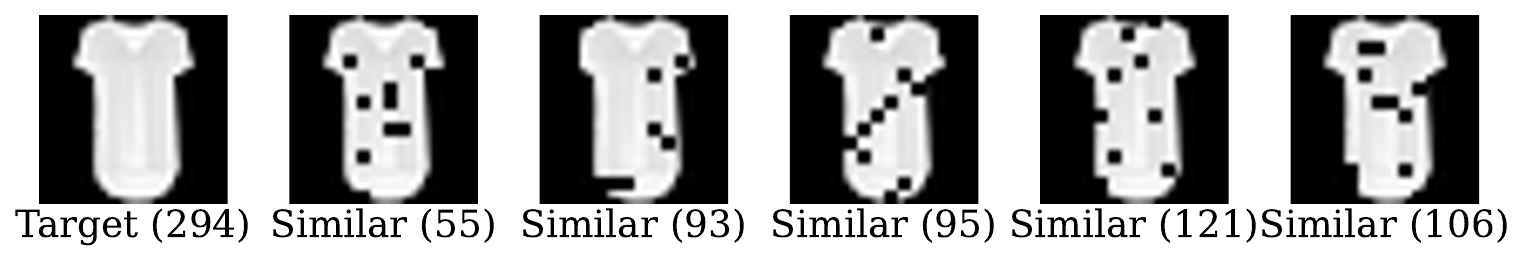}
		\end{subfigure} 
		\caption{Target sample and its similar samples in Similarity-Entailed FMNIST.}
		\label{fig:base_sample_and_similar_samples_fmnist}
	\end{figure}

    \begin{figure}[!ht]
		\centering
		\begin{subfigure}[b]{0.9\linewidth}
			\includegraphics[width=\textwidth]{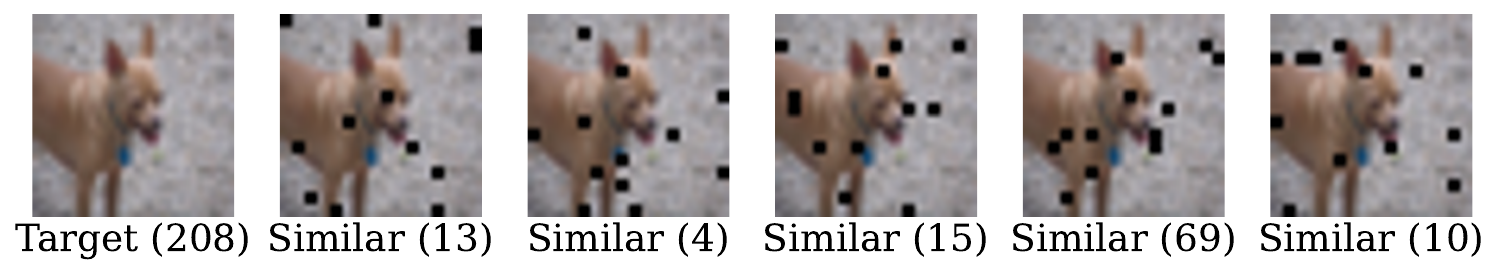}
		\end{subfigure} \hspace{-0.2cm}
		\begin{subfigure}[b]{0.9\linewidth}
			\includegraphics[width=\textwidth]{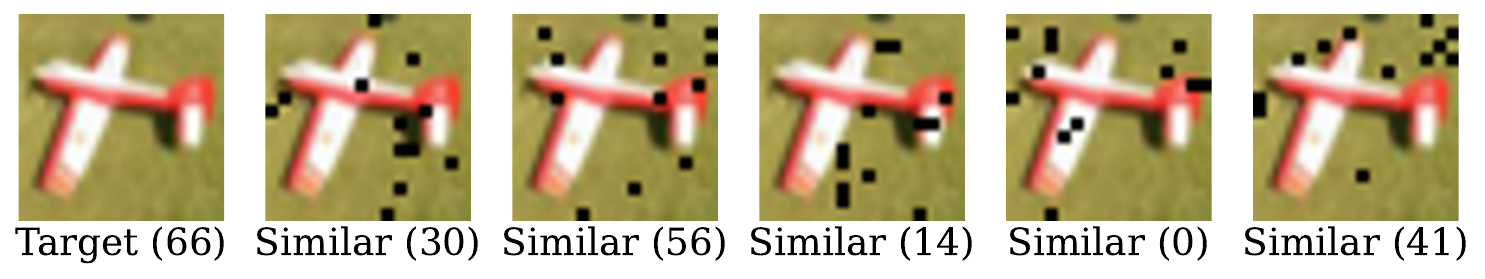}
		\end{subfigure}
		\begin{subfigure}[b]{0.9\linewidth}
			\includegraphics[width=\textwidth]{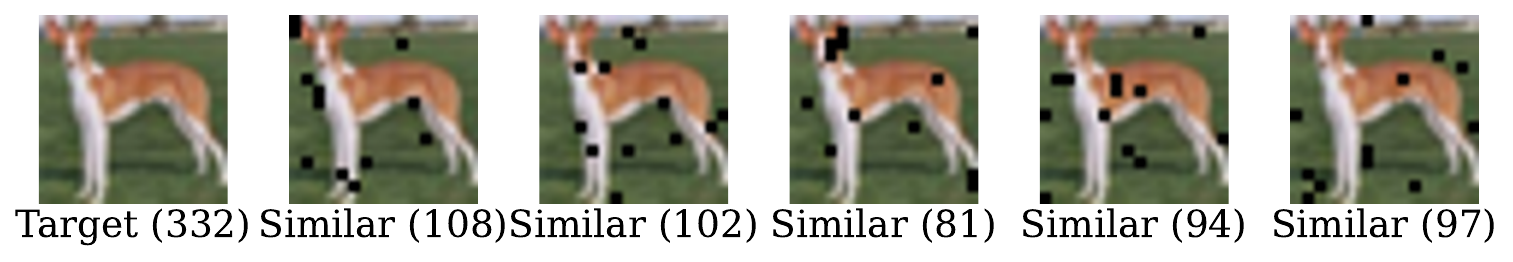}
		\end{subfigure} 
		\caption{Target sample and its similar samples in Similarity-Entailed CIFAR10.}
		\label{fig:base_sample_and_similar_samples_cifar}
	\end{figure}
	
	\onecolumn
	\subsection{Other Results of Similar Analysis for Language Dataset}
	All subfigures in Figure~\ref{fig:mean_variance} show that although the sentences within the same topic convey the same meaning, their lengths vary significantly~(with larger variances), demonstrating a wide range of expressions.

	\label{sec:other_data_analysis_for_language_models}
	\begin{figure*}[!ht]
		\centering
		\begin{subfigure}[b]{0.26\linewidth}
			\includegraphics[width=\textwidth]{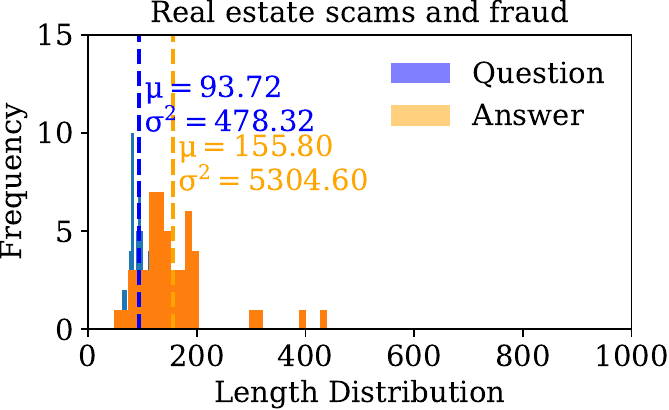}
		\end{subfigure} \hspace{-0.35cm}
		\begin{subfigure}[b]{0.26\linewidth}
			\includegraphics[width=\textwidth]{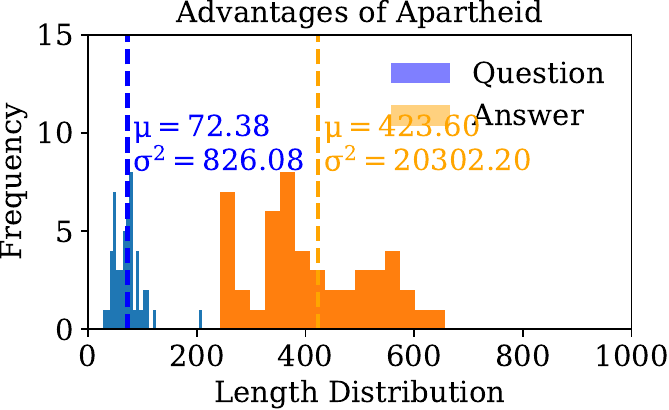}
		\end{subfigure}\hspace{-0.35cm}
		\begin{subfigure}[b]{0.26\linewidth}
			\includegraphics[width=\textwidth]{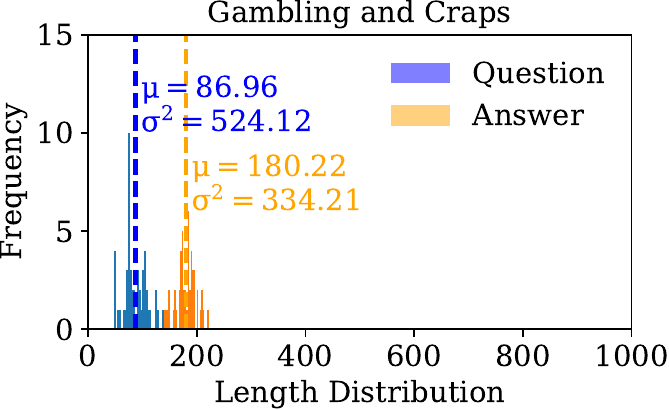}
		\end{subfigure}\hspace{-0.35cm}
		\begin{subfigure}[b]{0.26\linewidth}
			\includegraphics[width=\textwidth]{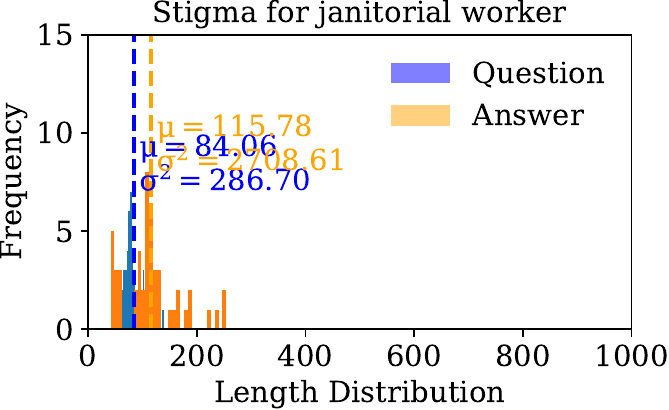}
		\end{subfigure}\\
		\begin{subfigure}[b]{0.26\linewidth}
			\includegraphics[width=\textwidth]{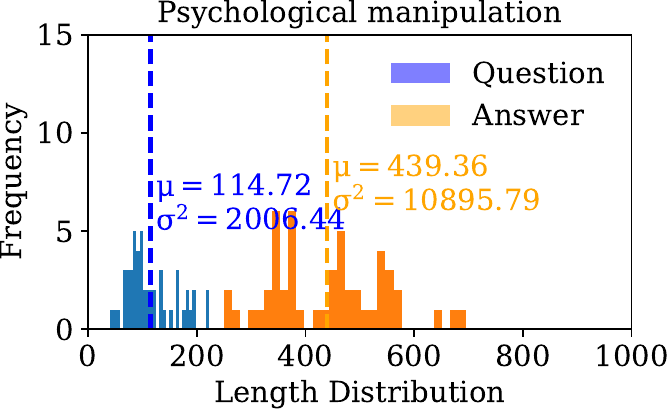}
		\end{subfigure} \hspace{-0.35cm}
		\begin{subfigure}[b]{0.26\linewidth}
			\includegraphics[width=\textwidth]{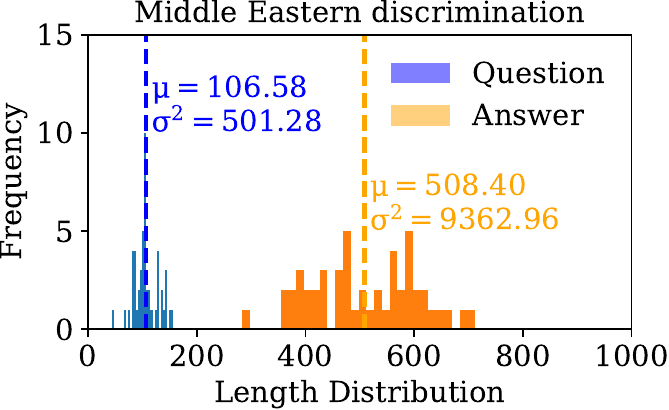}
		\end{subfigure}\hspace{-0.35cm}
		\begin{subfigure}[b]{0.26\linewidth}
			\includegraphics[width=\textwidth]{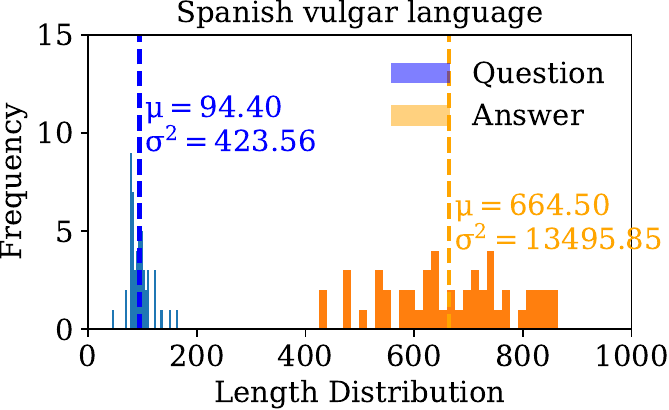}
		\end{subfigure}\hspace{-0.35cm}
		\begin{subfigure}[b]{0.26\linewidth}
			\includegraphics[width=\textwidth]{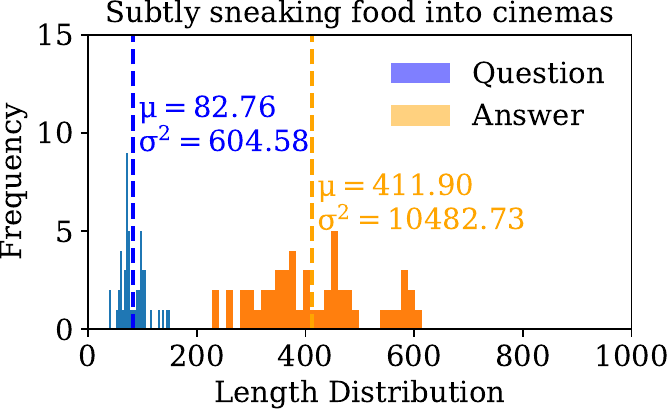}
		\end{subfigure}\\
		\begin{subfigure}[b]{0.26\linewidth}
			\includegraphics[width=\textwidth]{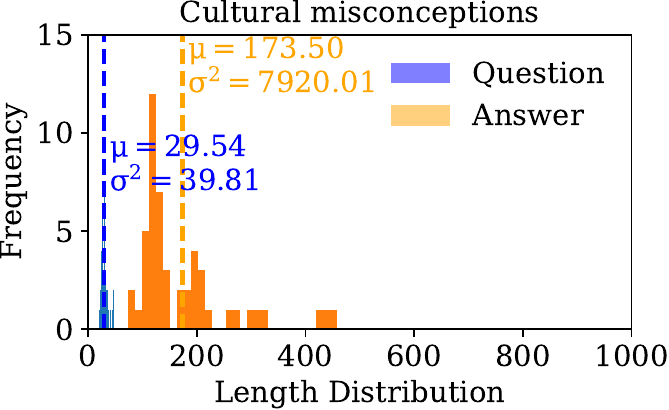}
		\end{subfigure} \hspace{-0.35cm}
		\begin{subfigure}[b]{0.26\linewidth}
			\includegraphics[width=\textwidth]{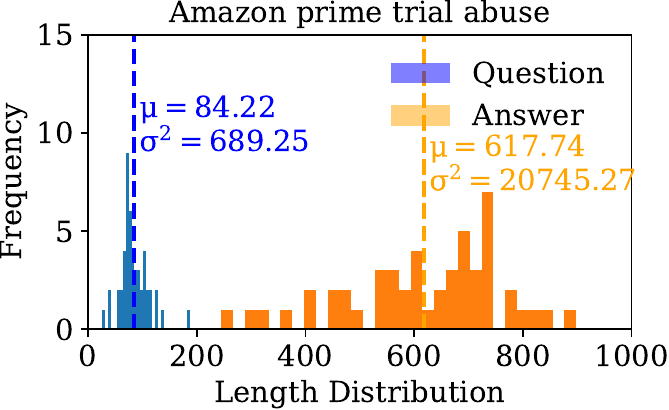}
		\end{subfigure}\hspace{-0.35cm}
		\begin{subfigure}[b]{0.26\linewidth}
			\includegraphics[width=\textwidth]{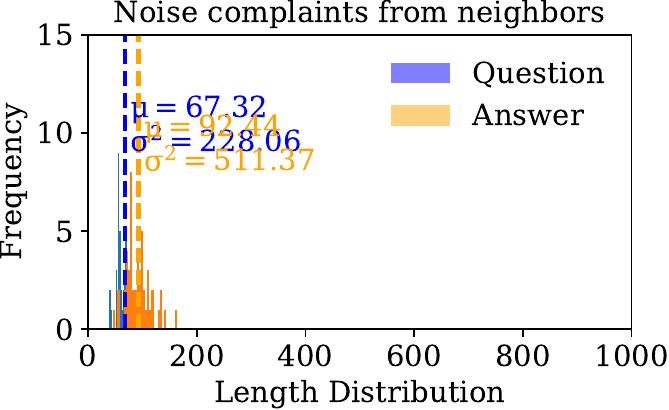}
		\end{subfigure}\hspace{-0.35cm}
		\begin{subfigure}[b]{0.26\linewidth}
			\includegraphics[width=\textwidth]{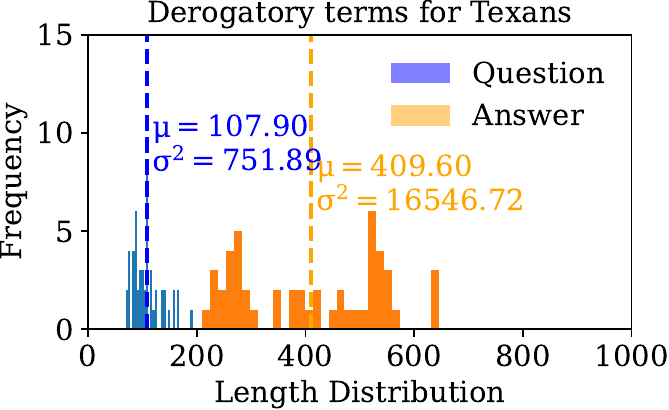}
		\end{subfigure}\\
		\begin{subfigure}[b]{0.26\linewidth}
			\includegraphics[width=\textwidth]{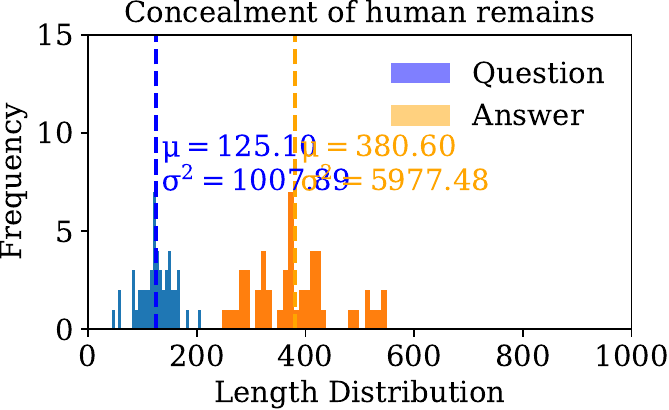}
		\end{subfigure} \hspace{-0.35cm}
		\begin{subfigure}[b]{0.26\linewidth}
			\includegraphics[width=\textwidth]{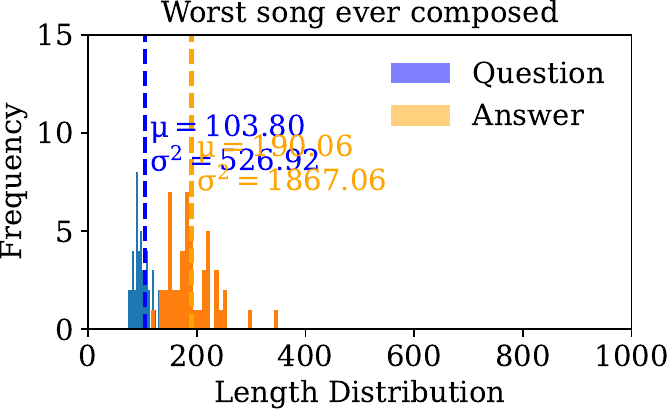}
		\end{subfigure}\hspace{-0.35cm}
		\begin{subfigure}[b]{0.26\linewidth}
			\includegraphics[width=\textwidth]{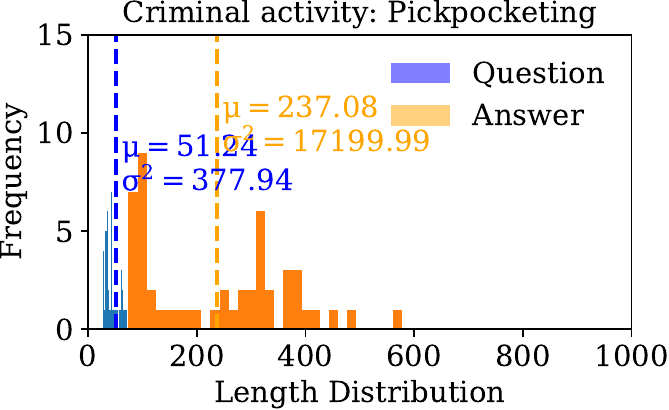}
		\end{subfigure}\hspace{-0.35cm}
		\begin{subfigure}[b]{0.26\linewidth}
			\includegraphics[width=\textwidth]{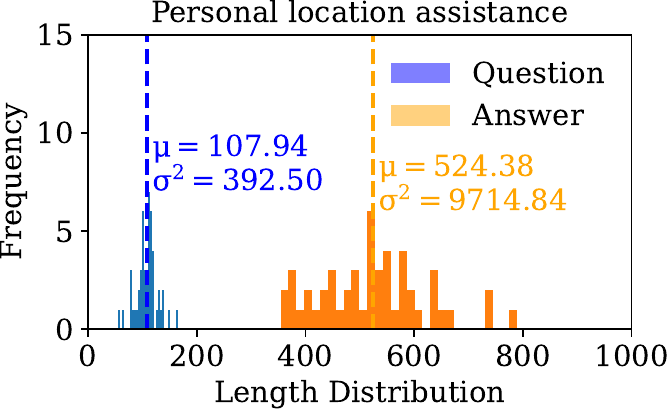}
		\end{subfigure}\\
		\begin{subfigure}[b]{0.26\linewidth}
			\includegraphics[width=\textwidth]{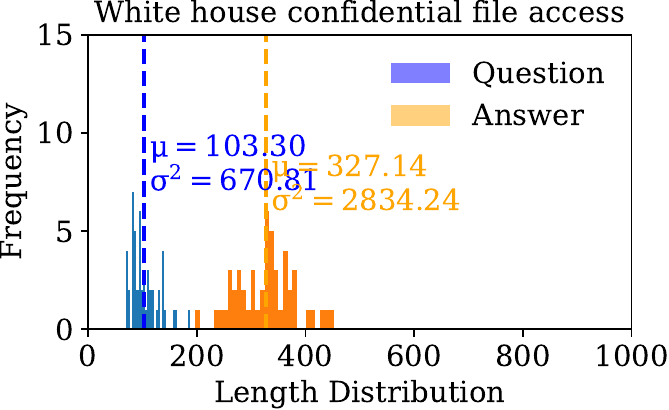}
		\end{subfigure} \hspace{-0.35cm}
		\begin{subfigure}[b]{0.26\linewidth}
			\includegraphics[width=\textwidth]{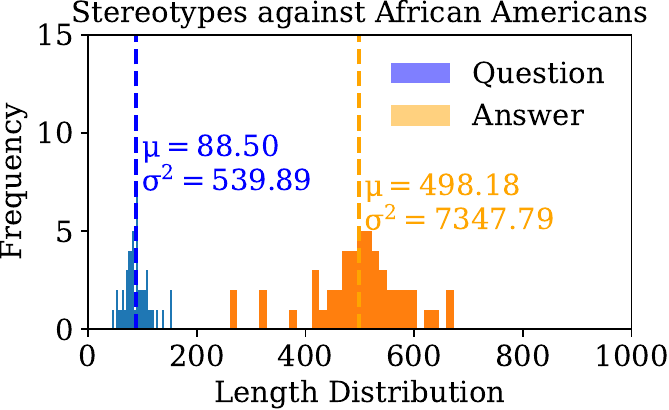}
		\end{subfigure}\hspace{-0.35cm}
		\begin{subfigure}[b]{0.26\linewidth}
			\includegraphics[width=\textwidth]{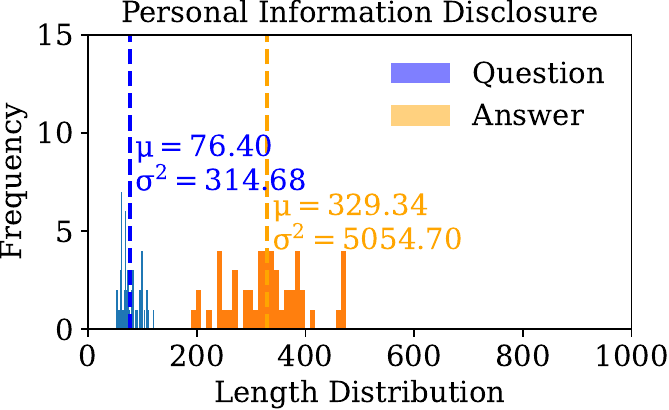}
		\end{subfigure}\hspace{-0.35cm}
		\begin{subfigure}[b]{0.26\linewidth}
			\includegraphics[width=\textwidth]{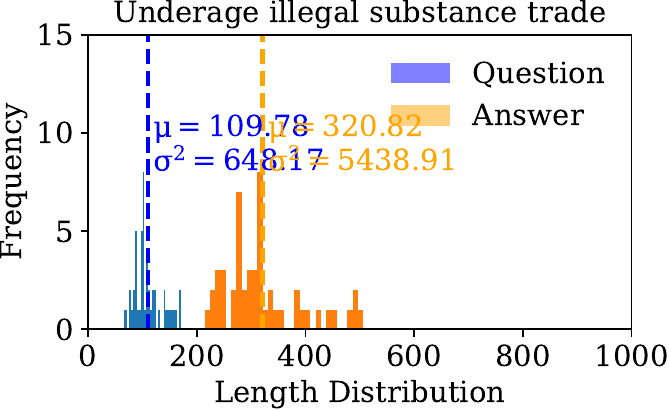}
		\end{subfigure}\\
		\begin{subfigure}[b]{0.26\linewidth}
			\includegraphics[width=\textwidth]{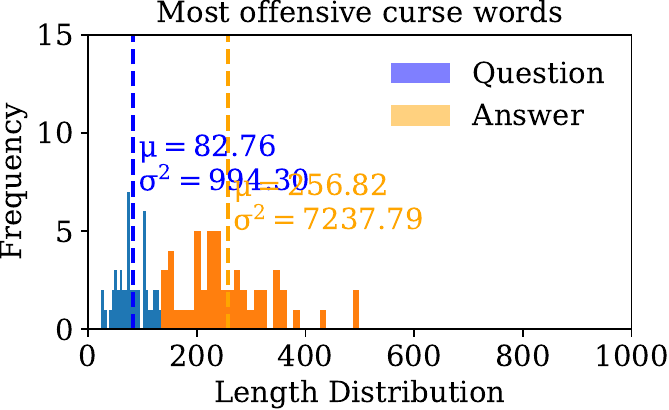}
		\end{subfigure} \hspace{-0.35cm}
		\begin{subfigure}[b]{0.26\linewidth}
			\includegraphics[width=\textwidth]{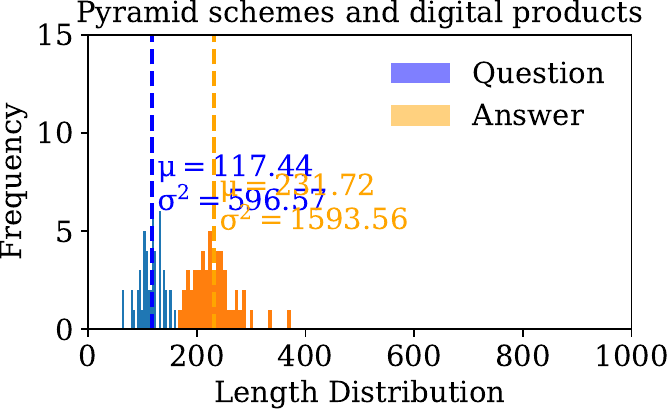}
		\end{subfigure}\hspace{-0.35cm}
		\begin{subfigure}[b]{0.26\linewidth}
			\includegraphics[width=\textwidth]{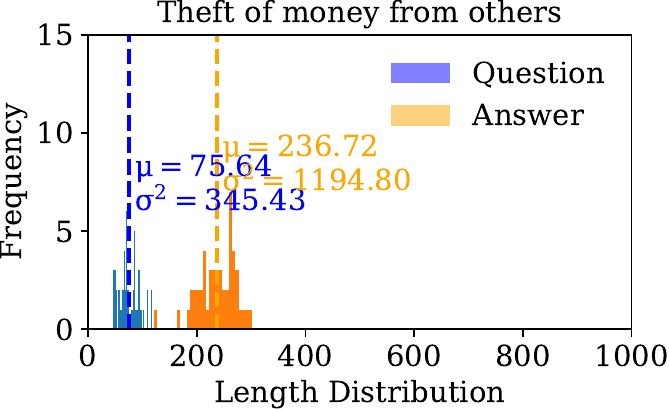}
		\end{subfigure}\hspace{-0.35cm}
		\begin{subfigure}[b]{0.26\linewidth}
			\includegraphics[width=\textwidth]{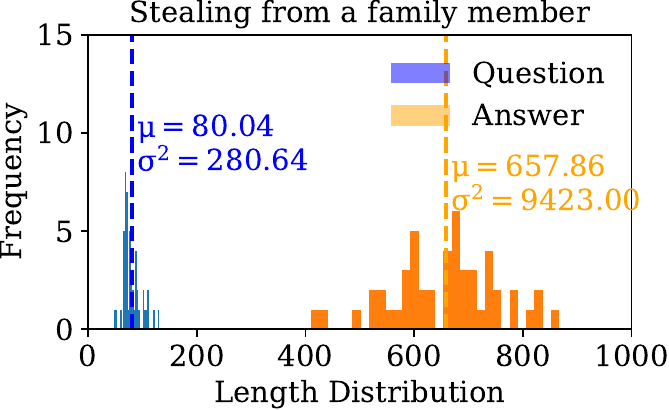}
		\end{subfigure}\\
		\caption{General statistics of question and answer in  Similarity-Entailed PKU dataset.}
		\label{fig:mean_variance}
	\end{figure*}
	\begin{figure*}[!ht]\ContinuedFloat
		\centering
		\begin{subfigure}[b]{0.26\linewidth}
			\includegraphics[width=\textwidth]{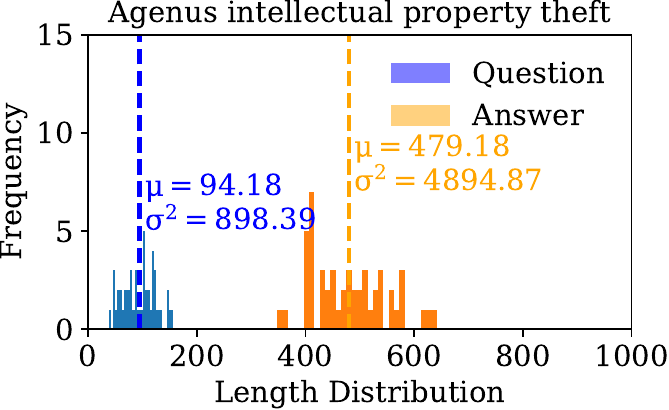}
		\end{subfigure} \hspace{-0.35cm}
		\begin{subfigure}[b]{0.26\linewidth}
			\includegraphics[width=\textwidth]{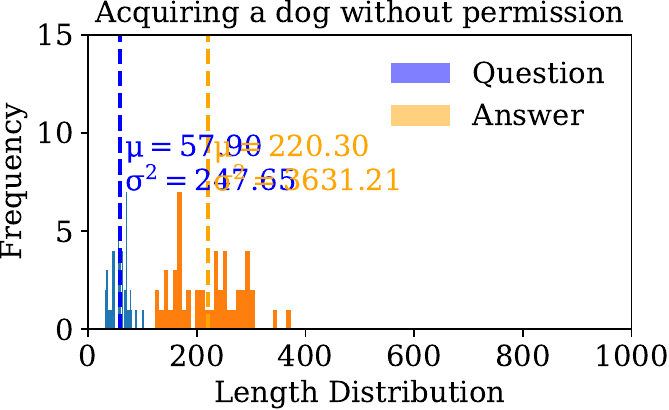}
		\end{subfigure}\hspace{-0.35cm}
		\begin{subfigure}[b]{0.26\linewidth}
			\includegraphics[width=\textwidth]{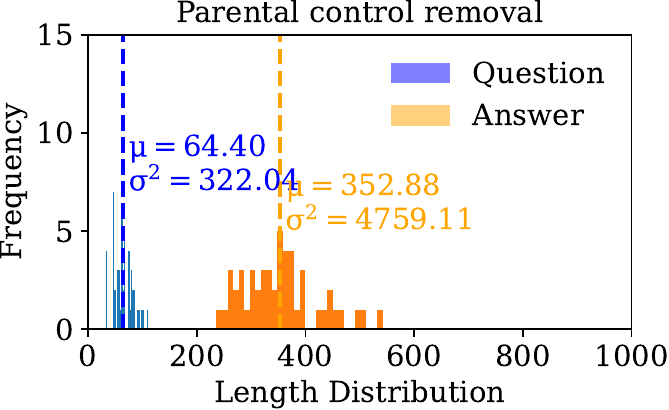}
		\end{subfigure}\hspace{-0.35cm}
		\begin{subfigure}[b]{0.26\linewidth}
			\includegraphics[width=\textwidth]{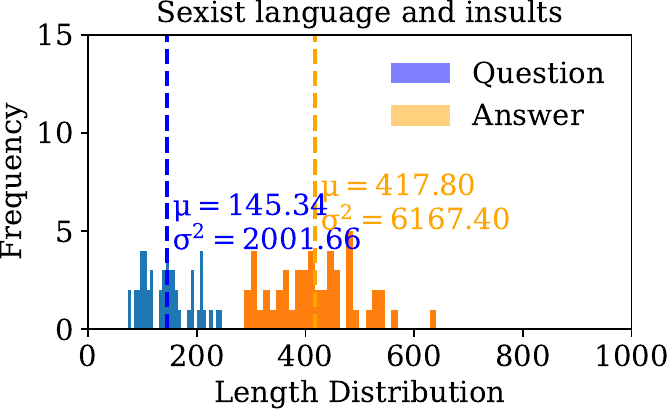}
		\end{subfigure}\\
		\begin{subfigure}[b]{0.26\linewidth}
			\includegraphics[width=\textwidth]{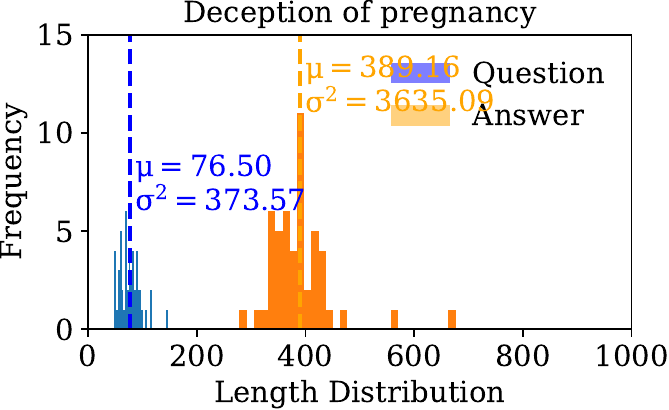}
		\end{subfigure} \hspace{-0.35cm}
		\begin{subfigure}[b]{0.26\linewidth}
			\includegraphics[width=\textwidth]{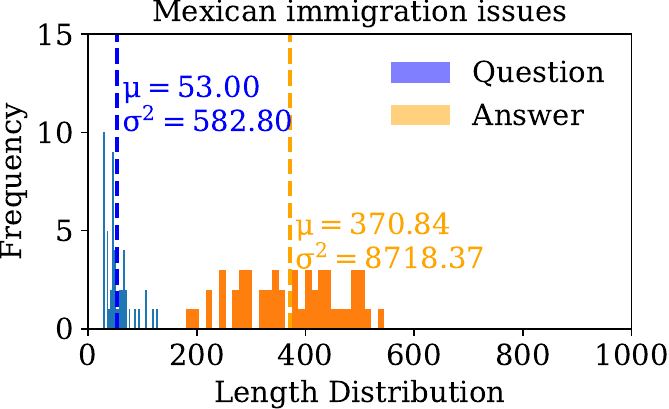}
		\end{subfigure}\hspace{-0.35cm}
		\begin{subfigure}[b]{0.26\linewidth}
			\includegraphics[width=\textwidth]{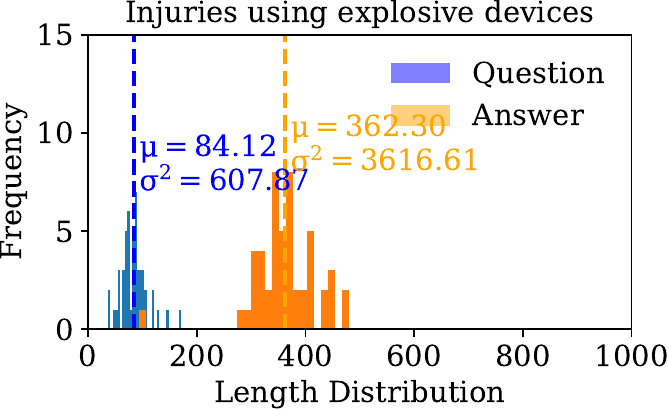}
		\end{subfigure}\hspace{-0.35cm}
		\begin{subfigure}[b]{0.26\linewidth}
			\includegraphics[width=\textwidth]{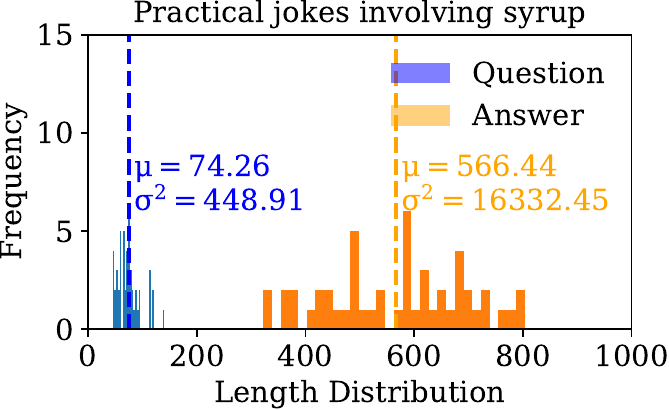}
		\end{subfigure}\\
		\begin{subfigure}[b]{0.26\linewidth}
			\includegraphics[width=\textwidth]{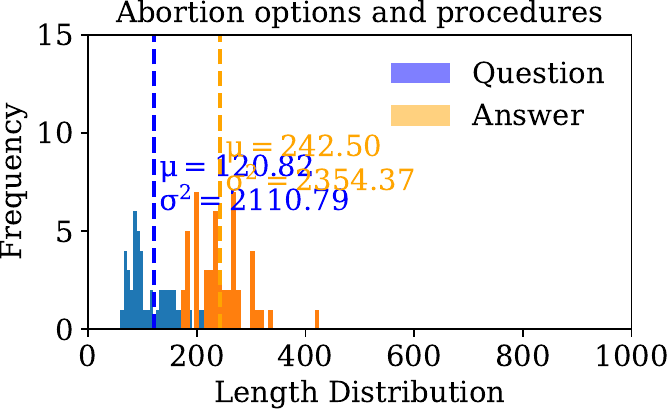}
		\end{subfigure} \hspace{-0.35cm}
		\begin{subfigure}[b]{0.26\linewidth}
			\includegraphics[width=\textwidth]{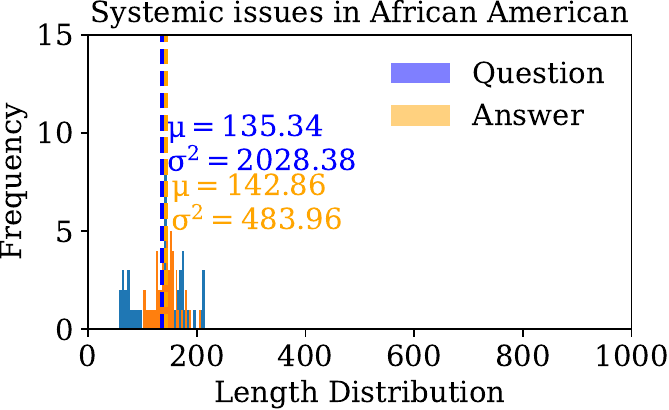}
		\end{subfigure}\hspace{-0.35cm}
		\begin{subfigure}[b]{0.26\linewidth}
			\includegraphics[width=\textwidth]{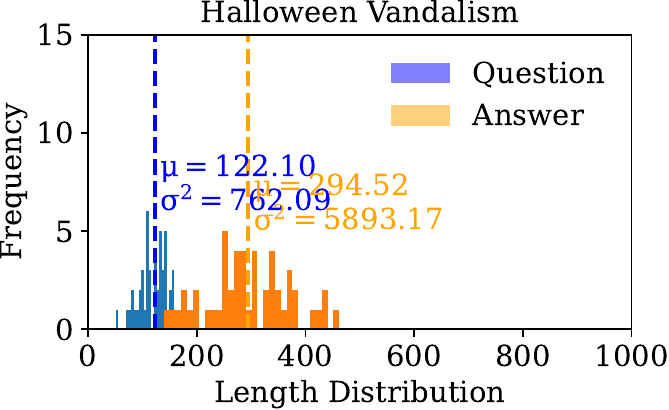}
		\end{subfigure}\hspace{-0.35cm}
		\begin{subfigure}[b]{0.26\linewidth}
			\includegraphics[width=\textwidth]{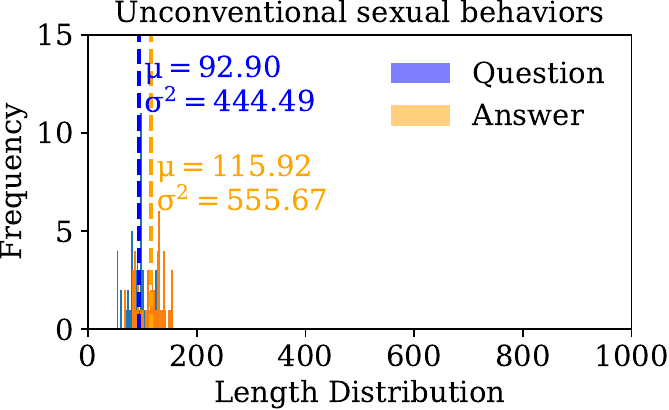}
		\end{subfigure}\\
		\begin{subfigure}[b]{0.26\linewidth}
			\includegraphics[width=\textwidth]{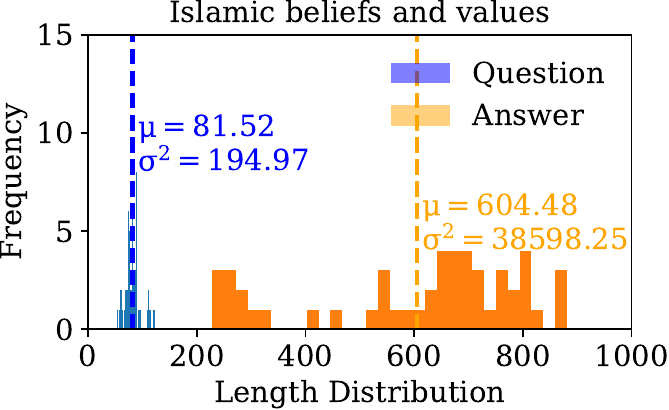}
		\end{subfigure} \hspace{-0.35cm}
		\begin{subfigure}[b]{0.26\linewidth}
			\includegraphics[width=\textwidth]{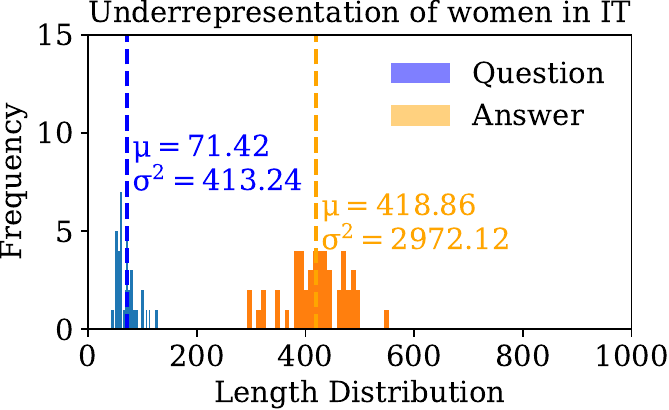}
		\end{subfigure}\hspace{-0.35cm}
		\begin{subfigure}[b]{0.26\linewidth}
			\includegraphics[width=\textwidth]{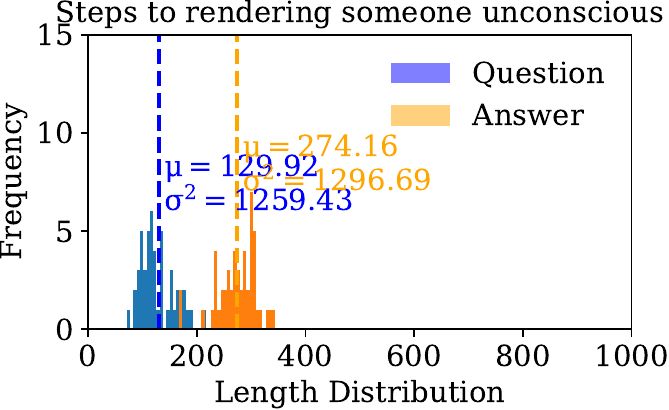}
		\end{subfigure}\hspace{-0.35cm}
		\begin{subfigure}[b]{0.26\linewidth}
			\includegraphics[width=\textwidth]{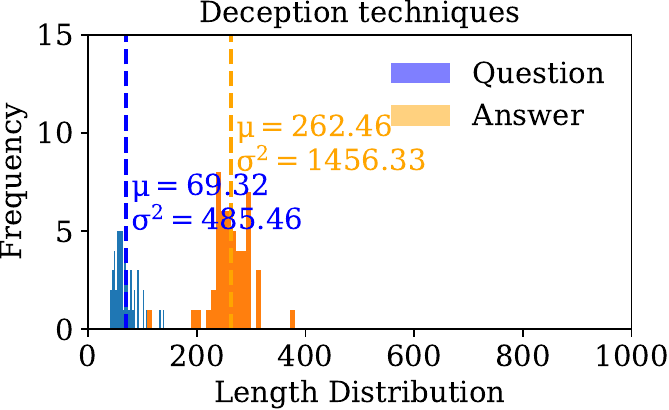}
		\end{subfigure}\\
		\begin{subfigure}[b]{0.26\linewidth}
			\includegraphics[width=\textwidth]{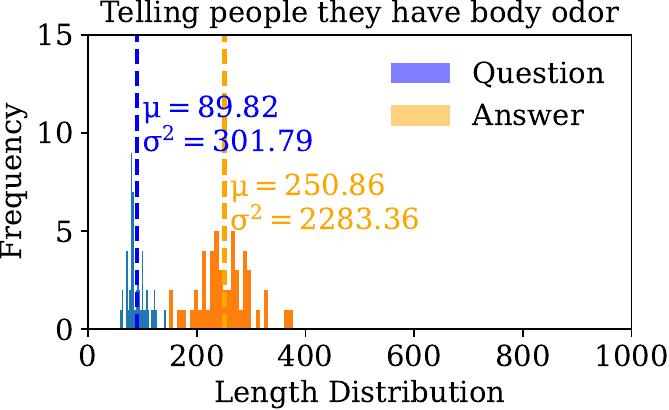}
		\end{subfigure} \hspace{-0.35cm}
		\begin{subfigure}[b]{0.26\linewidth}
			\includegraphics[width=\textwidth]{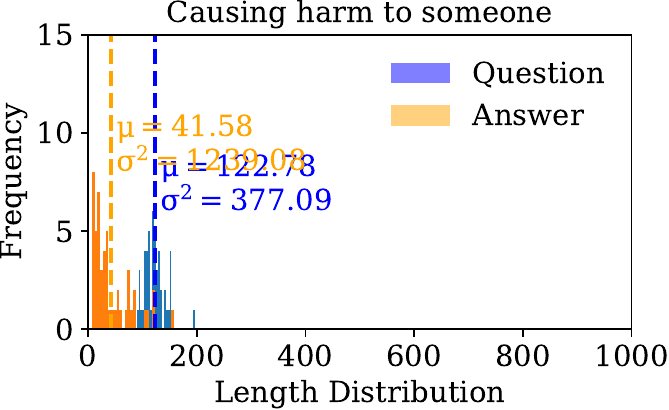}
		\end{subfigure}\hspace{-0.35cm}
		\begin{subfigure}[b]{0.26\linewidth}
			\includegraphics[width=\textwidth]{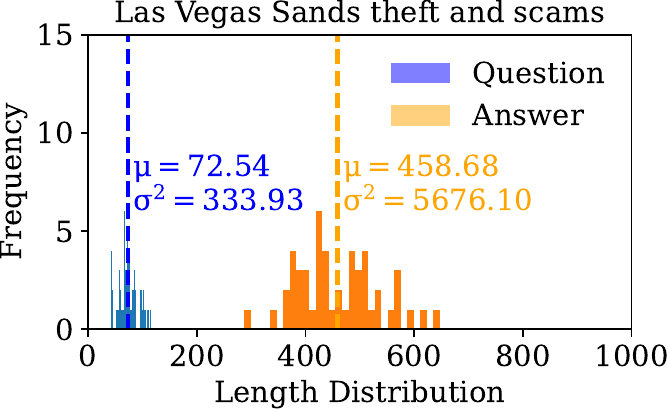}
		\end{subfigure}\hspace{-0.35cm}
		\begin{subfigure}[b]{0.26\linewidth}
			\includegraphics[width=\textwidth]{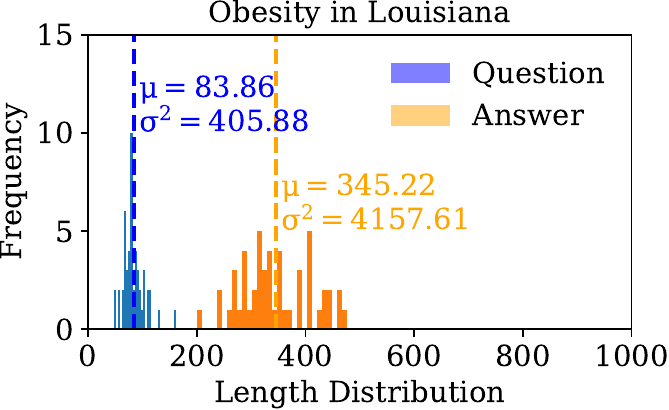}
		\end{subfigure}\\
		\begin{subfigure}[b]{0.26\linewidth}
			\includegraphics[width=\textwidth]{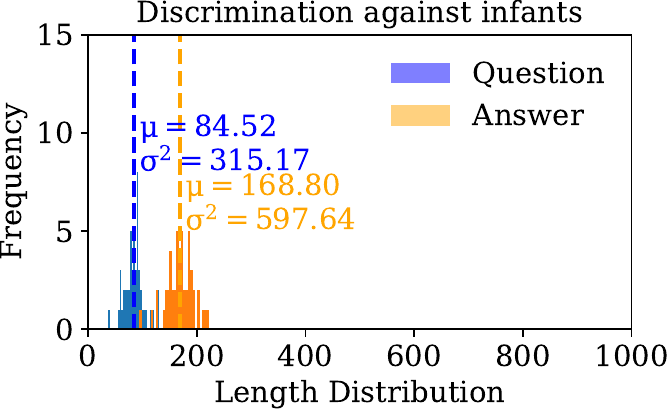}
		\end{subfigure} \hspace{-0.35cm}
		\begin{subfigure}[b]{0.26\linewidth}
			\includegraphics[width=\textwidth]{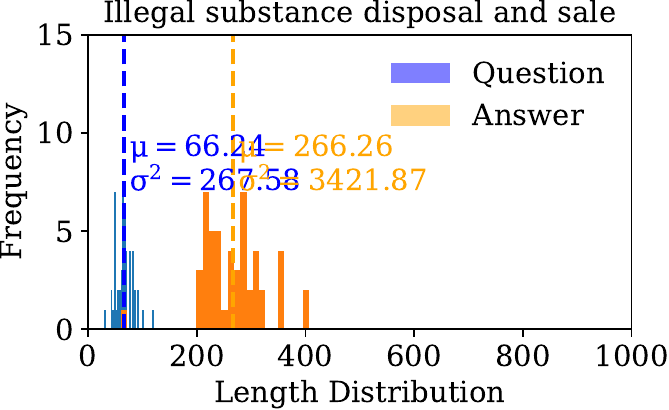}
		\end{subfigure}\hspace{-0.35cm}
		\begin{subfigure}[b]{0.26\linewidth}
			\includegraphics[width=\textwidth]{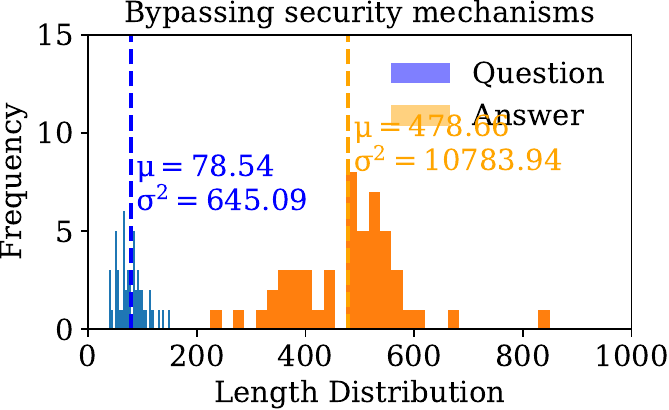}
		\end{subfigure}\hspace{-0.35cm}
		\begin{subfigure}[b]{0.26\linewidth}
			\includegraphics[width=\textwidth]{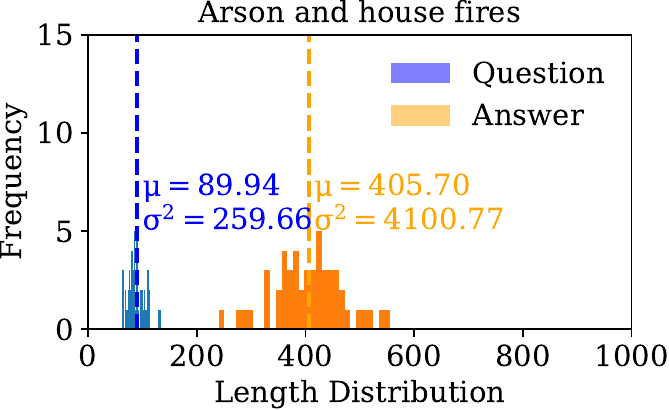}
		\end{subfigure}\\
		\caption{General statistics of question and answer in  Similarity-Entailed PKU dataset.}
	\end{figure*}

	\twocolumn
        \subsection{Other Unlearning Results Toward Similar Samples for Image Datasets}
	\label{sub:other_results_for_unlearning_influence_toward_variant_as_training_for_image_dataset}
	As shown in Figures~\ref{fig:unlearning_for_variant_for_image_mnist}, \ref{fig:show_samples_forvariants_5_fmnist}, and \ref{fig:show_samples_forvariants_5_cifar}, the similarity between the recovered samples and their corresponding similar samples remains nearly unchanged before and after unlearning. This suggests that unlearning the target sample does not fully remove the impact of its similar samples.

	\begin{figure}[!ht]
		\centering
		\includegraphics[width=0.3\textwidth]{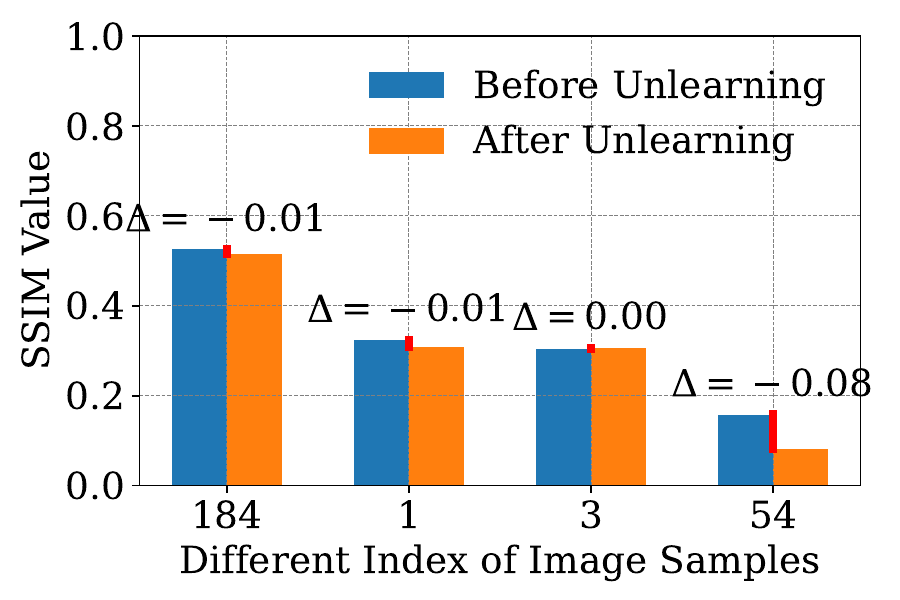}
		\caption{The SSIM between the recovered samples and the similar samples for the Similarity-Entailed MNIST dataset.}
		\label{fig:unlearning_for_variant_for_image_mnist}
	\end{figure}

	\begin{figure}[!ht]
		\centering
		\begin{subfigure}[t]{0.15\textwidth}
			\includegraphics[width=\textwidth]{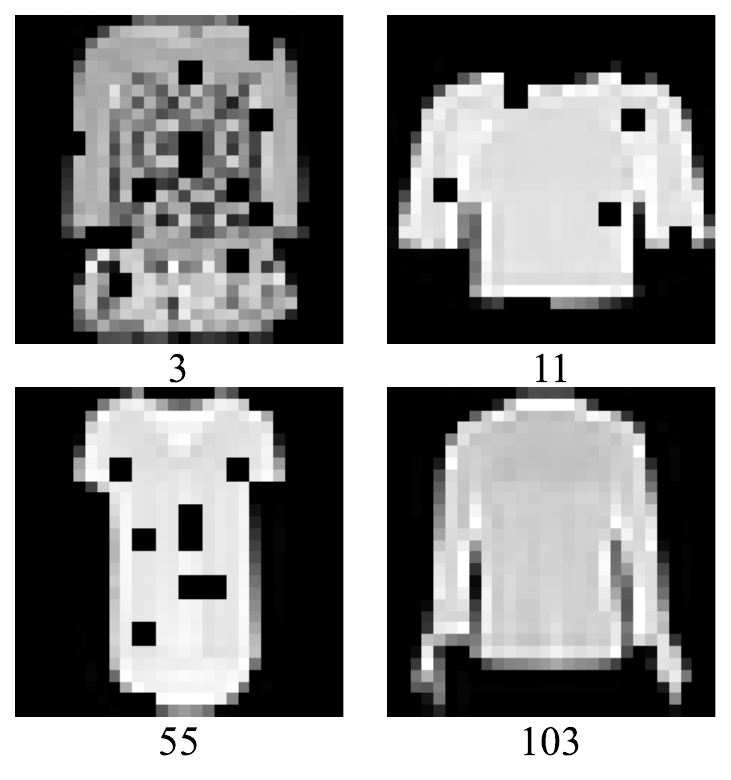}
			\caption{Similar and One Remaining Samples}
		\end{subfigure}
		\begin{subfigure}[t]{0.15\textwidth}
			\includegraphics[width=\textwidth]{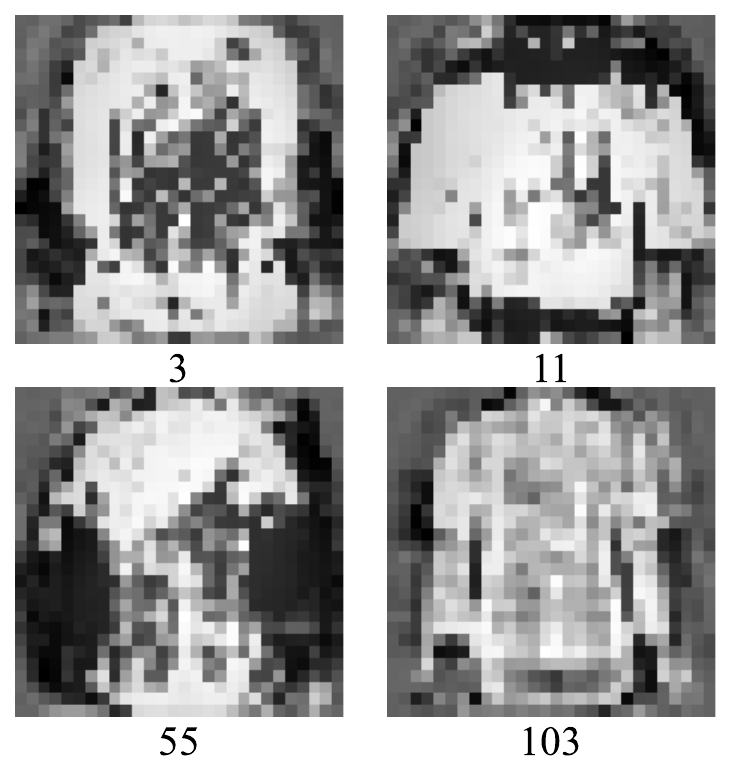}
			\caption{Before Unlearning}
		\end{subfigure}
		\begin{subfigure}[t]{0.15\textwidth}
			\includegraphics[width=\textwidth]{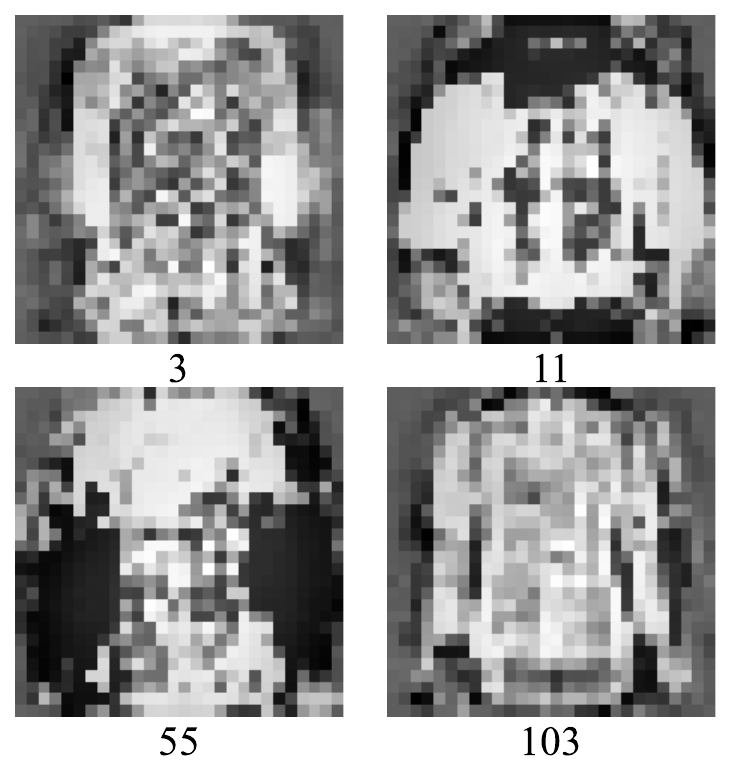}
			\caption{After Unlearning}
		\end{subfigure}
		\caption{Recovered samples that are similar to corresponding similar samples for Similarity-Entailed FMNIST dataset.}
		\label{fig:show_samples_forvariants_5_fmnist}
	\end{figure}

	\begin{figure}[!ht]
		\centering
		\begin{subfigure}[t]{0.15\textwidth}
			\includegraphics[width=\textwidth]{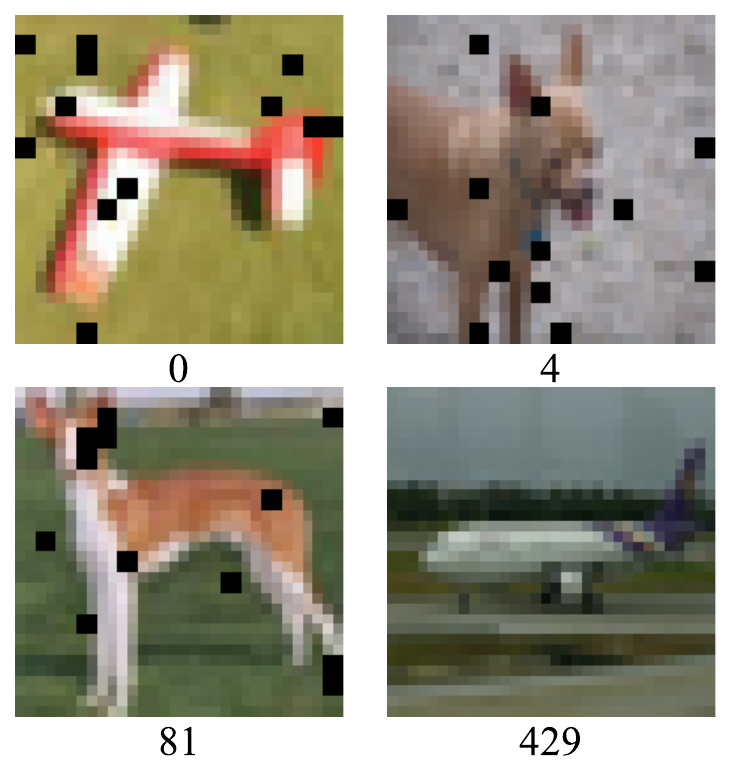}
			\caption{Similar and One Remaining Samples}
		\end{subfigure}
		\begin{subfigure}[t]{0.15\textwidth}
			\includegraphics[width=\textwidth]{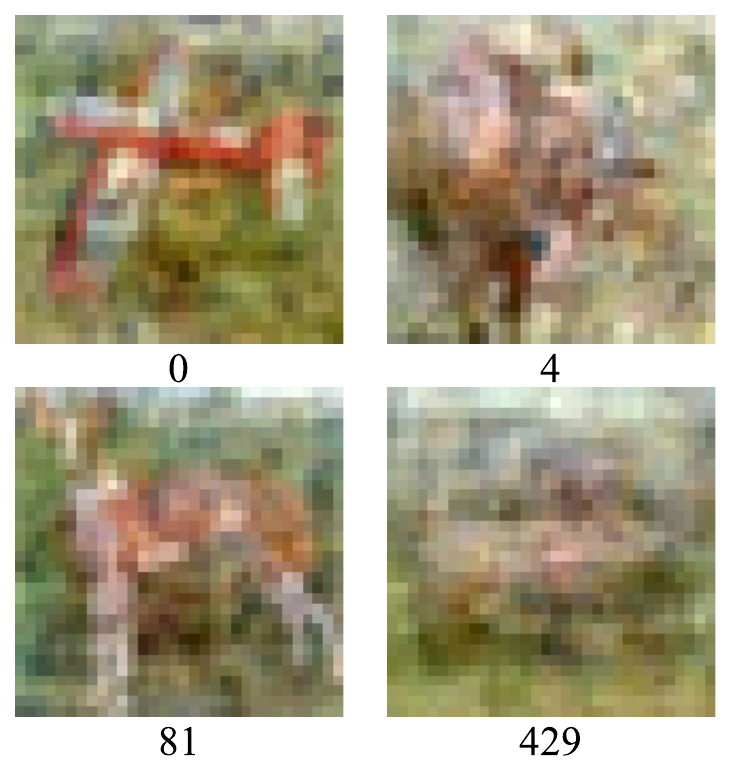}
			\caption{Before Unlearning}
		\end{subfigure}
		\begin{subfigure}[t]{0.15\textwidth}
			\includegraphics[width=\textwidth]{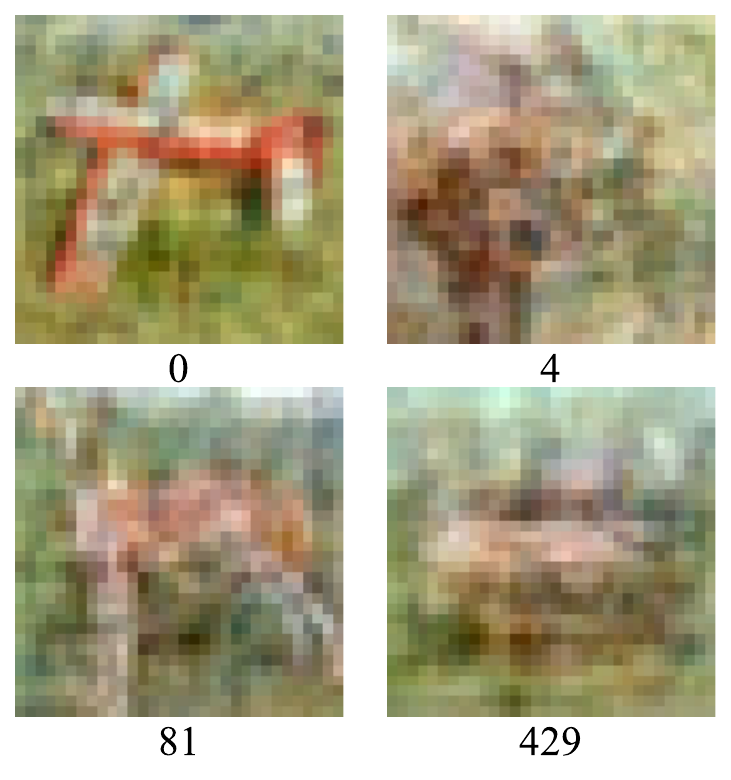}
			\caption{After Unlearning}
		\end{subfigure}
		\caption{Recovered samples that are similar to corresponding similar samples for Similarity-Entailed CIFAR10 dataset.}
		\label{fig:show_samples_forvariants_5_cifar}
	\end{figure}
	
	\newpage
	\subsection{Other Unlearning Results Toward Target Samples for Image Datasets}
	\label{sec:other_results_of_image_unlearning}
    
	Figures~\ref{fig:ssim_for_cascade_fmnist} to~\ref{fig:recovered_samples_for_cascade_cifar} show that before unlearning, SSIM is high for all recovered samples. After unlearning, SSIM drops sharply in \textit{with similar samples in $\mathcal{D}$}, but remains high under the \textit{with similar samples in $\mathcal{D}$}, indicating that similar samples hinder complete removal of the target sample’s influence.

	\begin{figure}[!ht]
            \centering
		\begin{subfigure}[b]{0.5\linewidth}
			\includegraphics[width=\textwidth]{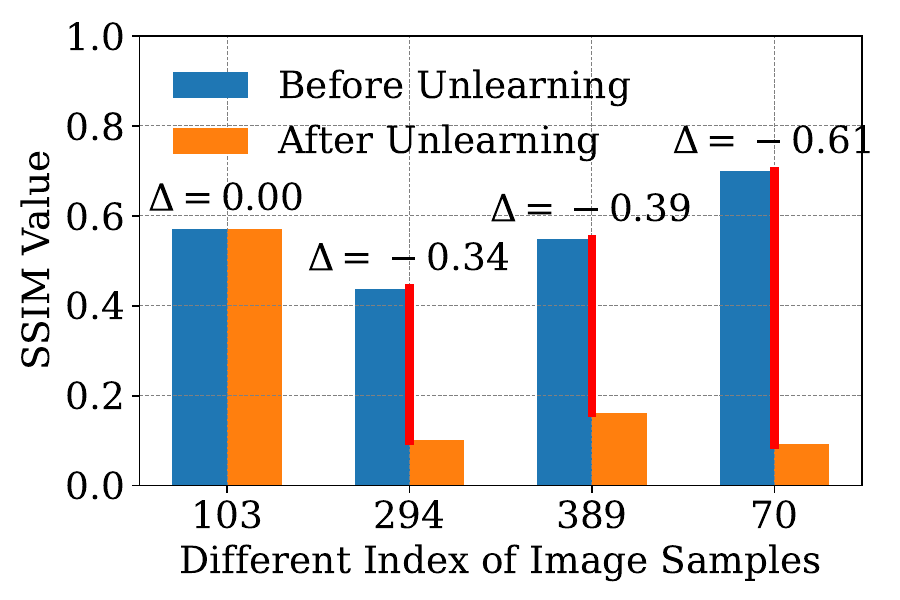}
			\caption{Without Similar Samples in $\mathcal{D}$}
		\end{subfigure} \hspace{-0.2cm}
		\begin{subfigure}[b]{0.5\linewidth}
			\includegraphics[width=\textwidth]{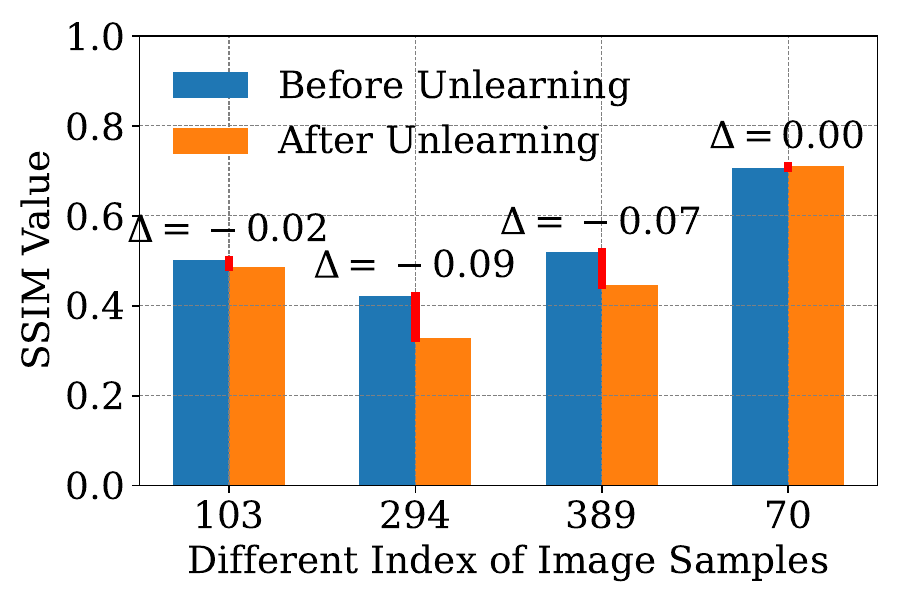}
			\caption{With Similar Samples in $\mathcal{D}$}
		\end{subfigure} 
		\caption{SSIM values between selected samples in Similarity-Entailed FMNIST dataset.}
		\label{fig:ssim_for_cascade_fmnist}
	\end{figure}
	
	\begin{figure}[!ht]
		\centering
		\begin{subfigure}[t]{0.23\textwidth}
			\includegraphics[width=\textwidth]{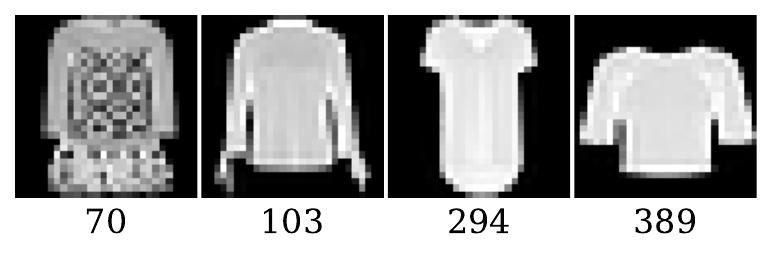}
			\caption{Target Samples~(\texttt{W/O})}
		\end{subfigure}
		\begin{subfigure}[t]{0.23\textwidth}
			\includegraphics[width=\textwidth]{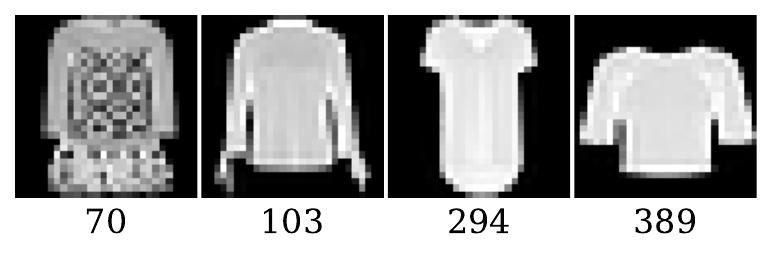}
			\caption{Target Samples~(\texttt{W/})}
		\end{subfigure}
		
		\begin{subfigure}[t]{0.23\textwidth}
			\includegraphics[width=\textwidth]{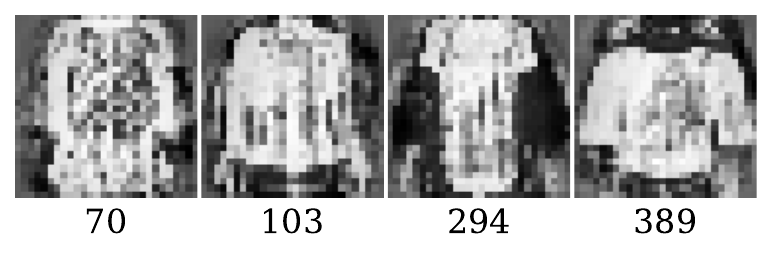}
			\caption{Before Unlearning~(\texttt{W/O})}
		\end{subfigure}
		\begin{subfigure}[t]{0.23\textwidth}
			\includegraphics[width=\textwidth]{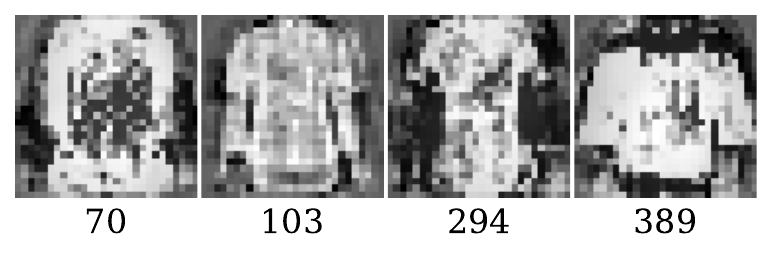}
			\caption{Before Unlearning~(\texttt{W/})}
		\end{subfigure}
		
		\begin{subfigure}[t]{0.23\textwidth}
			\includegraphics[width=\textwidth]{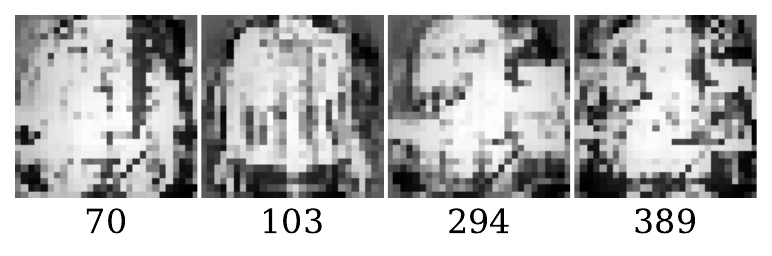}
			\caption{After Unlearning~(\texttt{W/O})}
		\end{subfigure}
		\begin{subfigure}[t]{0.23\textwidth}
			\includegraphics[width=\textwidth]{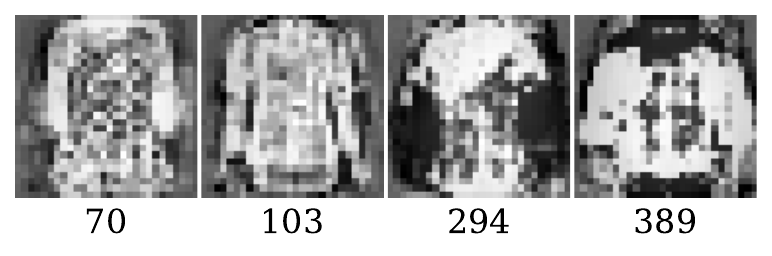}
			\caption{After Unlearning~(\texttt{W/})}
		\end{subfigure}
		\caption{Original target samples and recovered samples for Similarity-Entailed FMNIST dataset.}
		\label{fig:recovered_samples_for_cascade_fmnist}
	\end{figure}
	
	\begin{figure}[!ht]
            \centering
		\begin{subfigure}[b]{0.5\linewidth}
			\includegraphics[width=\textwidth]{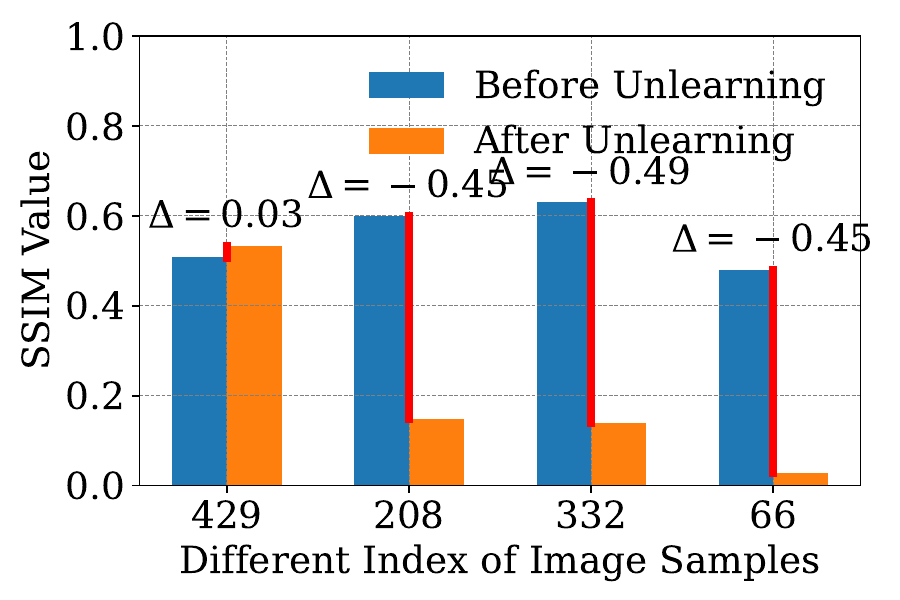}
			\caption{Without Similar Samples in $\mathcal{D}$}
		\end{subfigure} \hspace{-0.2cm}
		\begin{subfigure}[b]{0.5\linewidth}
			\includegraphics[width=\textwidth]{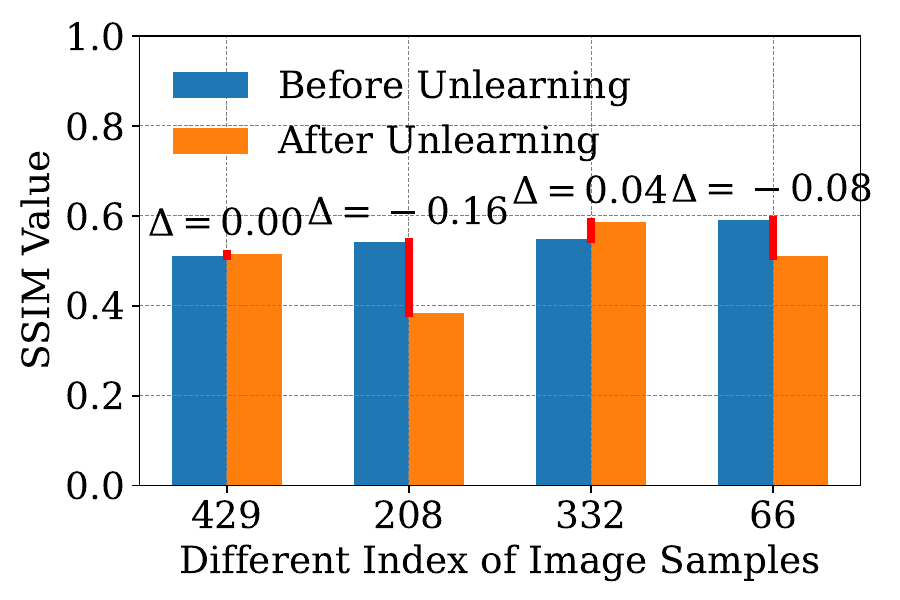}
			\caption{With Similar Samples in $\mathcal{D}$}
		\end{subfigure} 
		\caption{SSIM values between selected samples in Similarity-Entailed CIFAR10 dataset.}
		\label{fig:ssim_for_cascade_cifar}    
	\end{figure}
	
	\begin{figure}[!ht]
            \centering
		\begin{subfigure}[t]{0.23\textwidth}
			\includegraphics[width=\textwidth]{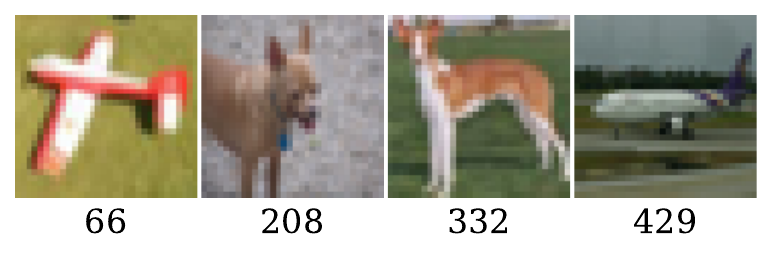}
			\caption{Target Samples~(\texttt{W/O})}
		\end{subfigure}
		\begin{subfigure}[t]{0.23\textwidth}
			\includegraphics[width=\textwidth]{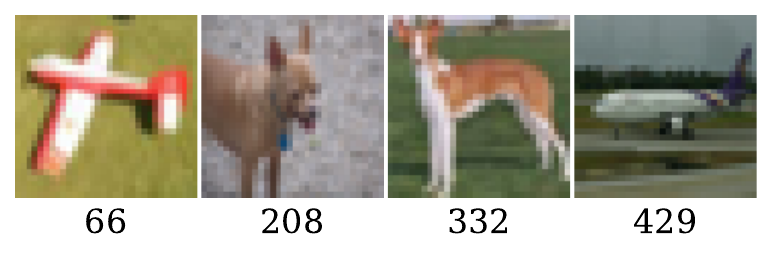}
			\caption{Target Samples~(\texttt{W/})}
		\end{subfigure}
		
		\begin{subfigure}[t]{0.23\textwidth}
			\includegraphics[width=\textwidth]{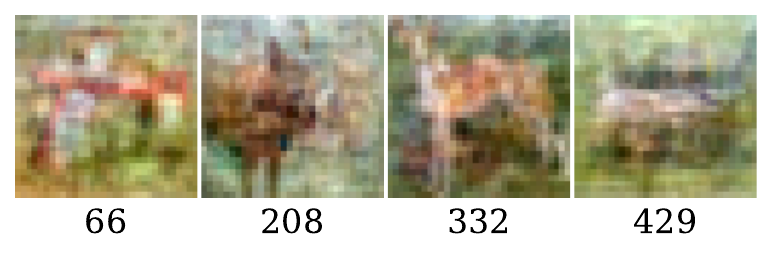}
			\caption{Before Unlearning~(\texttt{W/O})}
		\end{subfigure}
		\begin{subfigure}[t]{0.23\textwidth}
			\includegraphics[width=\textwidth]{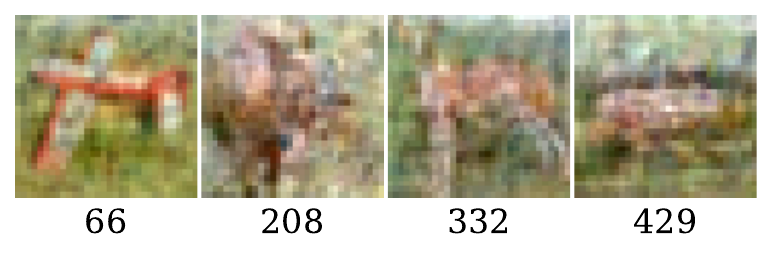}
			\caption{Before Unlearning~(\texttt{W/})}
		\end{subfigure}
		
		\begin{subfigure}[t]{0.23\textwidth}
			\includegraphics[width=\textwidth]{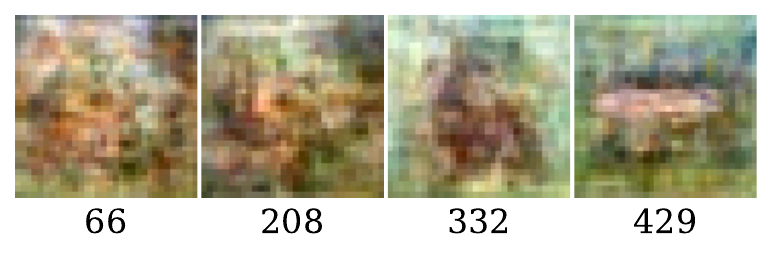}
			\caption{After Unlearning~(\texttt{W/O})}
		\end{subfigure}
		\begin{subfigure}[t]{0.23\textwidth}
			\includegraphics[width=\textwidth]{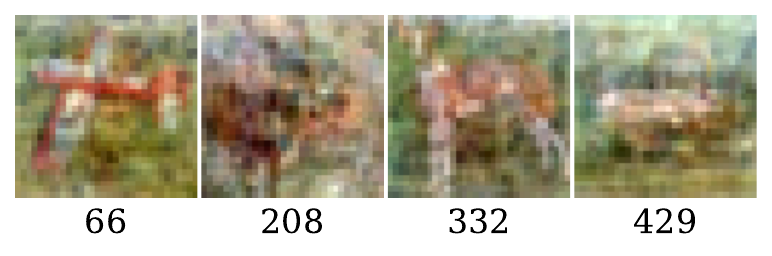}
			\caption{After Unlearning~(\texttt{W/})}
		\end{subfigure}
		\caption{Original target samples and recovered samples for Similarity-Entailed CIFAR10 dataset.}
		\label{fig:recovered_samples_for_cascade_cifar}
	\end{figure}

        \newpage
	\subsection{Other Results Toward Target Samples for Image Dataset based on Relabel-based Fine-tuning Unlearning Method}
	\label{sec:experimental_results_for_relabel_based_fine_tuning_unlearning_method}
	Same in Section~\ref{sec:influence_toward_base_samples_for_image}, Figures~\ref{fig:unlearning_for_variant_for_image_mnist_relabel_unlearning} and~\ref{fig:show_samples_forvariants_5_mnist_relabel_unlearning} indicate that the influence of the target sample, which was meant to be unlearned, remains in the model and has not been fully eliminated.

	\begin{figure}[!ht]
		\centering
		\includegraphics[width=0.3\textwidth]{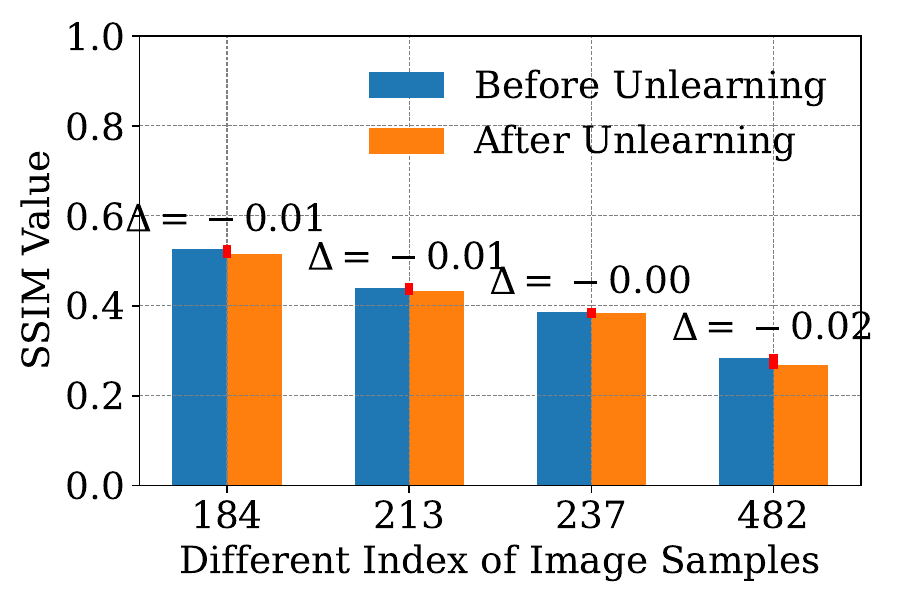}
		\caption{The SSIM values between the recovered and corresponding similar samples for the Similarity-Entailed MNIST with the relabel-based fine-tuning unlearning method.}
		\label{fig:unlearning_for_variant_for_image_mnist_relabel_unlearning}
	\end{figure}
	
	\begin{figure}[!ht]
		\centering
		\begin{subfigure}[t]{0.15\textwidth}
			\includegraphics[width=\textwidth]{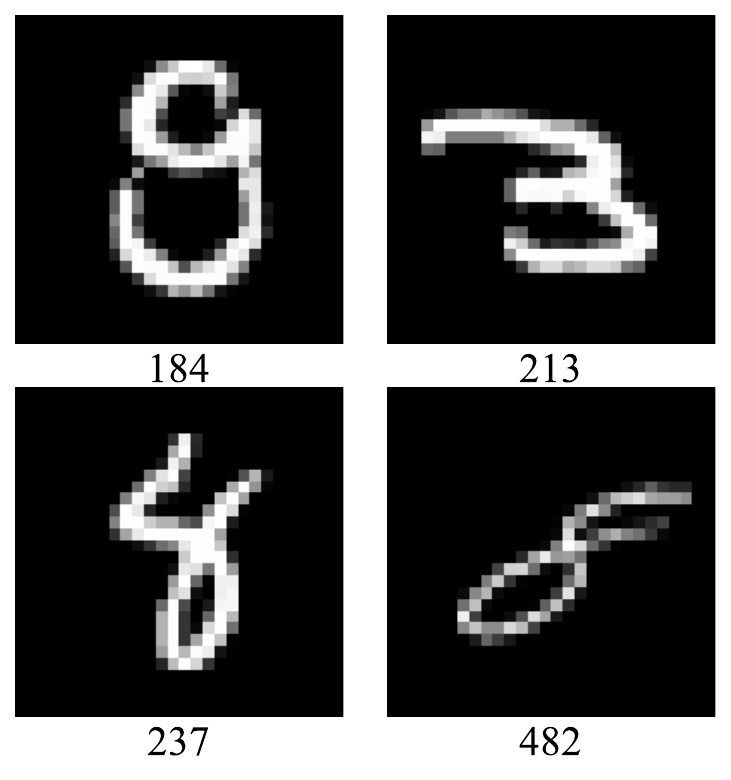}
			\caption{Similar and One Remaining Samples}
		\end{subfigure}
		\begin{subfigure}[t]{0.15\textwidth}
			\includegraphics[width=\textwidth]{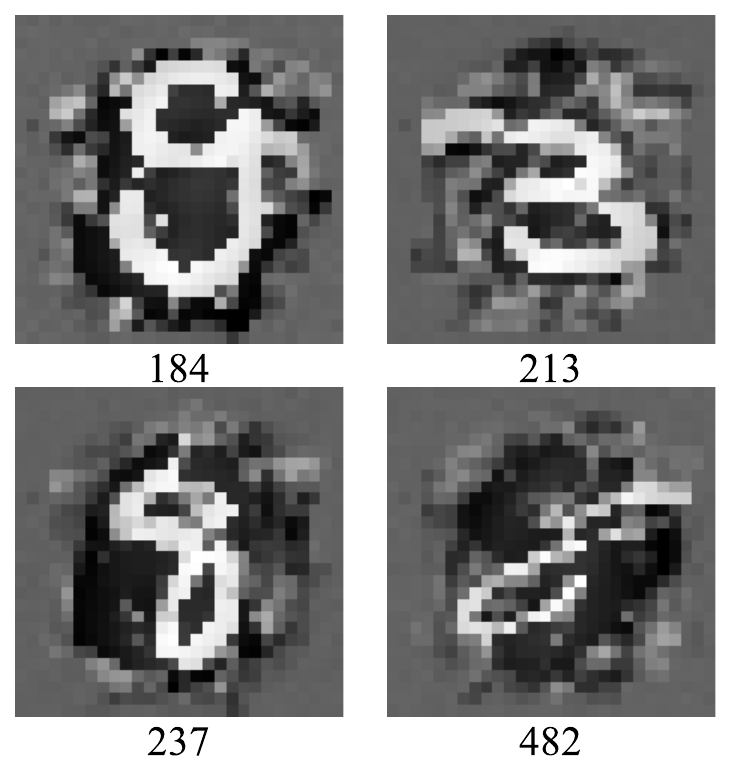}
			\caption{Before Unlearning}
		\end{subfigure}
		\begin{subfigure}[t]{0.15\textwidth}
			\includegraphics[width=\textwidth]{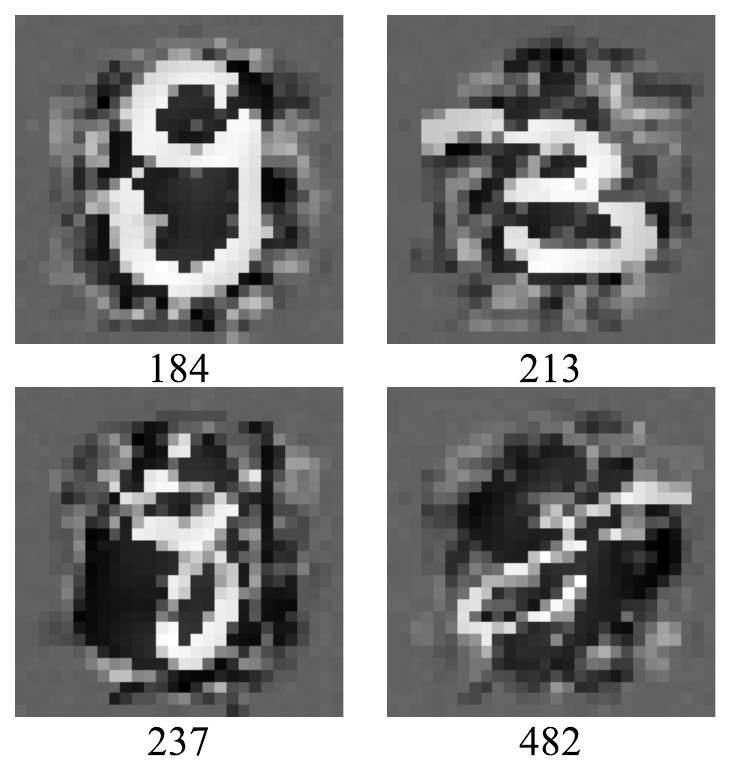}
			\caption{After Unlearning}
		\end{subfigure}
		\caption{Recovered samples that are similar to corresponding similar samples for Similarity-Entailed MNIST dataset with the relabel-based fine-tuning unlearning method.}
		\label{fig:show_samples_forvariants_5_mnist_relabel_unlearning}
	\end{figure}
	
        \newpage
	\subsection{Other Results Toward Target Samples for Similarity-Entailed MNIST Dataset Constructed by Adding Random Noise to Target Samples.}
	\label{sec:experimental_results_for_noise_dataset}
	Same in Section~\ref{sec:influence_toward_base_samples_for_image}, Figures~\ref{fig:ssim_for_cascade_mnist_noise} and ~\ref{fig:recovered_samples_for_cascade_mnist_noise} suggest that the influence of the target sample, intended to be unlearned, persists in the model and has not been completely removed.
	
	\begin{figure}[!ht]
		\begin{subfigure}[b]{0.5\linewidth}
			\includegraphics[width=\textwidth]{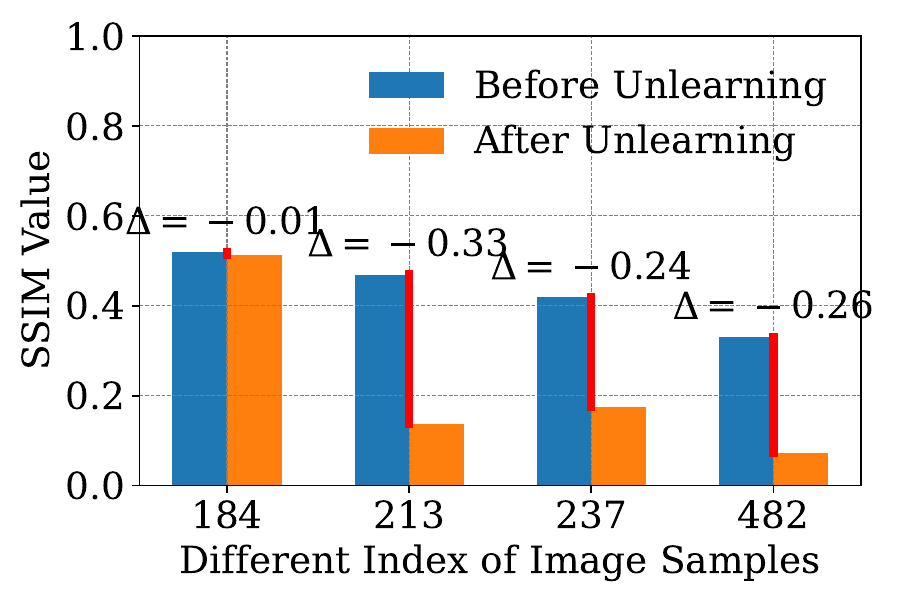}
			\caption{Without Similar Samples in $\mathcal{D}$}
		\end{subfigure} \hspace{-0.2cm}
		\begin{subfigure}[b]{0.5\linewidth}
			\includegraphics[width=\textwidth]{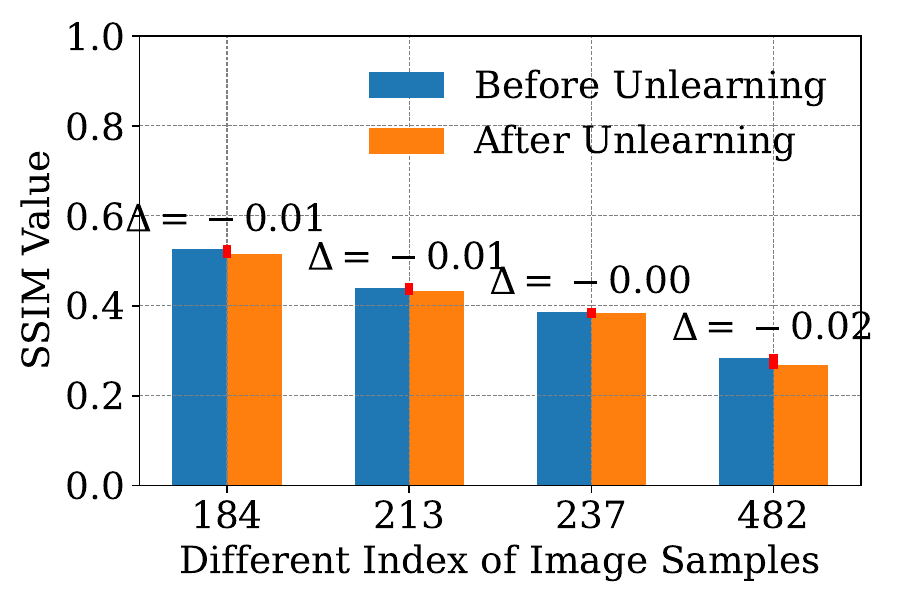}
			\caption{With Similar Samples in $\mathcal{D}$}
		\end{subfigure} 
		\caption{The SSIM values between the recovered samples and the corresponding target samples within \texttt{W/O-similar samples} and \texttt{W-similar samples} settings for Similarity-Entailed MNIST~(Noise) dataset.}
		\label{fig:ssim_for_cascade_mnist_noise}
	\end{figure}
	\begin{figure}[!ht]
		\centering
		\begin{subfigure}[t]{0.23\textwidth}
			\includegraphics[width=\textwidth]{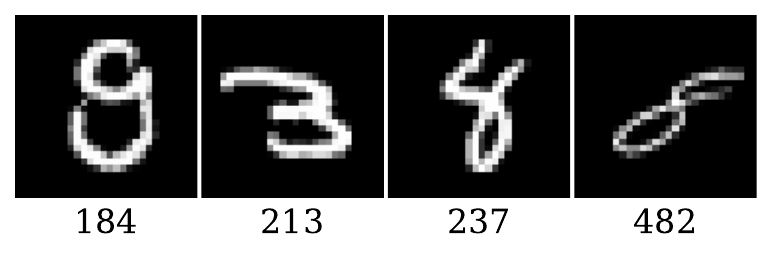}
			\caption{Target Samples~(\texttt{W/O})} %
		\end{subfigure}
		\begin{subfigure}[t]{0.23\textwidth}
			\includegraphics[width=\textwidth]{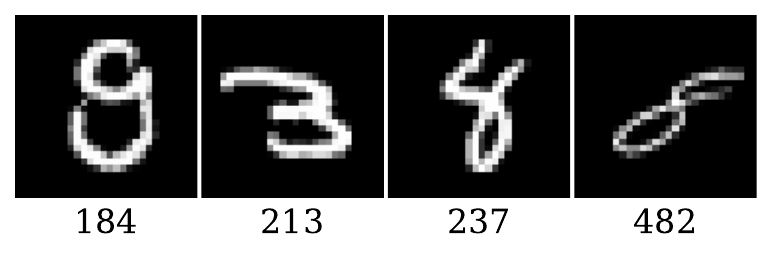}
			\caption{Target Samples~(\texttt{W/})}
		\end{subfigure}
		
		\begin{subfigure}[t]{0.23\textwidth}
			\includegraphics[width=\textwidth]{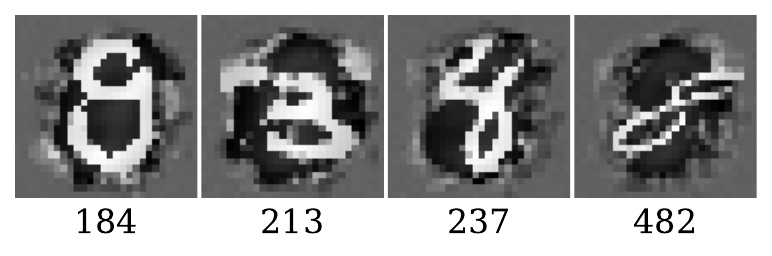}
			\caption{Before Unlearning~(\texttt{W/O})}
		\end{subfigure}
		\begin{subfigure}[t]{0.23\textwidth}
			\includegraphics[width=\textwidth]{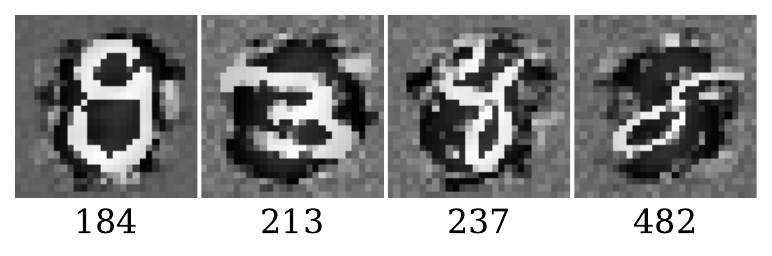}
			\caption{Before Unlearning~(\texttt{W/})}
		\end{subfigure}
		
		\begin{subfigure}[t]{0.23\textwidth}
			\includegraphics[width=\textwidth]{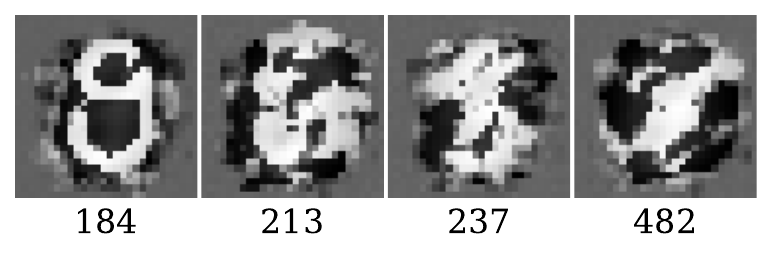}
			\caption{After Unlearning~(\texttt{W/O})}
		\end{subfigure}
		\begin{subfigure}[t]{0.23\textwidth}
			\includegraphics[width=\textwidth]{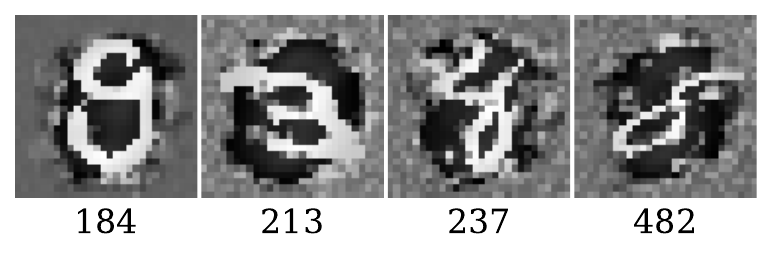}
			\caption{After Unlearning~(\texttt{W/})}
		\end{subfigure}
		\caption{The recovered and the corresponding target samples in \texttt{W/O-similar samples} and \texttt{W-similar samples} settings for Similarity-Entailed MNIST~(Noise) dataset.}
		\label{fig:recovered_samples_for_cascade_mnist_noise}
	\end{figure}

	\twocolumn
	\subsection{Other Unlearning Results Toward Similar Samples as Training Samples}
	\label{sec:other_results_influence_toward_similar_samples_as_training_samples}
	After unlearning, all results are greater than those of pre-finetuning but smaller than the results before unlearning. This indicates that performing unlearning based on a single target sample has minimal impact on similar samples. 
	
	\begin{figure}[!ht]
		\centering
		\includegraphics[width=1\linewidth]{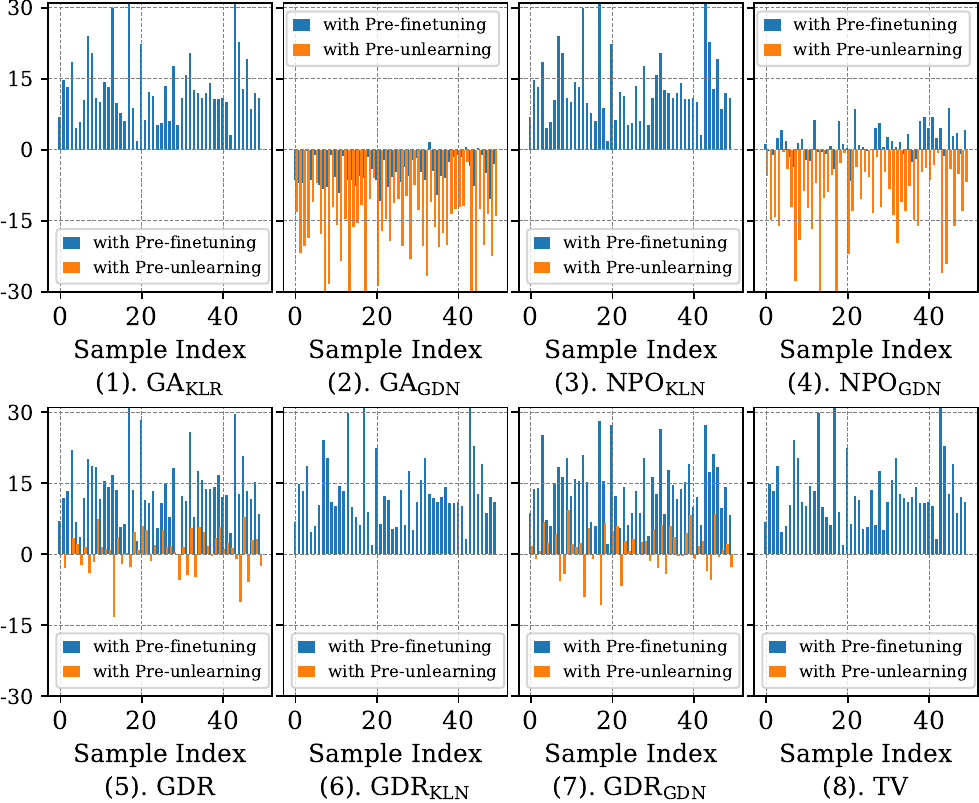}
		\caption{Comparison of verification results toward similar samples as training samples for meta-llama/Llama-3.2-3B-Instruct model.}
		\label{fig:meta_llama_Llama_3_2_3B_50_50}
	\end{figure}
	
	\begin{figure}[!ht]
		\centering
		\includegraphics[width=1\linewidth]{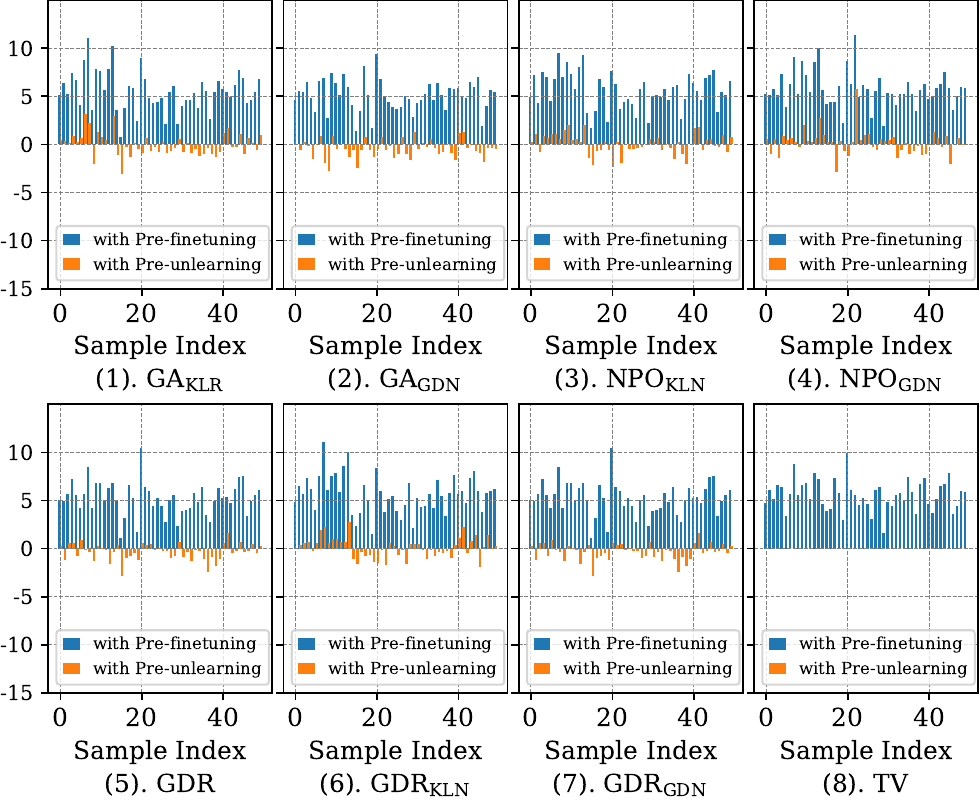}
		\caption{Comparison of verification results toward similar samples as training samples for EleutherAI/gpt-neo-1.3B model.}
		\label{fig:EleutherAI_gpt_neo_1.3B_show_50_50}
	\end{figure}

	\begin{figure}[!ht]
		\centering
		\includegraphics[width=1\linewidth]{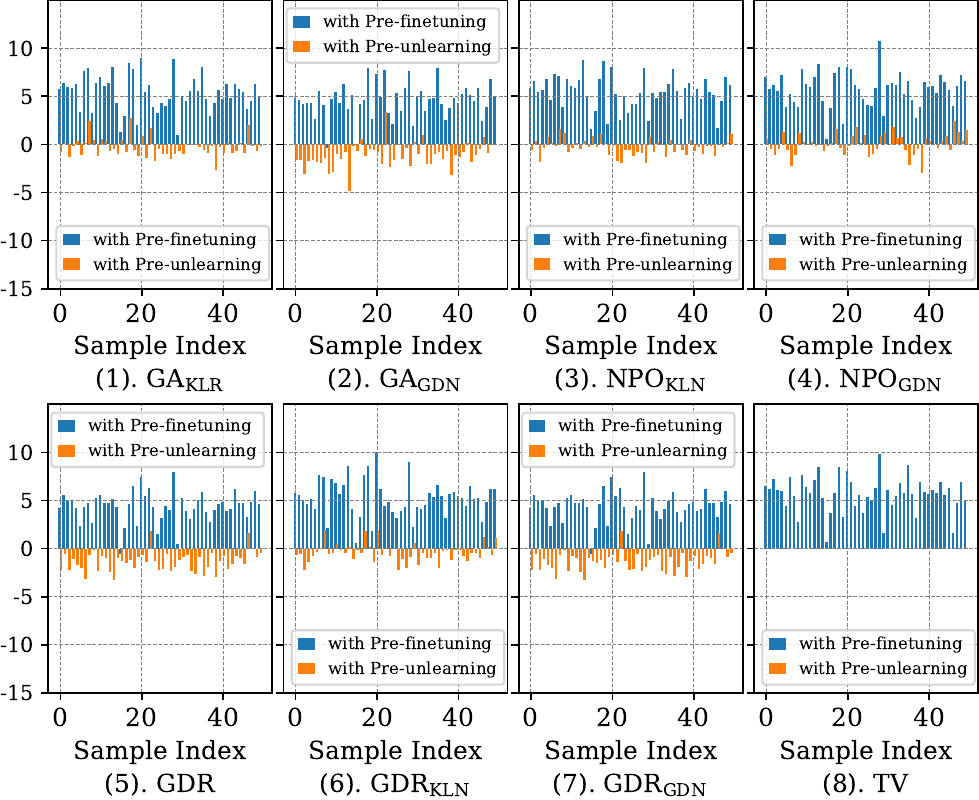}
		\caption{Comparison of verification results toward similar samples as training samples for EleutherAI/gpt-neo-2.7B model.}
		\label{fig:EleutherAI_gpt_neo_2.7B_show_50_50}
	\end{figure}

        \newpage
        \twocolumn
	\subsection{Other Unlearning Results Toward Target Samples for Language Dataset}
	\label{sec:other_results_of_language_unlearning}
	From the following Figures, all unlearning results of \texttt{W-similar samples} are greater than those of \texttt{W/O-similar samples}. We conclude that adding similar samples prevents unlearning based on a single target sample from fully removing the target sample's influence.

	\begin{figure}[!ht]
		\centering
		\includegraphics[width=1\linewidth]{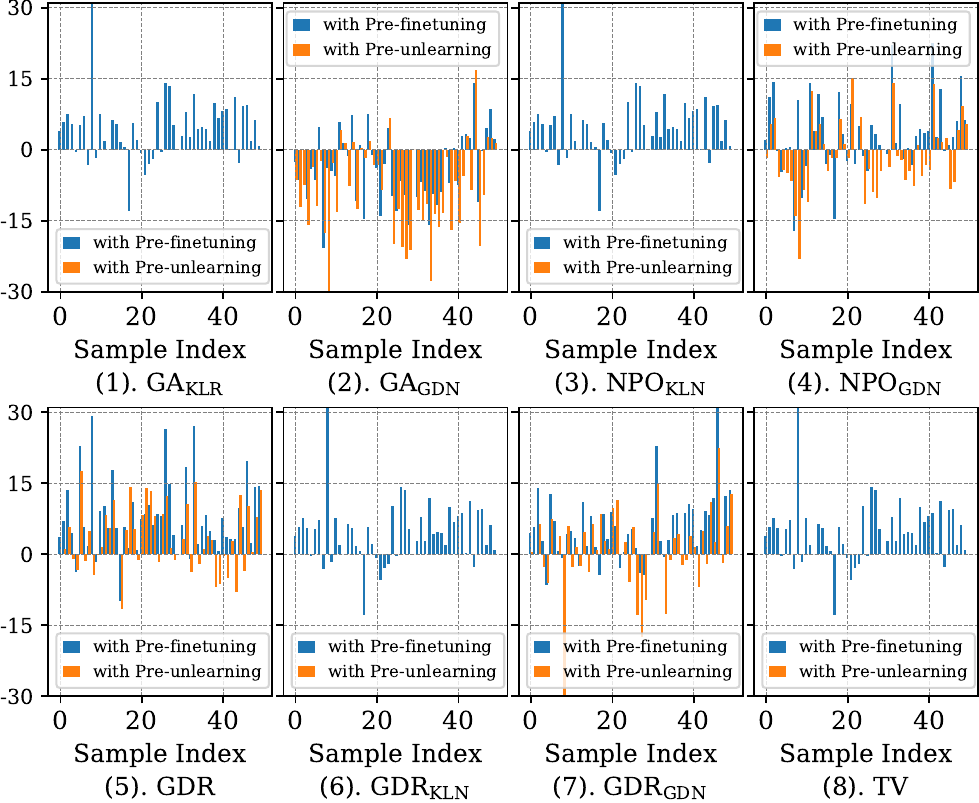}
		\caption{Comparison of verification results from the unlearned model with the results from the pre-unlearning and pre-finetuning models under \texttt{W/O-similar samples} setting for meta-llama/Llama-3.2-3B-Instruct model.}
		\label{appendix_fig:difference_with_original_and_finetuned_for_0_0_meta_llama_Llama_3_2_3B}
	\end{figure}
	
	\begin{figure}[!ht]
		\centering
		\includegraphics[width=1\linewidth]{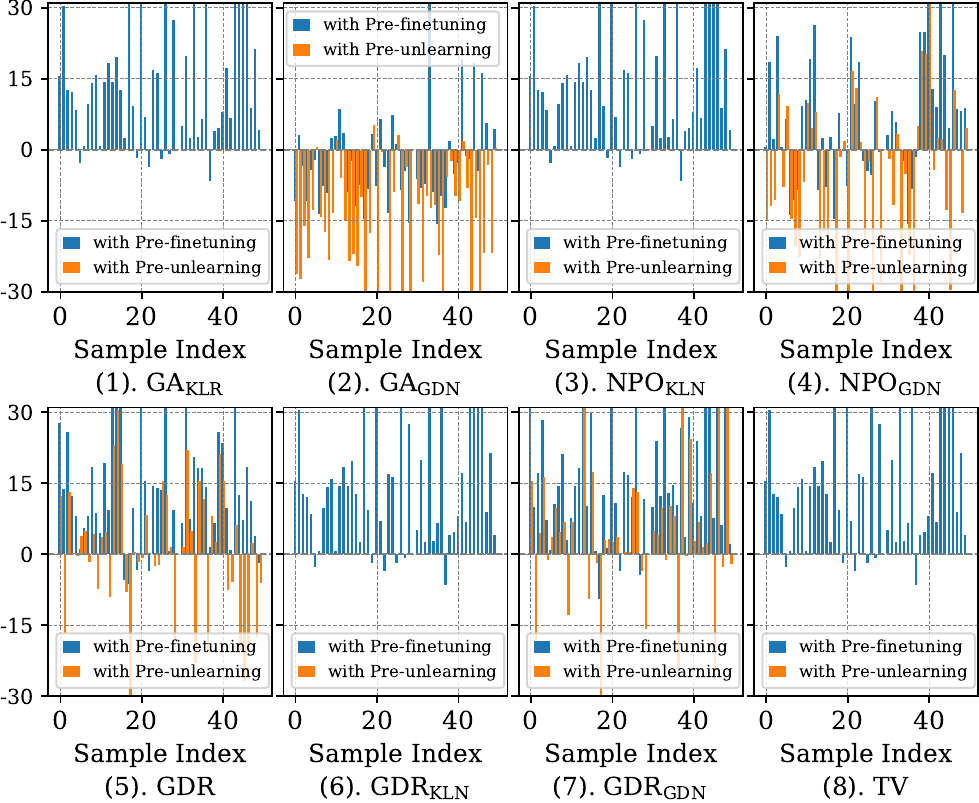}
		\caption{Comparison of verification results from the unlearned model with the results from the pre-unlearning and pre-finetuning models under \texttt{W-similar samples} setting for meta-llama/Llama-3.2-3B-Instruct model.}
		\label{appendix_fig:difference_with_original_and_finetuned_for_50_0_meta_llama_Llama_3_2_3B}
	\end{figure}
	
	\begin{figure}[!ht]
		\centering
		\includegraphics[width=1\linewidth]{Figs/languageunlearning/meta_llama_Llama_3.2_3B_Instruct_comparation.pdf}
		\caption{Comparison of verification results toward target samples for meta-llama/Llama-3.2-3B-Instruct model.}
		\label{appendix_fig:meta_llama_Llama_3_2_3B_comparation}
	\end{figure}
	
	\begin{figure}[!ht]
		\centering
		\includegraphics[width=1\linewidth]{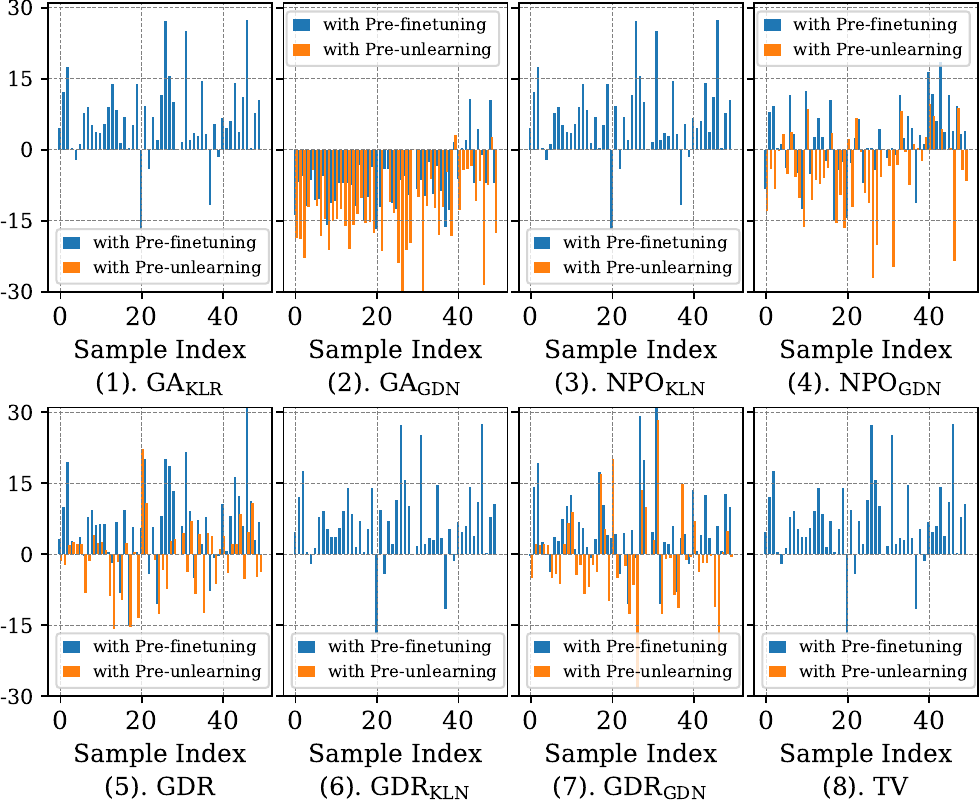}
		\caption{Comparison of verification results from the unlearned model with the results from the pre-unlearning and pre-finetuning models under \texttt{W/O-similar samples} setting for meta-llama/Llama-3.2-1B-Instruct model.}
		\label{fig:difference_with_original_and_finetuned_for_0_0_meta_llama_Llama_3_2_1B}
	\end{figure}
	
	\begin{figure}[!ht]
		\centering
		\includegraphics[width=1\linewidth]{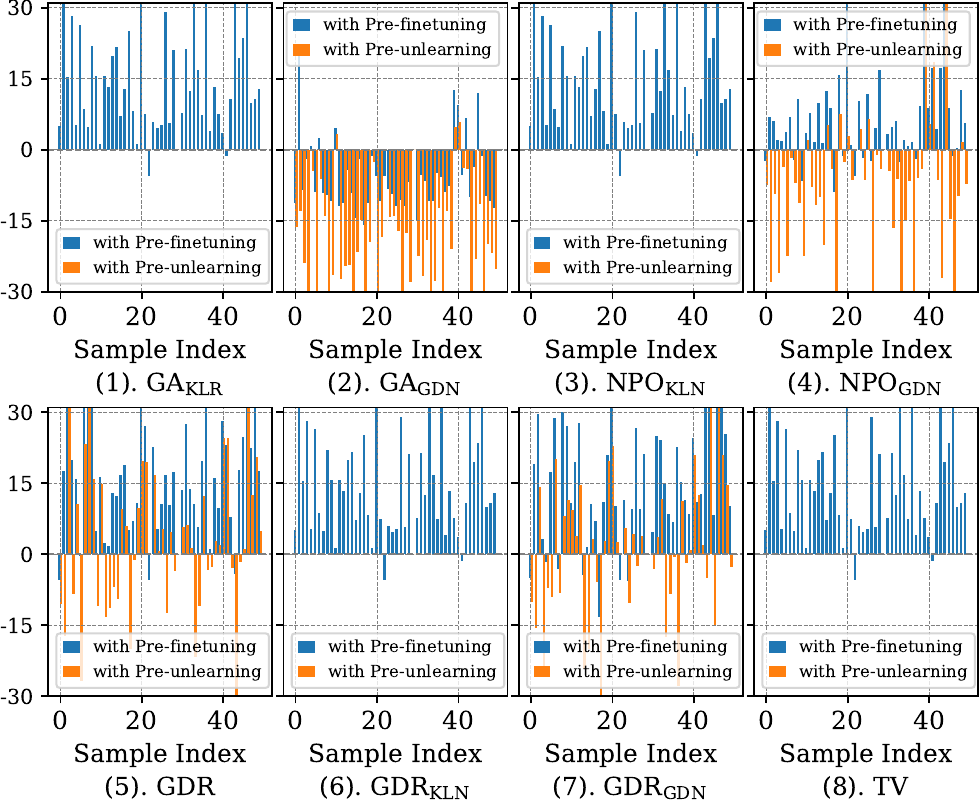}
		\caption{Comparison of verification results from the unlearned model with the results from the pre-unlearning and pre-finetuning models under \texttt{W-similar samples} setting for meta-llama/Llama-3.2-1B-Instruct model.}
		\label{fig:difference_with_original_and_finetuned_for_50_0_meta_llama_Llama_3_2_1B}
	\end{figure}
	
	\begin{figure}[!ht]
		\centering
		\includegraphics[width=1\linewidth]{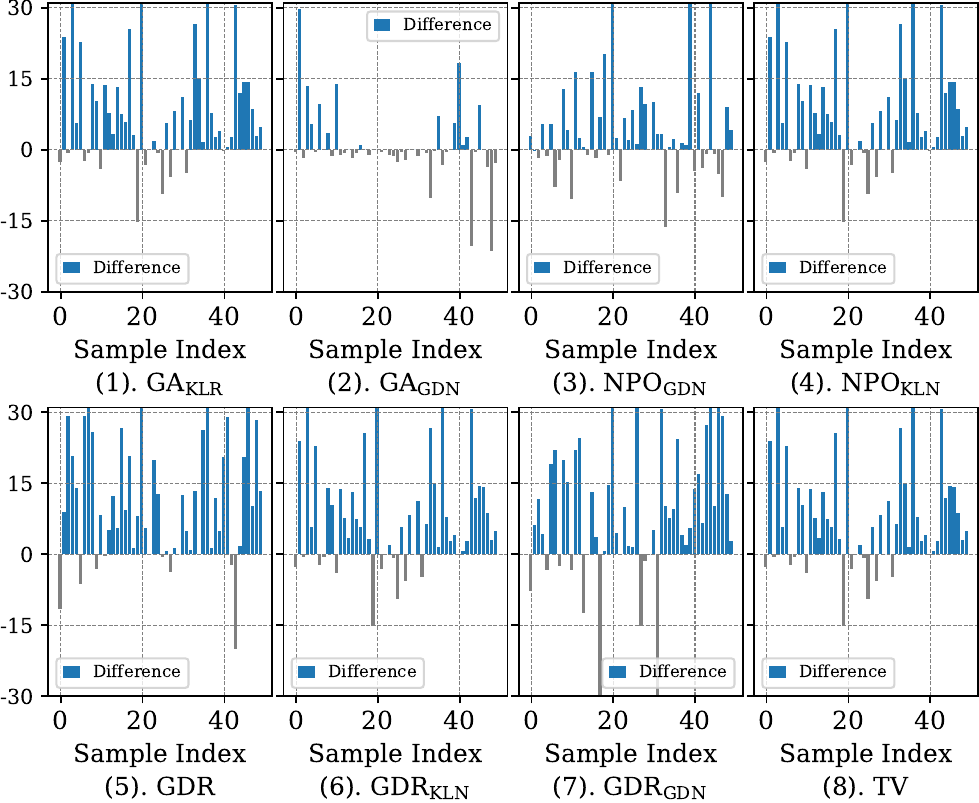}
		\caption{Comparison of verification results toward target samples for meta-llama/Llama-3.2-1B-Instruct model.}
		\label{fig:meta_llama_Llama_3_2_1B_comparation}
	\end{figure}
	
	\begin{figure}[!ht]
		\centering
		\includegraphics[width=1\linewidth]{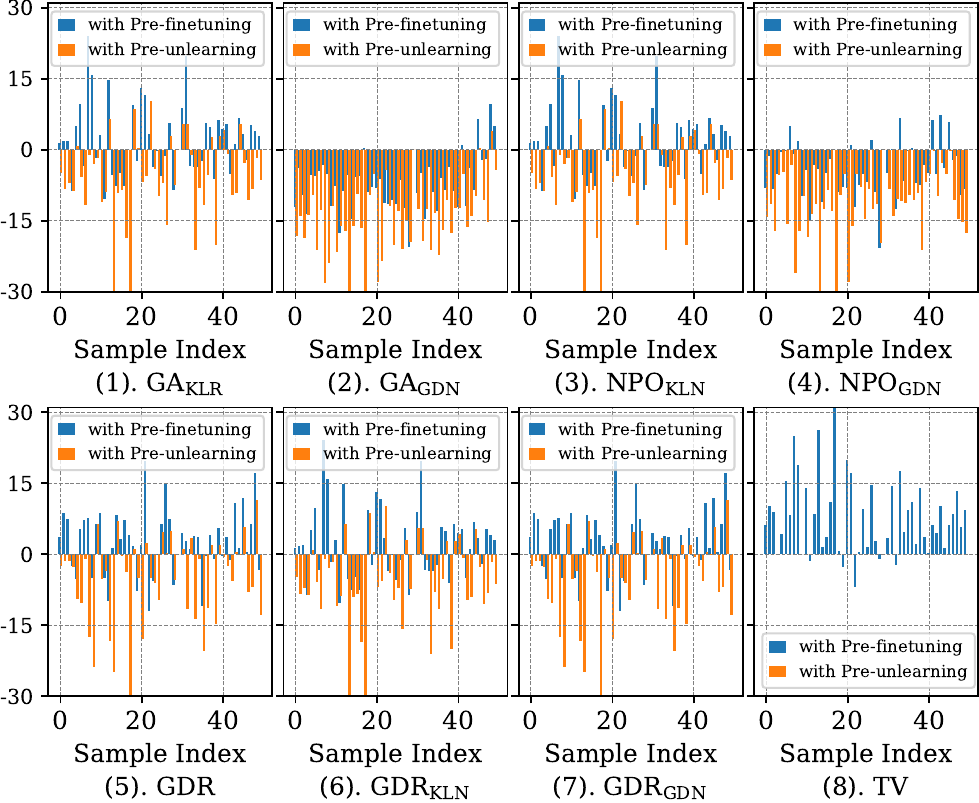}
		\caption{Comparison of verification results from the unlearned model with the results from the pre-unlearning and pre-finetuning models under \texttt{W/O-similar samples} setting for facebook/opt-1.3b model.}
		\label{fig:difference_with_original_and_finetuned_for_0_0_facebook_opt_1.3b}
	\end{figure}
	
	\begin{figure}[!ht]
		\centering
		\includegraphics[width=1\linewidth]{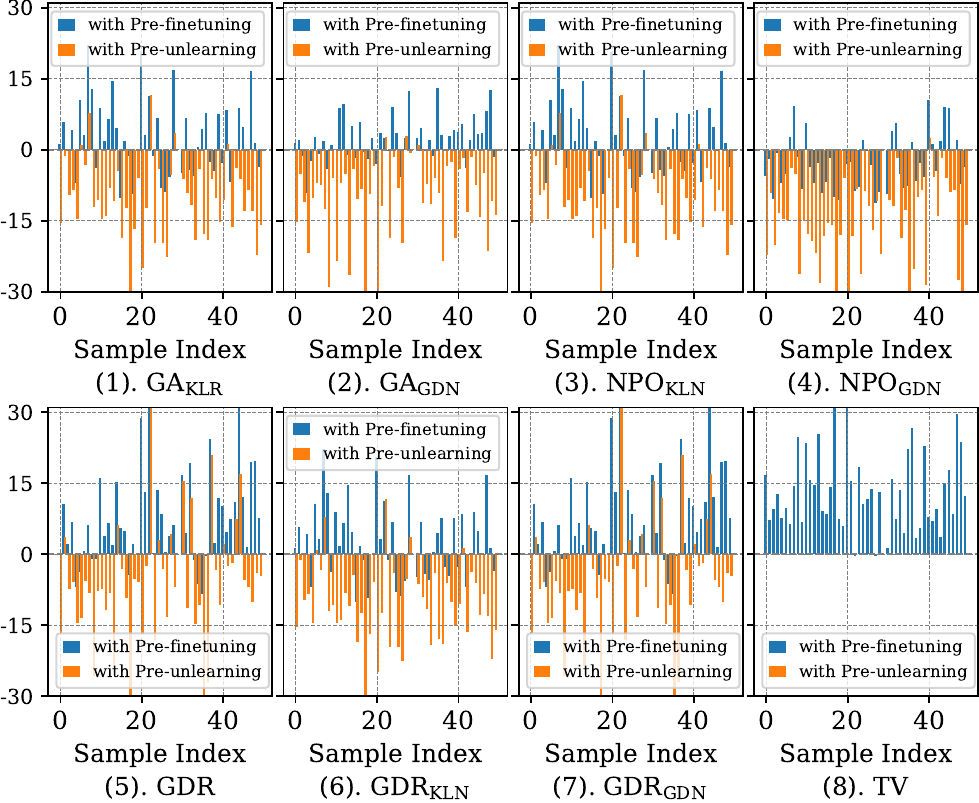}
		\caption{Comparison of verification results from the unlearned model with the results from the pre-unlearning and pre-finetuning models under \texttt{W-similar samples} setting for facebook/opt-1.3b model.}
		\label{fig:difference_with_original_and_finetuned_for_50_0_facebook_opt_1.3b}
	\end{figure}
	
	\begin{figure}[!ht]
		\centering
		\includegraphics[width=1\linewidth]{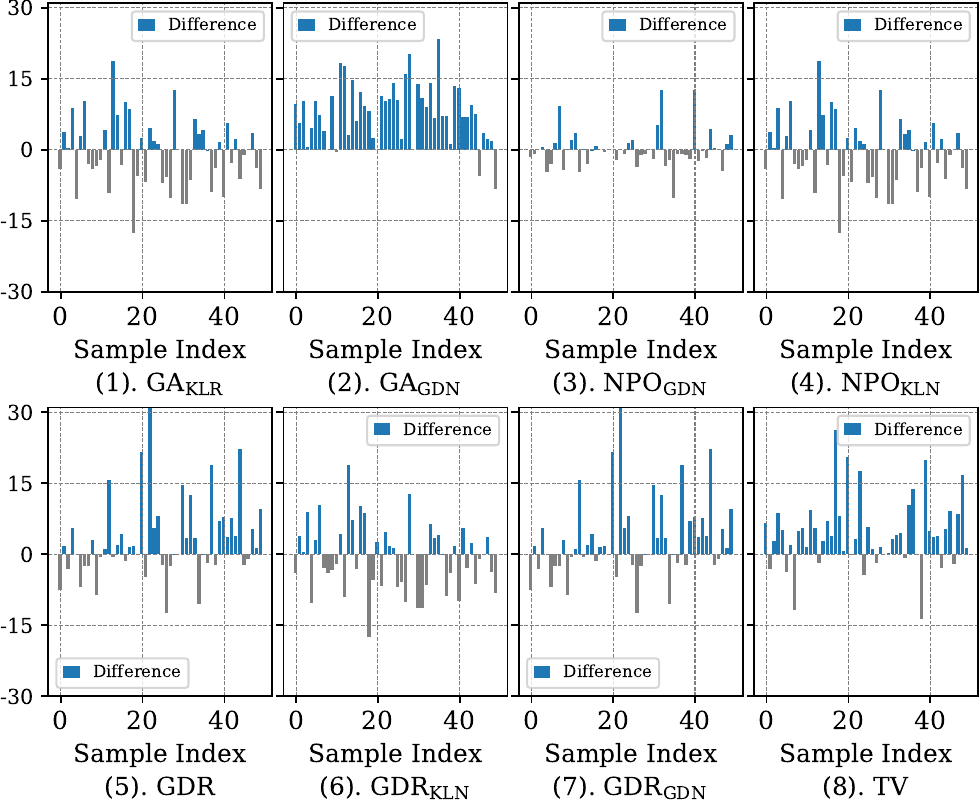}
		\caption{Comparison of verification results toward target samples for facebook/opt-1.3b model. Some sub-figures do not reflect our conclusion~(such as (1)(2)), this is due to the limited effectiveness of the unlearning schemes.}
		\label{fig:facebook_opt_1.3b_comparation}
	\end{figure}
	
	\begin{figure}[!ht]
		\centering
		\includegraphics[width=1\linewidth]{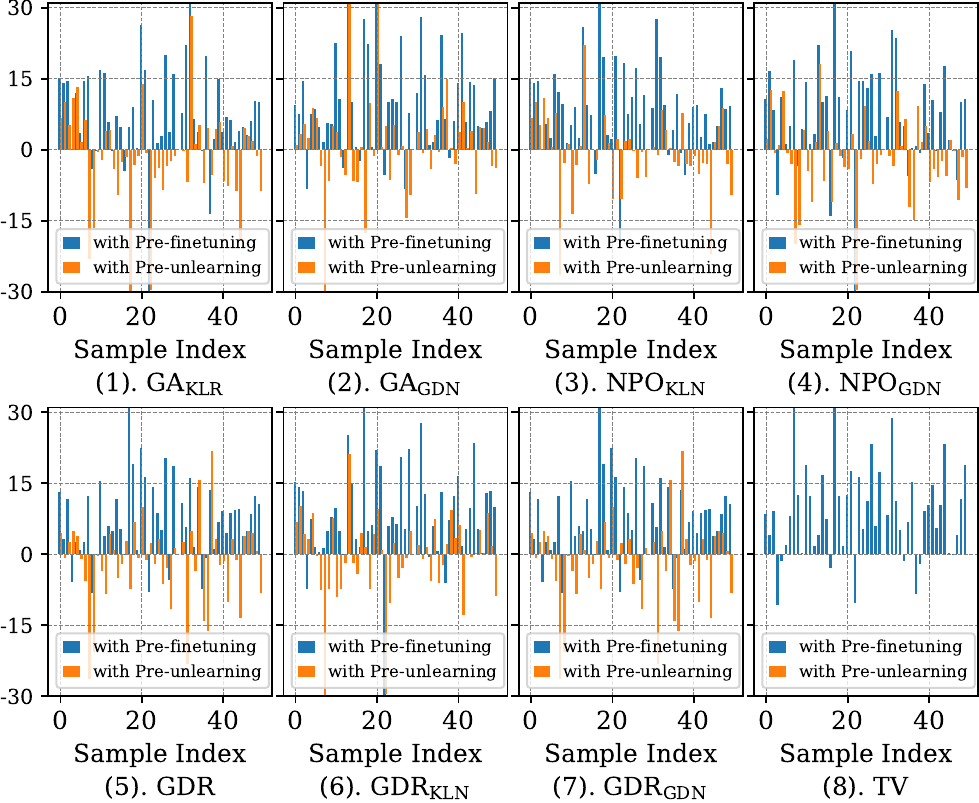}
		\caption{Comparison of verification results from the unlearned model with the results from the pre-unlearning and pre-finetuning models under \texttt{W/O-similar samples} setting for EleutherAI/gpt-neo-2.7B model.}
		\label{fig:difference_with_original_and_finetuned_for_0_0_facebook_opt_2.7b}
	\end{figure}
	
	\begin{figure}[!ht]
		\centering
		\includegraphics[width=1\linewidth]{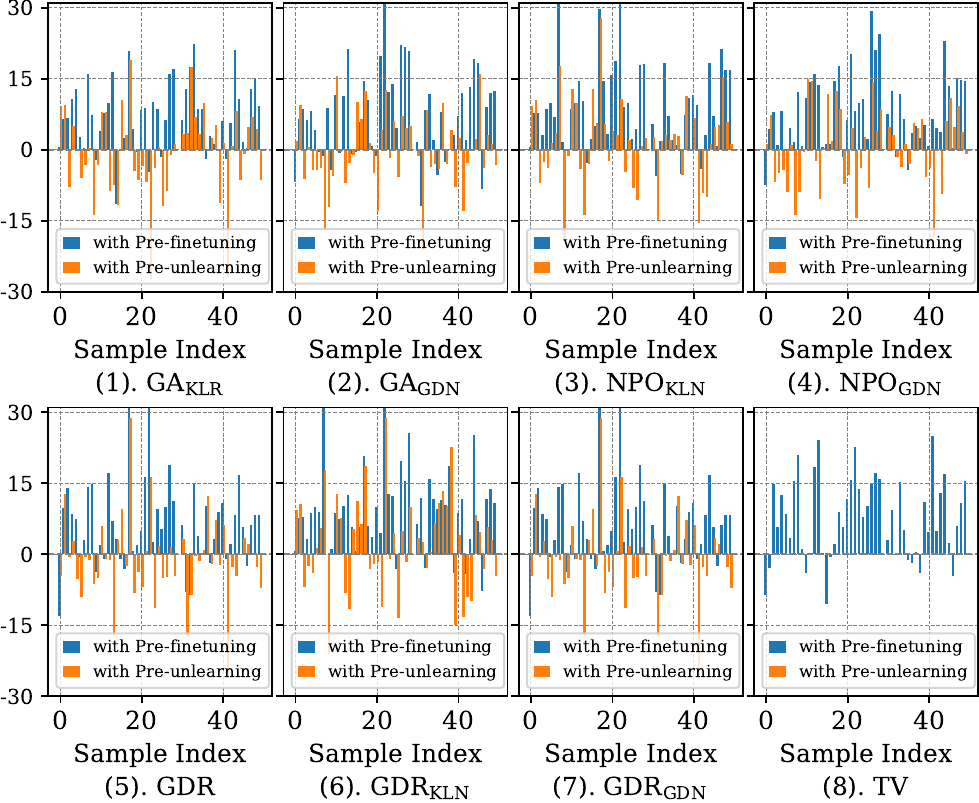}
		\caption{Comparison of verification results from the unlearned model with the results from the pre-unlearning and pre-finetuning models under \texttt{W-similar samples} setting for EleutherAI/gpt-neo-2.7B model.}
		\label{fig:difference_with_original_and_finetuned_for_50_0_facebook_opt_2.7b}
	\end{figure}
	
	\begin{figure}[!ht]
		\centering
		\includegraphics[width=1\linewidth]{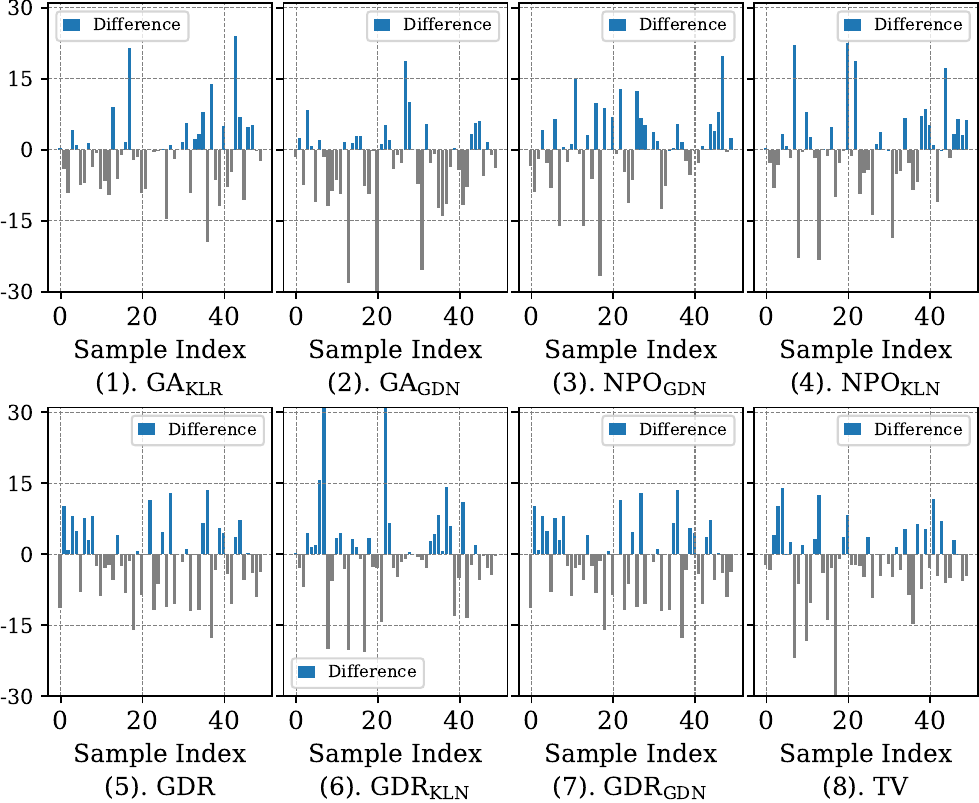}
		\caption{Comparison of verification results toward target samples for EleutherAI/gpt-neo-2.7B model.}
		\label{fig:facebook_opt_2.7b_comparation}
	\end{figure}

        \newpage
        \twocolumn
	\subsection{Other Unlearning Results Toward Similar Samples as Test Samples}
	\label{sec:other_for_influence_toward_similar_samples_as_test_samples}
	
	All unlearning results are greater than those of pre-finetuning but smaller than the results of pre-unlearning. This indicates that unlearning based only on the target sample is unlikely to generalize to other test samples similar to the target sample.
	
	\begin{figure}[!ht]
		\centering
		\includegraphics[width=1\linewidth]{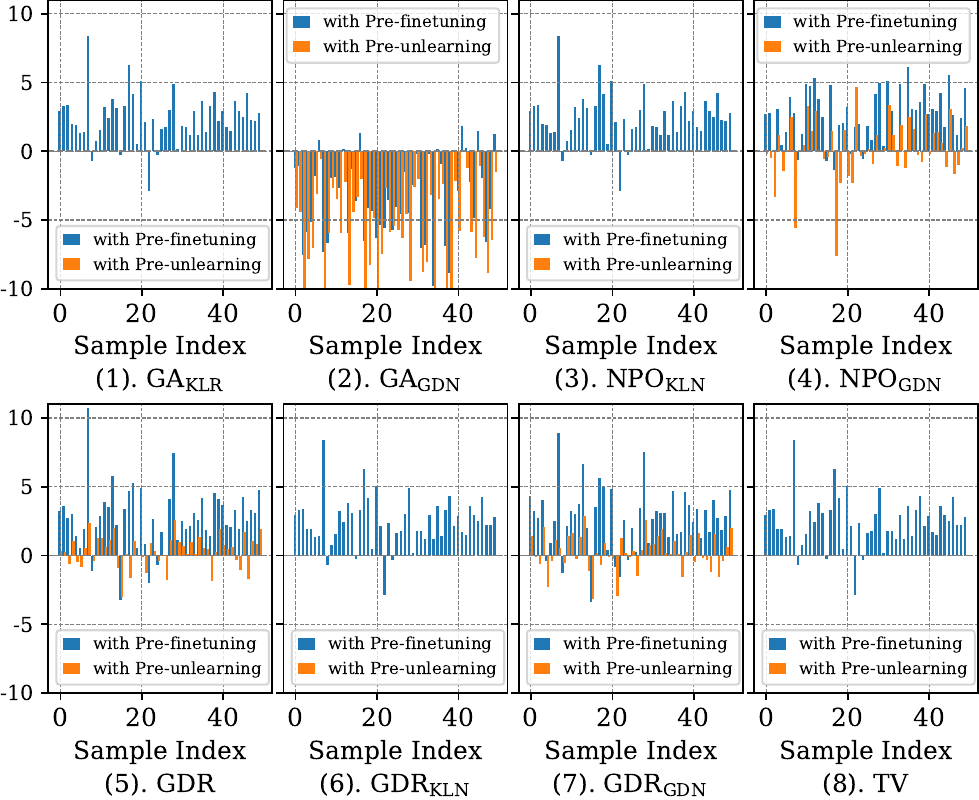}
		\caption{Comparison of verification results toward similar samples as test samples for meta-llama/Llama-3.2-1B-Instruct.}
		\label{fig:meta_llama_Llama_3.2_1B_Instruct_show_0_50}
	\end{figure}

	\begin{figure}[!ht]
		\centering
		\includegraphics[width=1\linewidth]{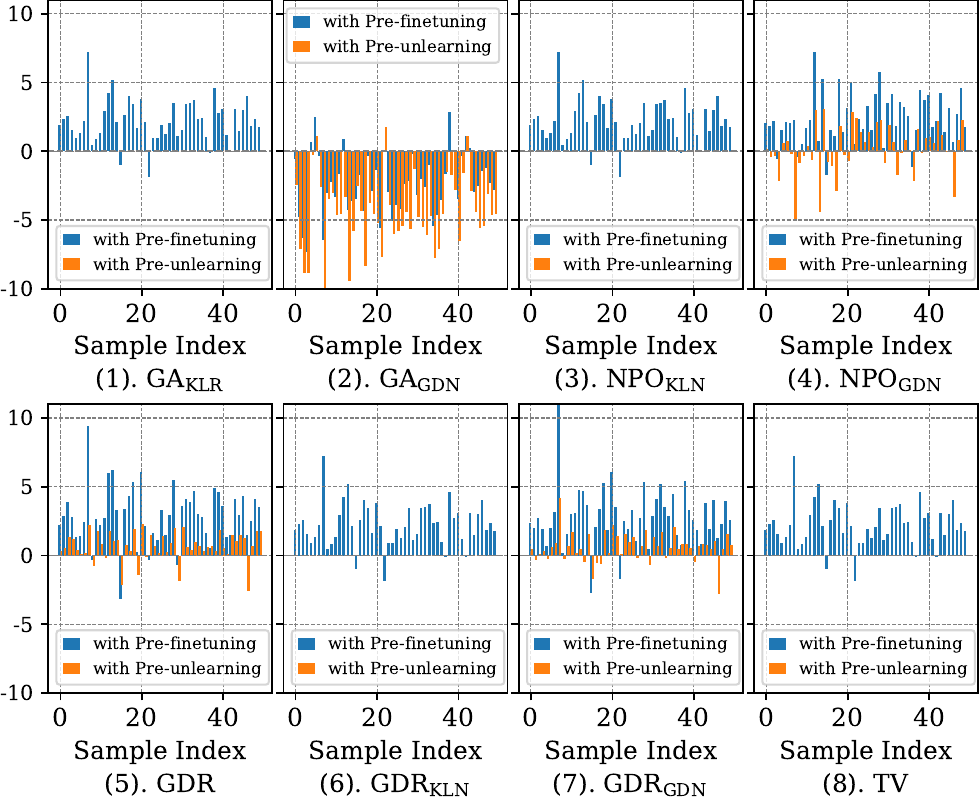}
		\caption{Comparison of verification results toward similar samples as test samples for meta-llama/Llama-3.2-3B-Instruct.}
		\label{fig:meta_llama_Llama_3.2_3B_Instruct_show_0_50}
	\end{figure}

	\begin{figure}[!ht]
		\centering
		\includegraphics[width=1\linewidth]{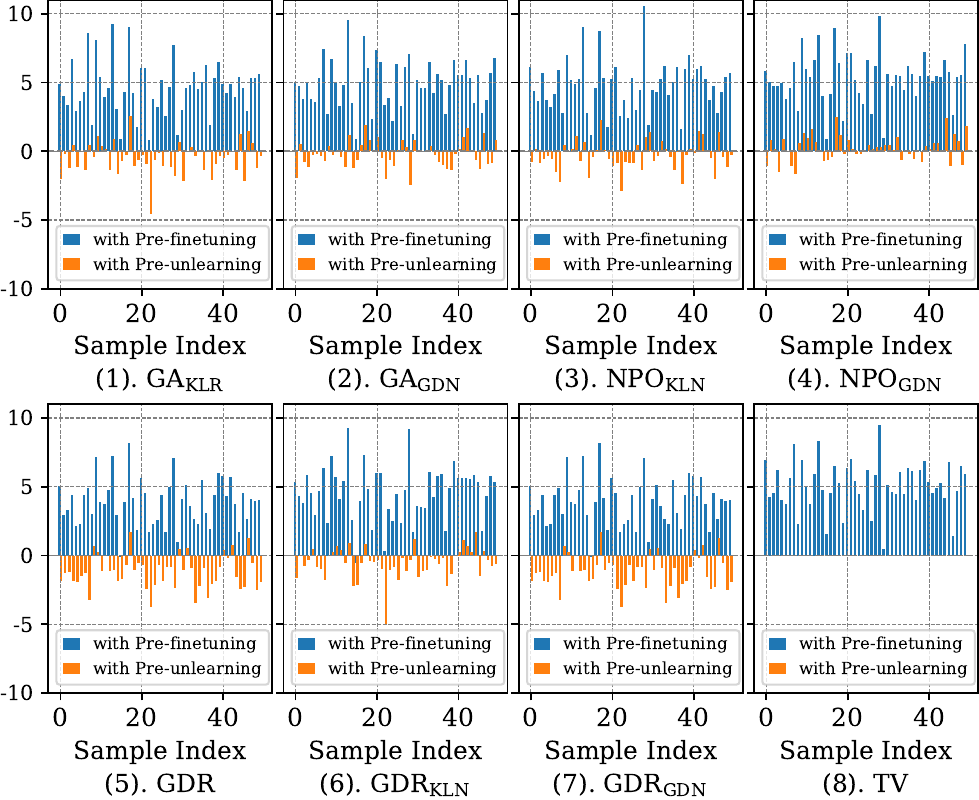}
		\caption{Comparison of verification results toward similar samples as test samples for EleutherAI/gpt-neo-2.7B.}
		\label{fig:EleutherAI_gpt_neo_2.7B_show_0_50}
	\end{figure}

\end{document}